\documentclass{article} 
\usepackage[a4paper, margin=1in]{geometry}


\usepackage{hyperref}
\usepackage{url}

\usepackage[T1]{fontenc}    
\usepackage{url}            
\usepackage{booktabs}       
\usepackage{amsfonts}       
\usepackage{nicefrac}       
\usepackage{microtype}      
\usepackage{xcolor}         

\usepackage{graphicx}
\usepackage{comment}
\usepackage{amsmath}
\usepackage{amsthm}
\usepackage{amssymb}
\usepackage{bm}

\usepackage{float}
\usepackage{subfig}
\usepackage{algorithm}
\usepackage{algpseudocode}
\usepackage{tikz}
\usepackage{comment}
\usepackage{multirow}
\usepackage{comment}
\usepackage{cleveref}
\usepackage{algorithm}
\usepackage{algpseudocode}
\usepackage{amsmath,amssymb}
\usepackage{enumitem}
\usepackage{comment} 
\usepackage{hyperref}
\usepackage{bm}

\usepackage{thmtools}
\usepackage{thm-restate}

\usepackage{amssymb}
\usepackage{amsmath}
\usepackage{amsfonts}
\usepackage{amsthm}

\usepackage[utf8]{inputenc} 
\usepackage[T1]{fontenc}    
\usepackage{url}            
\usepackage{booktabs}       
\usepackage{amsfonts}       
\usepackage{nicefrac}       
\usepackage{microtype}      
  \usepackage{mathtools}

\usepackage{tikz}
\usepackage{pgfplots}
\usetikzlibrary{pgfplots.groupplots}




\usepackage{caption}
\usepackage[bottom,hang,flushmargin]{footmisc}

\newcommand{\tsn}[1]{{\left\vert\kern-0.25ex\left\vert\kern-0.25ex\left\vert #1 
    \right\vert\kern-0.25ex\right\vert\kern-0.25ex\right\vert}}

\newenvironment{itemize*}%
{\begin{itemize}[leftmargin=*,topsep=0pt]%
		\setlength{\itemsep}{0pt}%
		\setlength{\parskip}{0pt}}%
	{\end{itemize}}
\newenvironment{enumerate*}%
{\begin{enumerate}[leftmargin=*,topsep=0pt]%
		\setlength{\itemsep}{0pt}%
		\setlength{\parskip}{0pt}}%
	{\end{enumerate}}

\allowdisplaybreaks

\newtheorem{theorem}{Theorem}[section]

\newtheorem{lemma}[theorem]{Lemma}

\newtheorem{proposition}[theorem]{Proposition}







\newtheorem{thm}{Theorem}[section]
\newtheorem{lem}{Lemma}[section]

\newtheorem{prop}{Proposition}[section]

\newtheorem{fact}{Fact}[section]

\newtheorem{rem}{Remark}[section]



\def\approxcorrect{\cmark\kern-1.4ex\raisebox{.30ex}{$\xmark$}}

\newcommand{\idxn}[1][]{\ifthenelse{\equal{#1}{}}{\mathbb{INDQ}_n}{\mathbb{INDQ}_{#1}}}



\newcommand{\beq}{\begin{equation}}

\newcommand{\eeq}{\end{equation}}

















\definecolor{emmanuel}{RGB}{255,127,0}









\def \endprf{\hfill {\vrule height6pt width6pt depth0pt}\medskip}





\author{Zihan Wang \& Arthur Jacot \\
Courant Institute of Mathematical Sciences\\
New York University\\
New York, NY 10012, USA \\
\texttt{\{zw3508,arthur.jacot\}@nyu.edu}}

%


\begin{document}
\title{Implicit bias of SGD in $L_{2}$-regularized linear DNNs:
One-way jumps from high to low rank}


\maketitle
\begin{abstract}
The $L_{2}$-regularized loss of Deep Linear Networks (DLNs) with
more than one hidden layers has multiple local minima, corresponding
to matrices with different ranks. In tasks such as matrix completion,
the goal is to converge to the local minimum with the smallest rank
that still fits the training data. While rank-underestimating minima
can be avoided since they do not fit the data, GD might get
stuck at rank-overestimating minima. We show that with SGD, there is always a probability to jump
from a higher rank minimum to a lower rank one, but the probability
of jumping back is zero. More precisely, we define a sequence of sets
$B_{1}\subset B_{2}\subset\cdots\subset B_{R}$ so that $B_{r}$
contains all minima of rank $r$ or less (and not more) that are absorbing
for small enough ridge parameters $\lambda$ and learning rates $\eta$:
SGD has prob. 0 of leaving $B_{r}$, and from any starting point there
is a non-zero prob. for SGD to go in $B_{r}$.
\end{abstract}

\section{Introduction}

Several types of algorithmic bias have been observed in DNNs for a
range of architectures \cite{gunasekar_2018_implicit_bias,soudry2018implicit,Moroshko_2020_implicit_bias_diag,Ongie_2020_repres_bounded_norm_shallow_ReLU_net}.
Understanding and characterizing these types of implicit bias is crucial
to understand the practical performances of Deep Neural Networks (DNNs).

We focus on Deep Linear Networks (DLNs) $A_{\theta}=W_{L}\cdots W_{1}$
for $\theta=(W_{1},\dots,W_{L})$, that are known to be biased towards
low-rank linear maps in a number of settings:
\begin{enumerate}
\item Adding $L_{2}$-regularization to the parameters of a DLN has the
effect of adding $L_{p}$-Schatten norm (the $L_{p}$ norm of the
singular values of a matrix) regularization to the learned matrix
for $p=\nicefrac{2}{L}$ where $L$ is the depth of the network \cite{dai_2021_repres_cost_DLN}.
\item When trained with the cross-entropy loss, Gradient Descent (GD) diverges towards infinity along direction that maximizes the margin w.r.t.
the parameter norm  \cite{Ji_2020_directional2}, leading to a form of implicit $L_{2}$-regularization
with the same bias towards low-rank matrices.
\item When the parameters are initialized with a small variance, the network
learns incrementally matrices of growing rank, thus converging to
a low-rank solution \cite{li2020towards,jacot-2021-DLN-Saddle}.
\end{enumerate}
This low-rank bias is particularly useful in the context of matrix
completion \cite{candes2009exact}, where the goal is to recover a
matrix from a subset of its entries under the assumption that the
full matrix is low rank. The task of finding the lowest rank matrices
fitting the observed entries is NP-hard, but convex approximations
can work well \cite{candes2009exact,candes_2010_convex_MC1}, as well
as DLNs \cite{keshavan_2010_MC_special_matrix_factorization,sun_2016_MC_shallow_DLN_L2_reg}.

In the deep case $L>2$, the $L_{p}$-Schatten norm becomes non-convex
(because $p=\nicefrac{2}{L}<1$) and there are multiple local minima
in the $L_{2}$-regularized loss, each corresponding to matrices with different ranks (similarly with cross-entropy there could
be multiple directions that locally minimize the rank). Which of these
local minima will GD converge to?

We will see how Stochastic Gradient Descent (SGD) can lead the dynamics to jump between local minima
with a bias towards low-rank minima.

\subsection{Contributions}

In this paper, we focus on the implicit bias of SGD in Deep Linear
Networks (DLNs) of depth
$L$ larger than $2$ with $L_{2}$-regularization, when trained on Matrix Completion (MC) tasks.

We first describe the many critical points of the $L_{2}$-regularized
loss, showing that for small enough ridge, all critical points that
are not local minima are avoided almost surely. We then split the
local minima into three groups, depending on whether they recover
the `true rank', underestimate, or overestimate it.

We show that the rank-underestimating minima can easily be avoided by taking a small enough
ridge $\lambda$, but no such strategy exists to avoid rank-overestimating minima
with GD.

However we show SGD has a small but non-zero
probability of jumping from any minimum to a lower rank minimum, but
the probability of jumping to a higher rank minimum is zero. More
precisely, we define sets $B_{r}$ that
contain all minima of rank $r$ or less and show that they are absorbing:
the probability for SGD to leave this set is zero, but the probability
for SGD to move from outside of this set to inside (in sufficiently
many steps) is non-zero.

This suggests that rank-overestimation can be avoided if we continue
SGD training long enough (but not too long), since the rank will decrease
incrementally. This illustrates the low-rank bias of SGD.

\subsection{Related Works}

The low-rank bias of DLNs has been observed in a number of different
settings: for example as a result of $L_{2}$-regularization or training
with the cross-entropy loss \cite{dai_2021_repres_cost_DLN}, and
as a result of small initializations \cite{arora_2018_depth_speed_Lp,arora_2019_matrix_factorization,li2020towards,jacot-2021-DLN-Saddle}.
These results rely on similar tools such as the balancedness condition,
however the underlying training dynamics leading to sparsity are very
distinct.

Motivated by the empirical observation that SGD improves generalization
\cite{init_lecun2012,Keskar_2016_generalization_SGD}, there has been
interest in the implicit bias of SGD. There is a line of work approximating
SGD with different Stochastic Differential Equations (SDEs) \cite{mandt_2017_SGD_bayesian_inference,SGD_Smith2018,jastrzkebski_2017_SGD_Langevin,chaudhari_2018_SGD_potential},
sometimes approximating the parameter dependent noise covariance with
a fixed scalar multiple of the identity thus leading to Langevin dynamics
\cite{jastrzkebski_2017_SGD_Langevin}, and in general studying the
resulting steady-state distributions \cite{chaudhari_2018_SGD_potential}.
These SDE approximations require small learning rates \cite{li_2017_SGD_SDE_approx},
but approximations to capture the effect of large learning rates have been proposed too \cite{li_2017_SGD_SDE_approx,smith_2021_SGD_bias}.

These works however focus on the bias of SGD in parameter space, showing
e.g. that it can be interpreted as changing the potential/loss \cite{chaudhari_2018_SGD_potential},
or adding a regularization term \cite{smith_2021_SGD_bias}. More
recent work has focused on the bias of SGD in diagonal linear networks
\cite{Pesme_2021_SGD_bias_diag_nets,vivien_2022_label_noise_sparsity}
leading to a sparsity effect in the vector represented by this network.

We focus on the effect of SGD in the context of deep fully-connected
linear networks with $L_{2}$-regularization, showing that SGD strengthens
the already existing low-rank bias induced by $L_{2}$-regularization.
To our knowledge our work is also unique in that it does not rely
on a SDE/continuous approximation.

\section{Setup}

We study Deep Linear Networks (DLNs) of depth $L$ and widths $w_{0}=d_{in},w_{1},\dots,w_{L}=d_{out}$
\[
A_{\theta}=W_{L}\cdots W_{1},
\]
for the $w_{\ell}\times w_{\ell-1}$ weight matrices $W_{\ell}$ and
parameters $\theta=(W_{1},\dots,W_{L})$. We will always assume that
the number of neurons in the hidden layers is sufficiently large $w_{\ell}\geq\min\{d_{in},d_{out}\}$
so that any $d_{out}\times d_{in}$ matrix $A$ can be recovered for
some parameters $\theta$: $A=A_{\theta}$.

\subsection{Matrix Completion}

We consider the $L_{2}$-regularized loss
\[
\mathcal{L}_{\lambda}(\theta)=C(A_{\theta})+\lambda\left\Vert \theta\right\Vert ^{2}
\]
where $C$ is a loss on matrices such as the Matrix Completion (MC)
loss
\[
C(A)=\frac{1}{2N}\sum_{(i,j)\in I}\left(A_{ij}^{*}-A_{\theta,ij}\right)^{2}
\]
where $A^{*}$ is the true matrix we want to recover and $I\subset\{1,\dots,d_{out}\}\times\{1,\dots,d_{in}\}$
is the set of observed entries of $A^{*}$ of size $N=\left|I\right|$.
While it is not possible in general to recover an entire matrix $A^{*}$
from a subset of its entries, it is possible if $A^{*}$ is assumed
to be low rank.

The ideal goal is to find the matrix $\hat{A}$ with lowest rank that
matches the observed entries. We define the smallest rank as the smallest integer $r^*$ such that $\inf_{A : \mathrm{Rank} A\leq r^*} C(A)=0$. Note that one could also define $r^*$ to be the smallest integer where this infimum is a attained at a finite matrix $A$, which can be higher in MC problems where filling in infinitely large entries can allow for lower ranks fitting functions. In the main we restrict ourselves to the first definition, but we discuss the second choice and its implications in the appendix.

Since finding the minimal rank solution is NP-hard in general \cite{candes2009exact}, a popular approximation is to find the matrix $\hat{A}$ that minimizes
the MC loss with a nuclear norm regularization
\[
\min_{A}\frac{1}{2N}\sum_{(i,j)\in I}\left(A_{ij}^{*}-A_{ij}\right)^{2}+\lambda\left\Vert A\right\Vert _{*},
\]
where the nuclear norm is the sum of the singular values of $A$:
$\left\Vert A\right\Vert _{*}=\sum_{i=1}^{\mathrm{Rank}A}s_{i}(A)$.
This loss is convex and can be efficiently minimized, and it has been
shown that it recovers the true matrix $A^{*}$ with high probability
with an almost optimal number of observations \cite{candes2009exact,candes_2010_convex_MC1}.

DLNs have also been used effectively in Matrix Completion, thanks
to their implicit low-rank bias. The importance of low-rank bias in
the Matrix Completion setting, makes it ideal to study the implicit
bias of SGD in DLNs.

\subsection{Representation Cost}

The low-rank bias of $L_{2}$-regularized DLNs can be understood in
terms of the representation cost $R(A;L)$ of DLNs, which equals the
minimal parameter norm required to represent a matrix $A$ with a
DLN of depth $L$:
\[
R(A;L)=\min_{\theta:A=A_{\theta}}\left\Vert \theta\right\Vert ^{2}.
\]

As observed in \cite{dai_2021_repres_cost_DLN}, the representation
cost of DLNs equals the $L_{p}$-Schatten norm $\left\Vert A\right\Vert _{p}^{p}$
(the $L_{p}$ norm of the singular matrices) of $A$ for $p=\nicefrac{2}{L}$:
\[
R(A;L)=L\left\Vert A\right\Vert _{\nicefrac{2}{L}}^{\nicefrac{2}{L}}:=L\sum_{i=1}^{\mathrm{Rank}A}s_{i}(A)^{\nicefrac{2}{L}}.
\]
This implies that the $L_{2}$-norm regularization in parameter space
can be interpreted as adding a $L_{p}$-Schatten norm regularization
in matrix space:
\[
\min_{\theta}C(A_{\theta})+\lambda\left\Vert \theta\right\Vert ^{2}=\min_{A}C(A)+\lambda L\left\Vert A\right\Vert _{\nicefrac{2}{L}}^{\nicefrac{2}{L}}.
\]

For shallow networks ($L=2$), the representation cost equals the
nuclear norm $R(A;2)=2\left\Vert A\right\Vert _{*}$. The loss has
only global minima and strict saddles, thus guaranteeing convergence
with probability 1 to global minimizers $\hat{\theta}$ of the LHS,
and the represented matrix $A_{\hat{\theta}}$ then minimizes the
RHS. We therefore simply recover the convex relaxation of Matrix Completion,
with the advantage that the loss $\mathcal{L}_{\lambda}(\theta)$
is differentiable everywhere, so that it can be optimized with vanilla
GD \cite{sun_2016_MC_shallow_DLN_L2_reg}.

In the deep ($L>2$) case however, the representation cost $R(A;L)=L\left\Vert A\right\Vert _{\nicefrac{2}{L}}^{\nicefrac{2}{L}}$
is non-convex, and both RHS and LHS may have distinct local minima
with varying rank. We will describe these local minima and show that
for small enough ridge $\lambda$, all other critical points are strict
saddles or strict minima. This implies that GD initialized
at a random point will almost surely converges to a local minimum
\cite{lee_2019_strict_saddle}. It only remains to understand to which
local minimum GD converges to.

There are multiple local minima with different ranks, for example
the zero parameters $\theta=0$ corresponding to the zero matrix $A_{\theta}=0$
is always a local minimum. On the other hand, there might be local
minima that overestimate the `true rank' that we want to recover.

For GD with a Gaussian initialization, there is a non-zero
probability to converge to any local minimum. On the other hand, we
will see how SGD can jump from local
minima to local minima.

\subsection{Stochastic Gradient Descent}

We consider SGD with replacement, that is at each time step $t$ an
index $(i_{t},j_{t})$ is sampled uniformly from the index set $I$,
independently from the previous iterations. The parameters are then
updated according to the learning rate $\eta$
\[
\theta_{t+1}=(1-2\eta\lambda)\theta_{t}-\frac{\eta}{2}\nabla_{\theta}\left(A_{i_{t}j_{t}}^{*}-A_{\theta_{t},i_{t}j_{t}}\right)^{2}.
\]
Note that due to the $L_{2}$-regularization there remains noise even
at the local minimizers, in contrast without $L_{2}$-regularization
there is neither noise nor drift at the global minima of the loss.
Thus with $L_2$-regularization the dynamics never completely stop,
making it possible for SGD to jump from one local minimum to another.
\begin{rem}
A number of previous works have approximated SGD by GD with Gaussian
noise, the simplest of which is to approximate SGD by Langevin dynamics.
Under this approximation, there is always a likelihood of jumping
from local minimum to any other local minimum, with a higher likelihood
of going to (and staying at) local minima with lower loss. Our theoretical
results show a completely different behavior, where SGD may have non-zero
probability of jumping from one local minimum to another, but zero
likelihood of jumping back. Furthermore the likelihood of SGD visiting
a certain local minimum will not scale with the loss of that local
minimum, but rather its rank. This further shows that the Langevin
approximation of SGD is inadequate.
\end{rem}

\section{Main Results}

We will first give a description of the loss landscape of $L_{2}$-regularized
DLNs and then state our main result, which says that SGD has a non-zero
probability of jumping from a local minimum to a lower rank minimum,
and that once in the neighborhood of a low rank minimum, the probability
of reaching a higher rank minimum is zero.

\subsection{$L_{2}$-regularized Loss Landscape}

The correspondence of the minimizers of $\mathcal{L}_{\lambda}(\theta)$
and $C_{\lambda}(A):=C(A)+\lambda L\left\Vert A\right\Vert _{\nicefrac{2}{L}}^{\nicefrac{2}{L}}$
extends to their local minima, and for small enough ridge $\lambda$,
all other critical points are strict saddles/maxima:
\begin{thm}
If $\hat{\theta}$ is a local minimum of $\mathcal{L}_{\lambda}(\theta)$,
then $A_{\hat{\theta}}$ is a local minimum of $C_{\lambda}(A)$.
Conversely, if $\hat{A}$ is a local minimum of $C_{\lambda}(A)$
then there is a local minimum $\hat{\theta}$ of $\mathcal{L}_{\lambda}(\theta)$
such that $\hat{A}=A_{\hat{\theta}}$.

Furthermore, for $\lambda$ small enough, all other critical points
$\hat{\theta}$ of $\mathcal{L}_{\lambda}(\theta)$ are strict saddles
or local maxima, in the sense that the Hessian $\mathcal{H}\mathcal{L}_\lambda(\hat{\theta})$
has a strictly negative eigenvalue.
\end{thm}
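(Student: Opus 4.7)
My approach would leverage the correspondence $\min_{\theta : A_\theta = A} \|\theta\|^2 = L\|A\|_{\nicefrac{2}{L}}^{\nicefrac{2}{L}}$ together with the observation that critical points of $\mathcal{L}_\lambda$ are automatically balanced. First I would derive the first-order conditions: setting $\nabla_{W_\ell} \mathcal{L}_\lambda = 0$ gives $2\lambda W_\ell = -W_{\ell+1}^\top \cdots W_L^\top \nabla_A C(\hat A)\, W_1^\top \cdots W_{\ell-1}^\top$ for each layer. Multiplying the $W_\ell$ equation by $W_\ell^\top$ on the left and the $W_{\ell-1}$ equation by $W_{\ell-1}^\top$ on the right yields the same matrix, giving the balancedness identity $W_\ell^\top W_\ell = W_{\ell-1} W_{\ell-1}^\top$. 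Since balanced representations achieve the min-norm, every critical point satisfies $\mathcal{L}_\lambda(\hat\theta) = C_\lambda(A_{\hat\theta})$.

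For the forward direction (Part 1), I would show that for $A$ close to $\hat A := A_{\hat\theta}$ one can construct a smooth local lift $\theta(A)$ through $\hat\theta$ that remains balanced (via the implicit function theorem applied to the SVD parametrization of balanced nets), so $\mathcal{L}_\lambda(\theta(A)) = C_\lambda(A)$; local minimality of $\mathcal{L}_\lambda$ at $\hat\theta$ then transfers to $C_\lambda(A) \geq C_\lambda(\hat A)$. For the reverse direction (Part 2), I would pick any balanced lift $\hat\theta$ of the local min $\hat A$ and apply the sandwich
\[
\mathcal{L}_\lambda(\theta) = C(A_\theta) + \lambda \|\theta\|^2 \geq C(A_\theta) + \lambda L \|A_\theta\|_{\nicefrac{2}{L}}^{\nicefrac{2}{L}} = C_\lambda(A_\theta) \geq C_\lambda(\hat A) = \mathcal{L}_\lambda(\hat\theta)
\]
for $\theta$ near $\hat\theta$, which is immediate from the representation-cost bound together with local minimality of $\hat A$.

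For the strict-saddle claim (the main new content), I would start from a critical point $\hat\theta$ that is not a local min, use Parts 1--2 to conclude that $\hat A$ is not a local min of $C_\lambda$, and identify a matrix-space direction $\Delta A$ along which $C_\lambda$ strictly decreases at second order. I would lift $\Delta A$ to a parameter-space perturbation $\Delta\theta$ using a right-inverse of the Jacobian of $\theta \mapsto A_\theta$, and expand $\mathcal{L}_\lambda(\hat\theta + t\Delta\theta)$ to second order. The parameter-space Hessian decomposes as $J_\theta^\top \mathcal{H}C(\hat A) J_\theta$ plus a curvature correction coming from the second derivative of $\theta \mapsto A_\theta$ plus $2\lambda I$; combined with the gradient equation at $\hat\theta$, the first two pieces reproduce the relevant Hessian of $C_\lambda$ along $\Delta A$, and the small-$\lambda$ hypothesis is what forces this negative contribution to dominate the positive shift from regularization.

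The main obstacle will be the rank-deficient case: when $\hat A$ has rank $r < \min(d_{\text{in}}, d_{\text{out}})$, the Schatten penalty $\|\cdot\|_{\nicefrac{2}{L}}^{\nicefrac{2}{L}}$ is non-smooth at $\hat A$, and ``not a local min of $C_\lambda$'' may only be witnessed by perturbations that strictly increase the rank. Such perturbations correspond in parameter space to activating directions in the kernels of the weight matrices, and one must expand $\mathcal{L}_\lambda$ carefully to second order along these rank-opening directions, tracking how the balancedness constraint makes the singular values of $\hat\theta$ scale with $\lambda$. Making the resulting quadratic form strictly negative, uniformly for $\lambda$ below an explicit threshold depending only on $C$ and $A^*$, is the most delicate step of the argument.
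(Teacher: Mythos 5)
Your treatment of the two equivalence directions is sound and essentially matches the paper: the ``sandwich'' $\mathcal{L}_\lambda(\theta) \geq C_\lambda(A_\theta) \geq C_\lambda(\hat A) = \mathcal{L}_\lambda(\hat\theta)$ proves one direction, and the explicit balanced lift (the paper uses an SVD-based construction rather than IFT, but the content is the same) proves the other.

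The strict-saddle claim is where your plan has a genuine gap. You propose to \emph{start} from a critical point $\hat\theta$ that is not a local minimum, deduce $\hat A$ is not a local minimum of $C_\lambda$, and then ``identify a matrix-space direction $\Delta A$ along which $C_\lambda$ strictly decreases at second order.'' But failing to be a local minimum does not by itself yield a direction of strict second-order decrease; a critical point can be degenerate, with a PSD Hessian and only higher-order descent directions. What actually needs to be proven is precisely that such degenerate critical points do not exist for small $\lambda$, so this step is circular as stated. The paper sidesteps the issue with a different structure: it tracks a continuous path of critical points $\theta_\lambda$ as $\lambda\searrow 0$ and establishes a clean dichotomy --- if $\mathrm{Rank}\,A_{\theta_\lambda}$ drops in the limit $\lambda\to 0$, then $\theta_\lambda$ is a strict saddle for small $\lambda$; if the rank is preserved, $\theta_\lambda$ is a local minimum. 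Both halves must be proved; your plan only engages with the first. The key technical estimate driving the strict-saddle half is that the first-order conditions force the vanishing singular value to scale as $S_{ii}(\lambda)\sim\lambda^{L/(L-2)}$, so that along the direction $dW_\ell = U_{\ell,i}U_{\ell-1,i}^T$ the cost-Hessian contribution is $O\bigl(\lambda^{2(L-1)/(L-2)}\bigr)$ while the regularization-plus-mixed-derivative contribution is $-2\lambda L(L-2)$; the latter dominates for $L>2$. This is the ``delicate step'' you flag but do not resolve.

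Relatedly, your geometric intuition is slightly off: you describe the descent direction as ``activating directions in the kernels'' to strictly \emph{increase} rank, but at the critical point $\theta_\lambda$ (for $\lambda>0$) the offending singular value $S_{ii}(\lambda)$ is small but already \emph{nonzero}; the saddle direction perturbs this existing small singular value rather than opening a genuinely new one. The rank drop only occurs in the $\lambda\to 0$ limit, not at the critical point itself. Fixing these two points --- replacing ``not-a-local-min $\Rightarrow$ second-order descent'' with the explicit rank dichotomy along the $\lambda\searrow 0$ path, and deriving the $\lambda^{L/(L-2)}$ scaling --- is what is needed to complete the argument.
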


We can therefore focus on the local minima, since any other critical
point will be avoided with probability 1 \cite{lee_2019_strict_saddle}.

The critical points of the $L_{2}$-regularized loss are \emph{balanced},
i.e. $W_{\ell}^{T}W_{\ell}=W_{\ell-1}W_{\ell-1}^{T}$ for all $\ell=1,\dots,L-1$ (see Appendix).
This implies that all weight matrices have the same singular
values and the same rank $r$. We may therefore define the rank
of a critical point or minimum $\hat{\theta}$ as the rank $r$ of
any weight matrix $W_{\ell}$ which also matches the rank of the represented
matrix $A_{\hat{\theta}}$.

In general, there are several distinct local minima with different
ranks. The origin $\theta=0$ is always a local minimum, furthermore
for small enough ridge $\lambda$, there always is a local minimum
that finds the minimal rank required to fit the observed entries:
\begin{prop}
Consider a matrix completion problem with true matrix $A^{*}$ and
observed entries $I$. As $\lambda\searrow0$, there is a continuous
path of rank $r^*$ local minima $\theta(\lambda)$ of $\mathcal{L}_{\lambda}(\theta)$
such that $\lim_{\lambda\searrow0} C(A_{\theta(\lambda)})=0$.
\end{prop}

Note that finding a fitting matrix of minimal rank is known to be
a NP-hard problem in general \cite{candes2009exact}, which means
that it should in general be hard to find this local minimum. There are two types of problematic local minima:

\textbf{Rank-underestimating minima:} these are local minima such as
the origin $\theta=0$ with a rank lower than the minimal rank $r^{*}$,
so that the represented matrix $A_{\theta}$ cannot fit the observed
entries. These minima can be avoided with a small enough ridge
$\lambda$:
\begin{prop}
Given an initialization $\theta_{0}$ such that unregularized ($\lambda=0$)
gradient flow (GF) converges to a global minimum $\theta_{\infty}$
then for $\lambda$ small enough, regularized GF converges to a minimum
that with rank no smaller than $r^*$.
\end{prop}
This suggests that only the rank-overestimating minima are hard to avoid.

\textbf{Rank-overestimating minima:} these have a larger rank than $r^{*}$
and the represented matrix $A_{\theta}$ fits the observed entries
(with a small $O(\lambda)$ error). These are harder to avoid, suggesting
that the NP-hardness of finding an optimal rank $r^{*}$ fitting matrix
can be related to avoiding these minima. It might happen that there are no rank-overestimating minima, in which case GD can recover the minimal rank solution easily, but from now on we will focus on settings where these rank-overestimating minima appear and how SGD manages to avoid them.

\subsection{One-way Jumps from High to Low Rank}

We now show how SGD helps avoiding rank-overestimating local
minima. More precisely we show under conditions on the learning rate
$\eta$ and ridge $\lambda$ that there is always a (small) likelihood
of jumping from a local minimum to a local minimum of lower rank,
but the probability of jumping to a local minimum of higher rank is
zero. This suggests a strategy: train the network with a small ridge
to guarantee convergence to a minimum of at least the right rank,
and then take advantage of the SGD noise to find minima of
lower rank until finding the right rank.

For our analysis, we define a family of regions $B_{r}\subset\mathbb{R}^{P}$
of parameters $\theta$ that are:
\begin{enumerate}
\item $\epsilon_{1}$-approximately balanced: for all layers $\ell$, $\left\Vert W_{\ell}^{T}W_{\ell}-W_{\ell-1}W_{\ell-1}^{T}\right\Vert_{F}^{2}\leq\epsilon_{1}$,
\item $\epsilon_{2},\alpha$-approximately rank $r$ (or less):  for all $\ell$, $\sum_{i=1}^{\mathrm{Rank}W_{\ell}}f_\alpha(s_{i}(W_{\ell}^\top W_{\ell}))\leq r+\epsilon_2$
where $s_i(A)$ is the $i$-th singular value of $A$ and $f_\alpha(x)=
\begin{cases} 
\begin{aligned}
&\frac{1}{\alpha^2}x(2\alpha-x),\ &x\le \alpha \\
&1\ &x>\alpha
\end{aligned}
\end{cases} $.
\item $C$-bounded: $\left\Vert W_{\ell}\right\Vert _{F}^{2}\leq C$.
\end{enumerate}
Note that we chose the function $f_\alpha$ to be differentiable, and to satisfy $f_\alpha(0)=0$ and $f_\alpha(x)=1, \forall x\geq\alpha$. This yields a notion of approximate rank that converges to the true rank as $\alpha \searrow 0$. Changing $f_\theta$ to any other function with the same or similar properties should not affect the results.

Since all minima $\hat{\theta}$ of the $L_{2}$-regularized loss are balanced, the set $B_{r}$ contains all minima of rank $r$ or less for $C$ large enough and all $\epsilon_{1},\epsilon_{2}\geq0$,
and it contains no local minimum of higher rank for $\epsilon_{2}$ and $\alpha$
small enough. These sets allows us to separate local minima by rank, with a small neighborhood.

\begin{prop} For any minima $\hat{\theta}$ in $B_{r}$, we have $
\sum_{i=1}^{\mathrm{Rank}A_{\hat{\theta}}} f_\alpha(s_i(A_{\hat{\theta}})^{2/L}) \leq r + \epsilon_2
$.
\end{prop}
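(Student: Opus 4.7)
The plan is to reduce the statement about the singular values of the product matrix $A_{\hat\theta}$ to the analogous statement about the singular values of any single weight matrix $W_\ell$, which appears directly in the definition of $B_r$. The bridge between the two is the balancedness condition satisfied by every critical point of the $L_2$-regularized loss (as noted just after Theorem~1 in the paper).

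First, I would invoke balancedness: since $\hat\theta$ is a minimum of $\mathcal{L}_\lambda$, it satisfies $W_\ell^T W_\ell = W_{\ell-1}W_{\ell-1}^T$ for $\ell=1,\ldots,L-1$. I would then use the standard consequence of this identity for the singular value decompositions: if $W_\ell = U_\ell \Sigma_\ell V_\ell^T$ is a (thin) SVD, balancedness forces $\Sigma_\ell = \Sigma_{\ell-1}=:\Sigma$ (up to reordering) and $V_\ell = U_{\ell-1}$ on the nontrivial block. Hence every $W_\ell$ has the same multiset of singular values $\sigma_1\ge\sigma_2\ge\cdots$, and the same rank $r_0 := \mathrm{Rank}\,W_\ell$, which does not depend on $\ell$.

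Next, I would compute the SVD of the product. Telescoping the matching $V_\ell = U_{\ell-1}$ relations through $A_{\hat\theta} = W_L\cdots W_1$, the intermediate orthogonal factors cancel and one obtains $A_{\hat\theta} = U_L \Sigma^L V_1^T$. Thus $\mathrm{Rank}\,A_{\hat\theta} = r_0$, and the $i$-th singular value of $A_{\hat\theta}$ is $\sigma_i^L$. Equivalently,
\[
s_i(A_{\hat\theta})^{2/L} \;=\; \sigma_i^2 \;=\; s_i(W_\ell^T W_\ell)
\]
for every $\ell$ and every $i \le r_0$.

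Finally, I would plug this identity into the $(\epsilon_2,\alpha)$-approximate rank condition defining $B_r$. Since $\hat\theta\in B_r$, there exists $\ell$ (in fact every $\ell$) with $\sum_{i=1}^{\mathrm{Rank}\,W_\ell} f_\alpha(s_i(W_\ell^T W_\ell)) \le r+\epsilon_2$. Substituting $s_i(W_\ell^T W_\ell) = s_i(A_{\hat\theta})^{2/L}$ and $\mathrm{Rank}\,W_\ell = \mathrm{Rank}\,A_{\hat\theta}$ yields the claimed bound. The only delicate point is the SVD-alignment step: one must argue carefully, using that $W_\ell^T W_\ell$ and $W_{\ell-1}W_{\ell-1}^T$ share eigenvectors by balancedness, that the product telescopes cleanly to $U_L\Sigma^L V_1^T$ rather than introducing mixed orthogonal rotations that could change the singular values. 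This is the main (though standard) obstacle; everything else is direct substitution.
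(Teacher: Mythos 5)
Your proof is correct and follows essentially the same route as the paper's: invoke balancedness of the minimizer, observe that this forces all weight matrices to share the same singular values $s_i$ and gives the aligned SVD structure $W_\ell = U_\ell S^{1/L} U_{\ell-1}^T$, telescope to $A_{\hat\theta} = U_L S U_0^T$ so that $s_i(A_{\hat\theta})^{2/L} = s_i(W_\ell^T W_\ell)$ and $\mathrm{Rank}\,A_{\hat\theta} = \mathrm{Rank}\,W_\ell$, then substitute into the $(\epsilon_2,\alpha)$-rank condition defining $B_r$. The paper states this in one sentence; you flag the SVD-alignment/telescoping step as the point that needs care, which is fair since the paper asserts it without detail, but the underlying argument is identical.
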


\begin{proof}
Since all minima $\hat{\theta}$ are balanced, $A_{\hat{\theta}}=U_L^\top S^L U_0$ where $S\in R^{d_{out}\times d_{in}}$ is the diagonal matrix of singular values of all $W_{\ell}$'s. Since for any $\ell$, $\sum_{i=1}^{\mathrm{Rank}W_{\ell}}f_\alpha(s_{i}(W_{\ell}^\top W_{\ell}))\leq r+\epsilon_2$, $A_{\hat{\theta}}$ satisfies $\sum_{i=1}^{\mathrm{Rank}A_{\hat{\theta}}} f_\alpha(s_i(A_{\hat{\theta}}^{2/L})) \leq r + \epsilon_2.$
\end{proof}
We can now state our main result, which says that the set $B_{r}$ is absorbing for all $r$, i.e. SGD starting from anywhere will always end up at some time inside $B_{r}$ and then never leave it:
\begin{thm}
For any $r\geq 0$, $\lambda, $ $C$ large enough and $\epsilon_1,\epsilon_2,\alpha, \eta$ small enough,
the set $B_{r}$ is closed
\[
\theta_t\in B_{r} \Rightarrow \theta_{t+1}\in B_{r}
\]
and for $r \geq 1$ and any parameters $\theta_{t}$ there is a time
$T=\tilde{\Omega}(\lambda^{-1}\eta^{-1})$
(i.e up to log terms) such that
\[
\mathbb{P}\left(\theta_{t+T}\in B_{r}|\theta_{t}\right)\geq\left(\frac{r}{\min \{d_{in},d_{out} \} }\right)^{T},
\]
thus for any starting point SGD will eventually reach $B_{r}$:
\[
\mathbb{P}(\exists T:\theta_{t+T}\in B_{r}|\theta_{t})=1.
\]

\end{thm}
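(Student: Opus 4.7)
The plan is to prove three things: that $B_r$ is forward-invariant under the SGD update (closure); that from any starting point there is a uniformly lower-bounded probability of landing in $B_r$ within a window of length $T = \tilde\Omega(\lambda^{-1}\eta^{-1})$; and that this lower bound, combined with closure, yields almost sure eventual entry into $B_r$ by a geometric tail argument exploiting the Markov property of SGD.

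For closure, I would verify that each of the three defining conditions of $B_r$ is preserved by a single SGD update $\theta_{t+1} = (1-2\eta\lambda)\theta_t - (\eta/2)\nabla_\theta (A^*_{i_tj_t}-A_{\theta_t,i_tj_t})^2$, which is the composition of a uniform contraction with a rank-one gradient step in each layer. For $\epsilon_1$-approximate balancedness, note that $W_\ell^\top W_\ell - W_{\ell-1}W_{\ell-1}^\top$ is exactly rescaled by $(1-2\eta\lambda)^2$ under the contraction, and is perturbed by $O(\eta^2\,\mathrm{poly}(C))$ under the rank-one gradient step (using that the gradient is bounded on $\|W_\ell\|_F^2\leq C$), so for $\eta$ small enough the contraction dominates. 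For the $\epsilon_2,\alpha$-approximate rank condition, I would use that $f_\alpha$ is quadratic near zero: by a Taylor expansion of $f_\alpha(s_i(W_\ell^\top W_\ell))$ along the rank-one perturbation, the sum $\sum_i f_\alpha(s_i(W_\ell^\top W_\ell))$ increases by at most $O(\eta\,\mathrm{poly}(\alpha,C))$ per step, while for singular values below $\alpha$ the contraction $(1-2\eta\lambda)^2$ acts superlinearly on $f_\alpha$, offsetting this. Boundedness follows because the $L_2$ shrinkage dominates the bounded gradient term for $C$ large enough.

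For the lower bound on reaching $B_r$, I would fix any rank-$r$ minimum $\hat\theta^{\star}$ of $\mathcal{L}_\lambda$ (which exists by Proposition~2 for $\lambda$ small) and construct an explicit deterministic target sequence of sampled indices along which the SGD trajectory is driven from $\theta_t$ into a neighborhood of $\hat\theta^{\star}$ contained in $B_r$. Over the window of length $T = O(\log(1/\epsilon_2)/(\lambda\eta))$, the cumulative $L_2$ shrinkage factor $(1-2\eta\lambda)^T \leq \epsilon_2$ contracts all singular directions not actively maintained by the sampled gradient term. Along the target sequence, each sampled index is required to lie in the subset of $I$ that supports the rank-$r$ column/row structure of $\hat\theta^{\star}$; a counting argument shows this compatible subset has density at least $r/\min\{d_{in},d_{out}\}$ in $I$. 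Independence of the index samples over time then yields $\mathbb{P}(\text{whole sequence realized}) \geq (r/\min\{d_{in},d_{out}\})^T$, and along this sequence one checks that the trajectory actually enters $B_r$ by step $T$. The almost-sure statement then follows by partitioning time into consecutive windows of length $T$: by the Markov property and the uniform lower bound $p := (r/\min\{d_{in},d_{out}\})^T > 0$, conditional on not yet being in $B_r$ at the start of a window, the probability of entering by the end is at least $p$; by closure, once in $B_r$ the trajectory stays in $B_r$. Hence $\mathbb{P}(\theta_{t+kT}\notin B_r)\leq(1-p)^k\to 0$.

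The main obstacle I expect is the explicit rank-reduction path for the probability lower bound: one must simultaneously (a) identify the correct subset of indices so that $L_2$ shrinkage eliminates the excess singular directions while the gradient term preserves the retained ones, (b) prove that the approximate-rank functional descends to $r+\epsilon_2$ within $T$ steps without oscillation, (c) maintain approximate balance and boundedness along the entire path so that the one-step closure estimates remain valid, and (d) match the per-step density to the stated $r/\min\{d_{in},d_{out}\}$. The closure argument, while technical, reduces to one-step perturbation bounds and careful choice of constants; the geometric tail assembly is standard.
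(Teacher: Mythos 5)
Your closure argument tracks the paper's approach closely: the paper likewise shows (Propositions 5.2--5.3 and Theorem 5.4 in the appendix) that the $L_2$ shrinkage factor $(1-\eta\lambda)^2$ exactly contracts the imbalance matrices $W_\ell W_\ell^\top - W_{\ell+1}^\top W_{\ell+1}$ up to $O(\eta^2)$ noise, and uses a Taylor expansion of the spectral function $F_\alpha\circ\sigma$ (via Lewis' formulas for first and second derivatives of spectral functions) together with the sign of $f_\alpha''$ to control the rank functional. This part is in the right spirit, though the actual second-order control on the spectral side requires more than a blanket ``$O(\eta\,\mathrm{poly}(\alpha,C))$ increase'': one must show the $L_2$ drift term $-2\eta\lambda f_\alpha'(s_i^2)s_i^2$ strictly dominates the drift from $C(A_\theta)$ once the rank functional threatens to exceed $r+\epsilon_2$, which the paper does via the explicit lower bound on $g(x)=f_\alpha'(x)(\lambda x-\sqrt{2(C_1+C^L)}x^{L/2})$.

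For the reachability bound your route diverges from the paper in a way that introduces a genuine gap. You propose to fix a specific rank-$r$ minimum $\hat\theta^\star$ and drive the trajectory to its neighborhood by an ``explicit target sequence'' of indices. This is both unnecessary and problematic. First, ``the subset of $I$ compatible with the rank-$r$ structure of $\hat\theta^\star$'' is not well-defined: a rank-$r$ matrix generically has all entries nonzero, so there is no canonical rank-$r$-supporting subset of observed indices, and no argument that such a subset has density $\geq r/\min\{d_{in},d_{out}\}$ in $I$. Second, proving that SGD restricted to a deterministic index schedule converges into a prescribed neighborhood of a specific local minimum within $O(\lambda^{-1}\eta^{-1})$ steps is far harder than anything the statement requires and has no obvious proof. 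The paper avoids all of this: it never targets a minimum. The key mechanism is structural and elementary: for MC, the per-sample gradient $G_{\theta,ij}$ is nonzero in a single entry $(i,j)$, so the gradient update to $W_1$ is supported on a single column. The paper picks the $r$ columns with the most observed entries (pigeonhole gives them density $\geq r/\min\{d_{in},d_{out}\}$ in $I$, not an arbitrary choice) and conditions on the event that every sample over $T_1$ steps falls in one of these $r$ columns; then the remaining $d_{in}-r$ columns of $W_1$ receive no gradient signal and shrink by $(1-\eta\lambda)$ per step, forcing the trailing singular values below $\alpha\epsilon_2/(2(n-r))$ after $T_1 = \tilde\Omega(\lambda^{-1}\eta^{-1})$ steps. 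Approximate balancedness then propagates this smallness to all layers. That argument needs neither convergence nor a particular minimum. You identified the right obstacles (a)--(d) at the end, but the proposal as written does not resolve (a) or (d), and resolving them correctly amounts to replacing the ``drive toward $\hat\theta^\star$'' plan with the column-selection and shrinkage mechanism just described.
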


\begin{proof}
(sketch) (1) The closedness of the set of $\epsilon_{1}$-approximately
balanced parameters follows from the fact that in the gradient flow
limit $\eta\searrow 0$, the balancedness errors $W_{\ell}^{T}W_{\ell}-W_{\ell-1}W_{\ell-1}^{T}$
decay exponentially
\[
\partial_{t}\left(W_{\ell}^{T}W_{\ell}-W_{\ell-1}W_{\ell-1}^{T}\right)=-\lambda\left(W_{\ell}^{T}W_{\ell}-W_{\ell-1}W_{\ell-1}^{T}\right).
\]
To guarantee a similar decay with SGD, we simply need to the control the $O(\eta^{2})$ terms.

Given $\epsilon_{1}$-approximately balancedness, the closedness of
the $\epsilon_{2}$-approximately rank $r$ or less parameters follows
from the fact that the dynamics resulting from the minimization of the cost $C(A_\theta)$ are very slow along the smallest singular vectors of $A_\theta$ \cite{arora_2018_DLN_convergence} but the $L_2$-regularization term pushes these small singular values towards zero. For small enough singular values, this second force dominates, thus leading to a decay towards zero.

(2) Under the event $A_{T}$ that in the steps $s$ from $t$ to $t+T-1$
all the random entries $(i_{s},j_{s})$ are sampled from the same
$r$ of the $d_{out}$ columns, one can show that the $d_{out}-r$ other columns
of $W_{L}$ decay exponentially to approximately 0, implying an approximate
rank of $r$ or less. The probability of that event is at least $\left(\frac{r}{d_{out}}\right)^{T}$.
\end{proof}

This shows the implicit bias of SGD towards low-rank matrices in matrix
completion: SGD can avoid any rank-overestimating minima given enough training steps.

Explicit bounds on $C,\alpha,\epsilon_1,\epsilon_2,\eta$ can be found in the Appendix. The bounds are rather complex, but we give here an example of acceptable rates in terms of $\lambda$: $C\sim \lambda^{-1}$,$\alpha\sim \lambda^{\frac{L+2}{L-2}}$, $\epsilon_1\sim \lambda^{\frac{L+2}{L-2}+2L-1}$, $\epsilon_2 \sim \lambda^{-1}$, and $\eta \sim \lambda^{4L-1+\frac{L+1}{L-2}}$. These rate suggest that an extremely small learning rate $\eta$ is necessary, especially for large depths $L$, thus making the likelihood of a jump appear very small. This seems in contradiction with our empirical observations that larger depths tend to make these jumps more likely. We believe our bounds could be made tighter, in particular when it comes to the dependence on the depth $L$ to better reflect our empirical observations.

We expect this result to generalize to other tasks. The first part of the theorem
(the closedness of $B_{r}$) should generalize
to costs such as the MSE loss and others, and the second part
too, under the event that one samples from the same $r$ training
points over $T$ time steps for the MSE loss, or sample from the same
$r$ classes for classification tasks.

A limitation however is that the second part of the result relies
on the fact that we sample the observed entries independently at each
time $t$ with possible replacement. In practice, the dataset is randomly
shuffled and taken in this random order, so that every observed entry
is chosen exactly once during each epoch. This would force the jumps
to happen within an epoch, which may not be possible depending on
the problem.

Another limitation is the average time required to observe one of
our predicted jumps can easily be absurdly large. To observe a jump
in reasonable time, one also needs rather large learning rates $\eta$,
leading to very noisy dynamics. This makes periodic learning rate
choices attractive, with large $\eta$ periods allowing for jumps
to lower-rank region, and low $\eta$ periods allowing for SGD to
settle around a local minimum.

Nevertheless, our result also shows that the  common approach
of approximating SGD with a SDE such as Langevin dynamics and studying
the stationary distribution (usually with full support over the parameter
space) is misleading. In contrast, our result implies that any stationary
distribution must have support inside $B_{r=1}$, thus under-estimating the true rank in general. It is thus crucial to understand
the distribution of SGD at intermediate times, when the rank has not yet collapsed to 1.

\begin{figure}
    \centering
    \begin{minipage}{.48\textwidth}
    \centering
    \vspace*{-0.6cm}
    \!\!\!\!\!\!\!\!\includegraphics[scale=0.45]{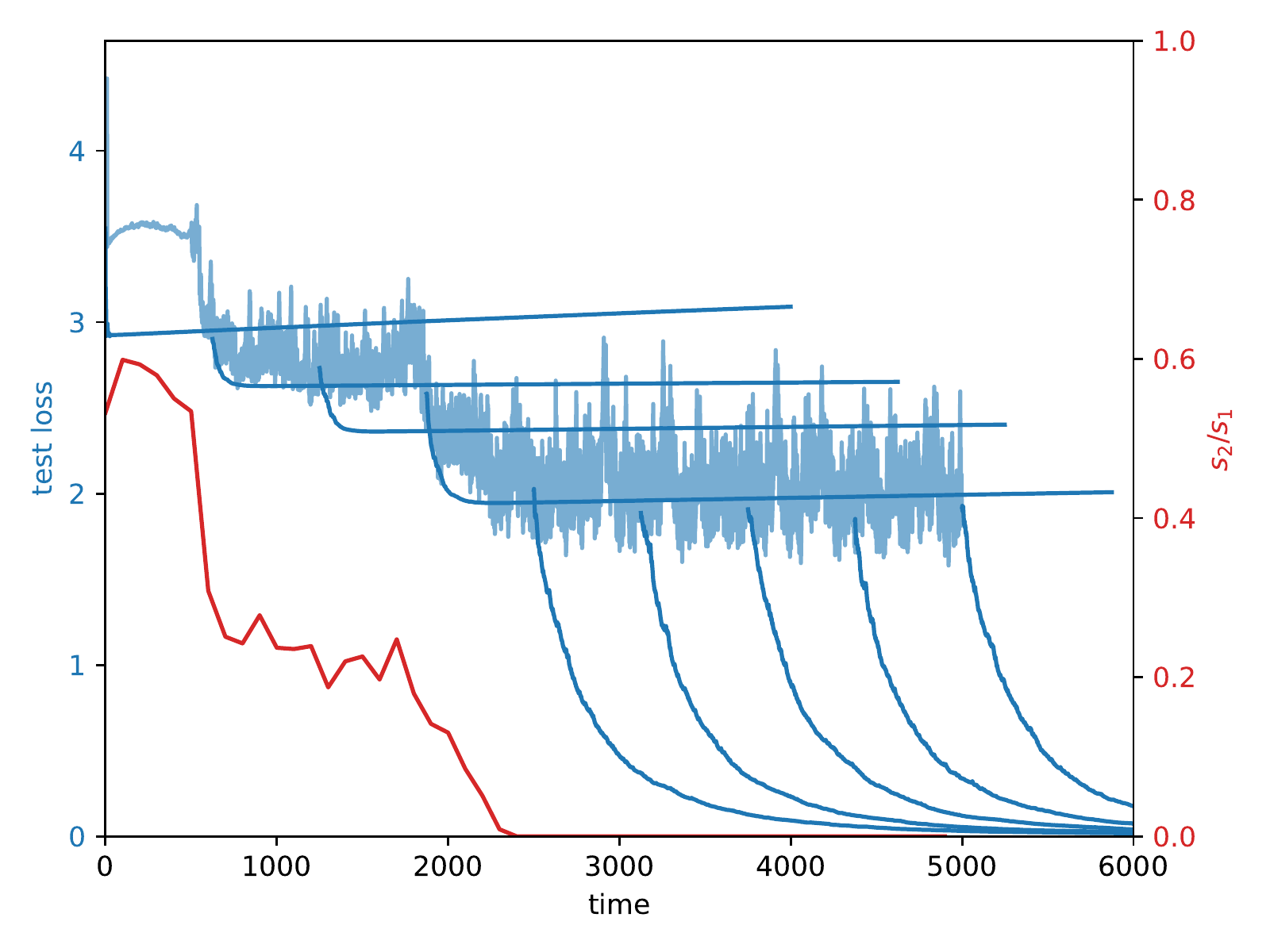}
    \caption{\textbf{Annealing Schedule:} DLN with $L=3$, $w_1=100$ on the $2\times 2$ MC problem with $\epsilon=0.25$ [Light blue] SGD with $\lambda=0.1$ and $\eta=0.2$ ($\eta=0.03$ for the first 500 steps to avoid explosion) [dark blue] at different times, we create offshoots with $\lambda=0.001$ and $\eta=0.02$ to fit the data. [red] The ratio of the second to first singular value of $A_\theta$ on the large $\lambda,\eta$ path. We see a jump around time 2000, where the output matrix becomes rank 1. The offshoots created before this jump fail to fit the missing entry while those created after succeed.}
    \label{fig:annealing}
    \end{minipage} \; \;
    \begin{minipage}{.48\textwidth}\centering
    \vspace*{-1.3cm}
    \!\!\!\!\!\includegraphics[scale=0.44]{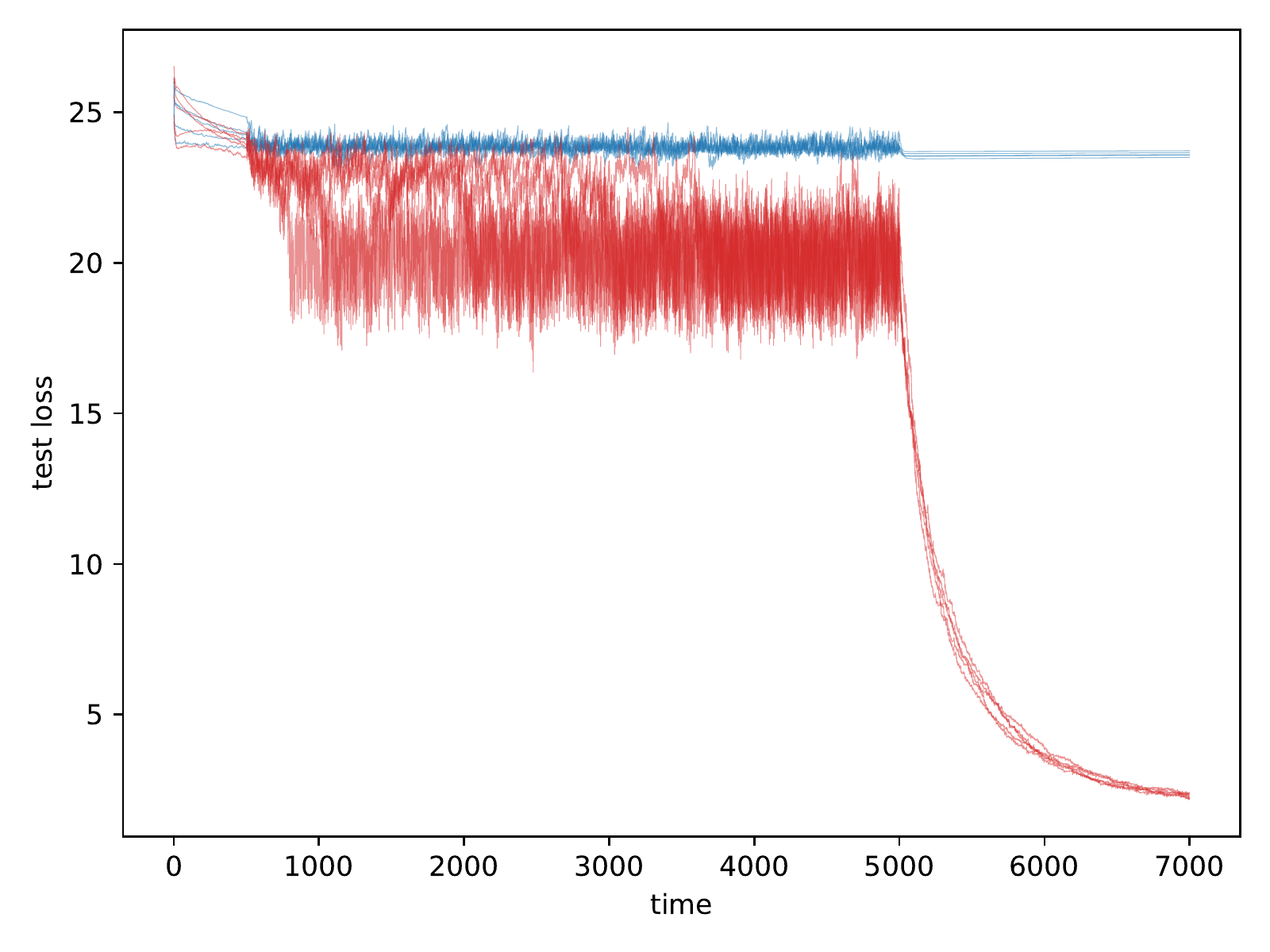}
    \caption{\textbf{Effect of depth:} We study the effect of on the MC problem with $\epsilon=0.1$. We train 5 networks of each depths $L=3$ [blue] and $L=4$ [red] with the same schedule: $\eta=0.03,\lambda=0.1$ until $t=500$, then $\eta=0.25,\lambda=0.1$ until $t=5000$ and finally $\eta=0.05,\lambda=0.001$ until the end. We see that the five depth $L=3$ networks are unable to jump in this time, while all five depth $L=4$ networks jump at different times during the first 5000 SGD steps.}
    \label{fig:depth_comparison}
\end{minipage}
\end{figure}

\subsubsection{Nonlinear networks}
Since linear networks are a simplification of nonlinear networks, it is natural to wonder whether the results presented here could be generalized to the nonlinear case. We identify two possible strategy to generalize our results:

First along the lines of \cite{chen_2023_SGD_invariant_spaces} which observes a similar phenomenon where SGD is naturally attracted to symmetric regions of the loss (where for example two neurons are identical or one neuron is dead) in nonlinear networks. The $L_2$-regularization is known to make these region more attractive \cite{jacot_2022_L2_reformulation}, which could have a compound effect with SGD. In DLNs, the regions of low rank that we prove are attractive can also be interpreted as neighborhoods of symmetric / invariant regions. 

Second, recent work has shown that $L_2$ regularized ReLU DNNs with large depths are biased towards minimizing a notion of rank over nonlinear functions, the Bottleneck rank \cite{jacot_2022_BN_rank}. We have hope that  our results could be extended to prove a similar low-rank bias with this new notion of rank. This is further motivated by the observation that such large depth networks exhibit a Bottleneck structure \cite{jacot_2023_bottleneck2} where the middle layers of the network behave approximately like linear layers.

\section{Numerical Experiments}
For our numerical experiments, we want to find a Matrix Completion problem that GD cannot solve but SGD can. In particular, we want to find a setup where GD converges with a high probability to a rank-overestimating minimum, and where SGD can jump from this minimum to a lower rank minimum in a reasonable amount of time.

It is rather difficult to find a setup that lies between the regimes where both GD and SGD work and where neither work. This is in line with previous work in the bias of SGD \cite{Pesme_2021_SGD_bias_diag_nets}: in diagonal networks a value (determined by the initialization) determines a transition between a sparse and non-sparse regimes, and SGD has the effect of pushing this value towards the sparse regime; this can have a significant sparsity effect if the original value was at the transition between regimes, but little effect if it was far into either regimes.

\begin{figure}
    \centering
    \begin{minipage}{.48\textwidth}
    \centering
    \vspace*{-1cm}
    \!\!\!\!\!\!\!\!\includegraphics[scale=0.47]{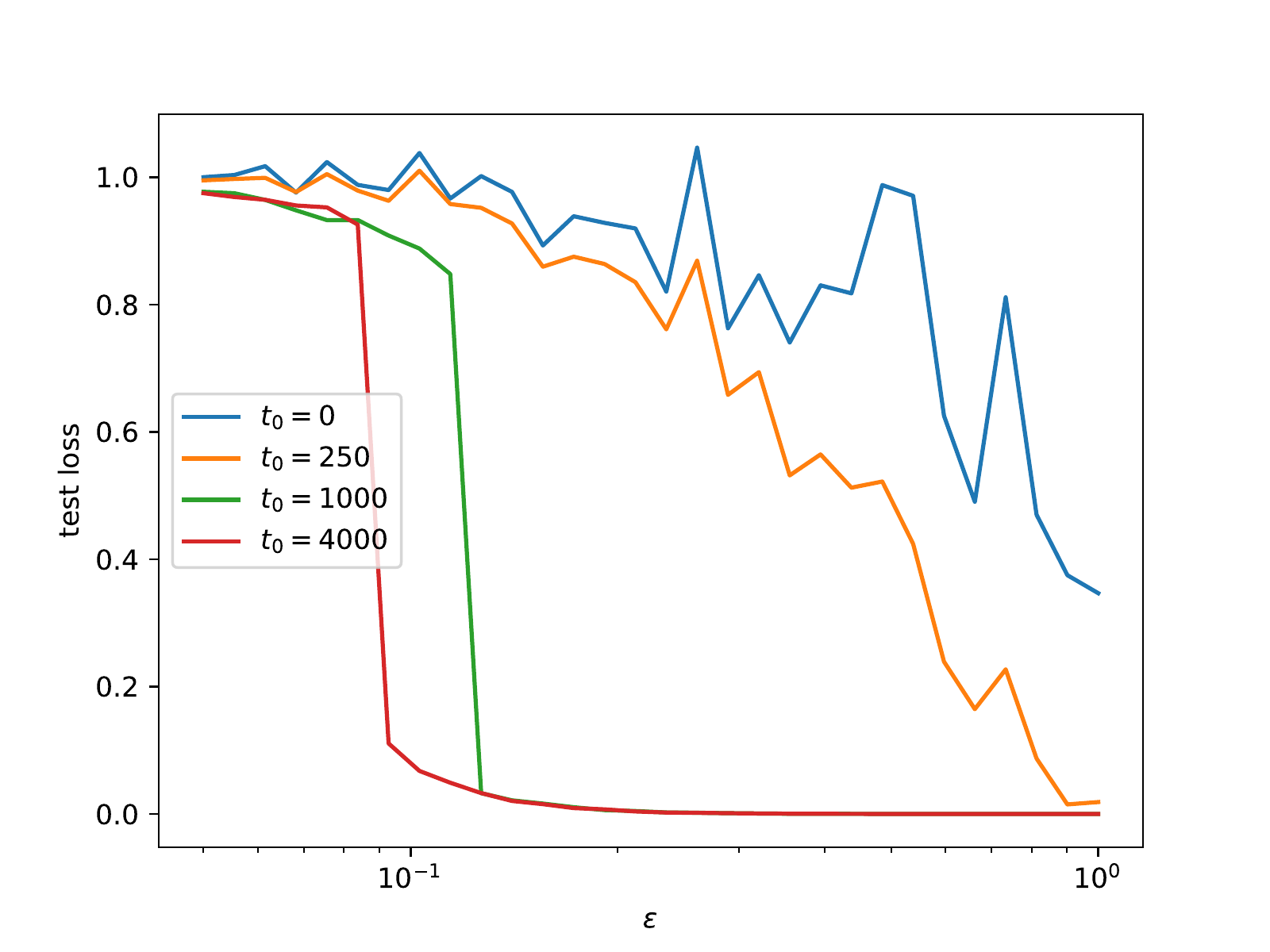}
    \caption{\textbf{Annealing accross $\epsilon$:} For a range of $\epsilon$, we train 4 networks ($L=4$,$w=100$) with an annealing schedule as in Figure \ref{fig:annealing} and plot the test loss divided by the test loss when putting zeros in the missing entries. The four networks are trained for $t_0$ steps with high noise, followed by 4000 steps in the low noise regime. Without a noise phase ($t_0=0$) the network fails to recover the rank 1 solution. Larger $t_0$ allow the network to recover it for even smaller $\epsilon$.}
    \label{fig:annealing_epsilon}
    \end{minipage} \; \;
    \begin{minipage}{.48\textwidth}\centering
    \vspace*{-0.85cm}
    \!\!\!\!\!\includegraphics[scale=0.42]{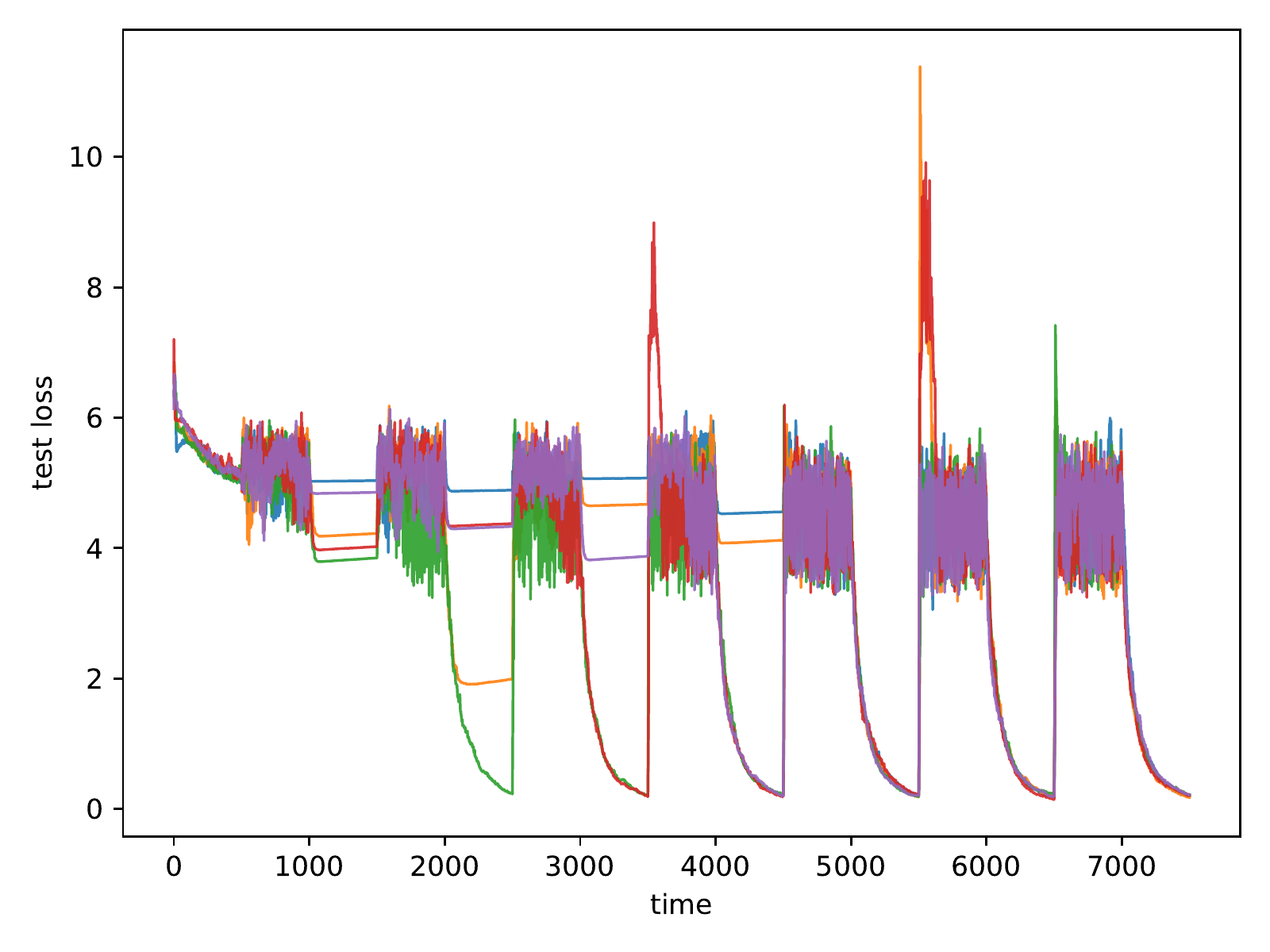}
    \caption{\textbf{Periodic Schedule:} DLN with $L=3$, $w_1=100$ on the $2\times 2$ MC problem with $\epsilon=0.2$. We plot 5 runs of SGD with periodic learning rates alternating between $\eta=0.1,\lambda=0.001$ and $\eta=0.4,\lambda=0.1$. We see that the different trials make jumps during the high $\eta,\lambda$ period. After the jump, SGD will settle at a low test error in the slow periods, allowing us to identify when the jump happened.}
    \label{fig:periodic}
\end{minipage}
\end{figure}

\begin{figure}
    \centering
    
\end{figure}

We choose a MC problem, inspired by \cite{Razim_2020}, that allows us to tune the difficulty of finding a sparse solution. We observe $3$ out of $4$ entries of a $2 \times 2$ matrix:
\[
\left(\begin{array}{cc}
1 & *\\
\epsilon & 1
\end{array}\right).
\]
Filling the missing entry $*$ with $\epsilon^{-1}$ leads to a rank $1$ matrix. The smaller $\epsilon$ is, the larger the missing entry that needs to be filled in needs to be.

In $L_2$-regularized DLNs with $L>2$ there are three local minima: the rank $0$ minimum at the origin which can easily be avoided, a set of minima that learn the rank $1$ solution, and a set of rank-overestimating minima that learn a rank $2$ solution by filling the missing entry with a small value.

For small $\epsilon$ values, GD almost always converges to a rank-overestimating minimum (see Figure \ref{fig:annealing_epsilon}). In such setup, SGD can outperform GD by jumping to a rank $1$ solution. To achieve a jump in a reasonable amount of time, we need the ridge parameter $\lambda$ and the learning rate $\eta$ to be large. But such a choice of large $\lambda,\eta$ prevent SGD from minimizing the train error.

We investigate two strategies to take advantage of both the jumping properties of large $\lambda,\eta$ and fitting properties of small $\lambda,\eta$:

\textbf{`Annealing' Schedule:} In Figure \ref{fig:annealing}, we run SGD with large $\lambda,\eta$ for some time $t_0$, waiting for a jump and then switch to small values of $\lambda,\eta$ for convergence. Another specificity is that we take a small learning rate for the first few steps, because SGD diverges if we start with a too large learning rate directly at initialization, whereas large learning rates are possible after a few steps (we do not have a theoretical explanation for that). 

We test different switching times to small $\lambda,\eta$ values, and we see clearly that if we switch after the jump at time $2000$, we obtain a rank $1$ solution, if we switch before the jump then training fails and recovers a rank 2 solution.

By changing $\epsilon$ we can tune the difficulty of finding the rank 1 solution. We see in Figure \ref{fig:annealing_epsilon} that the smaller $\epsilon$, the longer one needs to wait for a jump, and thus the longer one needs to stay in the high noise setting. We also see that without a high noise period (i.e. when we are close to GD) the network fails to recover the rank 1 solution even for $\epsilon=1$.

\textbf{Periodic Schedule:} Another strategy it to alternate between large and small $\lambda,\eta$. We see in Figure \ref{fig:periodic} how the jumps all happen during the large $\lambda,\eta$ periods. It is also interesting to see that even after SGD has settled in the vicinity of a local minimum in one of the small $\lambda,\eta$ periods, SGD can still jump to another minimum in a subsequent large $\lambda,\eta$ period.

Finally we also study the effect of depth in Figure \ref{fig:depth_comparison}, and observe that depth increases the probability of jumps. We train networks of depths $L=3$ and $L=4$ on the $2\times 2$ MC task with $\epsilon=0.1$. While for the choice $\epsilon=0.25$, a depth $L=3$ network was able to jump in a reasonable amount of time, for this smaller choice of $\epsilon$ we do not observe a jump (even with the same hyper-parameters). In contrast, the deeper networks $L=4$ all jump in a reasonable amount of time, suggesting that depth increases the likelihood of a jump.

\section{Conclusion}
We have given a description of the loss landscape of $L_2$-regularized DLNs, giving a classification of its minima by their rank. We have then shown that SGD has a non-zero probability of jumping from any higher rank minimum to a lower rank one, but it has a zero probability of jumping in the other direction. We observe these jumps empirically. To our knowledge, this is the first description of the low-rank bias of SGD in the context of fully-connected linear networks with two or more hidden layers.

Our analysis is also significantly different from previous approaches that rely on approximating SGD with a continuous stochastic process, and/or studying of the limiting distribution of this continuous process. It appears that the phenomenon of absorbing sets of different ranks cannot be recovered with a continuous approximation, and the jumps we describe happen before SGD has reached its limiting distribution. This puts into question the adequacy of the continuous approximation and limiting distribution assumption.

\bibliographystyle{plain}
\bibliography{main}

\begin{thebibliography}{10}

\bibitem{arora_2018_DLN_convergence}
Sanjeev Arora, Nadav Cohen, Noah Golowich, and Wei Hu.
\newblock A convergence analysis of gradient descent for deep linear neural
  networks.
\newblock In {\em International Conference on Learning Representations}, 2019.

\bibitem{arora_2018_depth_speed_Lp}
Sanjeev Arora, Nadav Cohen, and Elad Hazan.
\newblock On the optimization of deep networks: Implicit acceleration by
  overparameterization.
\newblock In Jennifer Dy and Andreas Krause, editors, {\em Proceedings of the
  35th International Conference on Machine Learning}, volume~80 of {\em
  Proceedings of Machine Learning Research}, pages 244--253. PMLR, 10--15 Jul
  2018.

\bibitem{arora_2019_matrix_factorization}
Sanjeev Arora, Nadav Cohen, Wei Hu, and Yuping Luo.
\newblock Implicit regularization in deep matrix factorization.
\newblock {\em Advances in Neural Information Processing Systems}, 32, 2019.

\bibitem{candes2009exact}
Emmanuel~J Cand{\`e}s and Benjamin Recht.
\newblock Exact matrix completion via convex optimization.
\newblock {\em Foundations of Computational mathematics}, 9(6):717--772, 2009.

\bibitem{candes_2010_convex_MC1}
Emmanuel~J Cand{\`e}s and Terence Tao.
\newblock The power of convex relaxation: Near-optimal matrix completion.
\newblock {\em IEEE Transactions on Information Theory}, 56(5):2053--2080,
  2010.

\bibitem{chaudhari_2018_SGD_potential}
Pratik Chaudhari and Stefano Soatto.
\newblock Stochastic gradient descent performs variational inference, converges
  to limit cycles for deep networks.
\newblock In {\em 2018 Information Theory and Applications Workshop (ITA)},
  pages 1--10. IEEE, 2018.

\bibitem{chen_2023_SGD_invariant_spaces}
Feng Chen, Daniel Kunin, Atsushi Yamamura, and Surya Ganguli.
\newblock Stochastic collapse: How gradient noise attracts sgd dynamics towards
  simpler subnetworks, 2023.

\bibitem{dai_2021_repres_cost_DLN}
Zhen Dai, Mina Karzand, and Nathan Srebro.
\newblock Representation costs of linear neural networks: Analysis and design.
\newblock In A.~Beygelzimer, Y.~Dauphin, P.~Liang, and J.~Wortman Vaughan,
  editors, {\em Advances in Neural Information Processing Systems}, 2021.

\bibitem{gunasekar_2018_implicit_bias}
Suriya Gunasekar, Jason Lee, Daniel Soudry, and Nathan Srebro.
\newblock Characterizing implicit bias in terms of optimization geometry.
\newblock In Jennifer Dy and Andreas Krause, editors, {\em Proceedings of the
  35th International Conference on Machine Learning}, volume~80 of {\em
  Proceedings of Machine Learning Research}, pages 1832--1841. PMLR, 10--15 Jul
  2018.

\bibitem{jacot_2023_bottleneck2}
Arthur Jacot.
\newblock Bottleneck structure in learned features: Low-dimension vs regularity
  tradeoff, 2023.

\bibitem{jacot_2022_BN_rank}
Arthur Jacot.
\newblock Implicit bias of large depth networks: a notion of rank for nonlinear
  functions.
\newblock In {\em The Eleventh International Conference on Learning
  Representations}, 2023.

\bibitem{jacot2018neural}
Arthur Jacot, Franck Gabriel, and Cl\'ement Hongler.
\newblock {Neural Tangent Kernel: Convergence and Generalization in Neural
  Networks}.
\newblock In {\em Advances in Neural Information Processing Systems 31}, pages
  8580--8589. Curran Associates, Inc., 2018.

\bibitem{jacot-2021-DLN-Saddle}
Arthur Jacot, Fran\c{c}ois Ged, Berfin \c{S}im\c{s}ek, Cl\'ement Hongler, and
  Franck Gabriel.
\newblock Saddle-to-saddle dynamics in deep linear networks: Small
  initialization training, symmetry, and sparsity, 2022.

\bibitem{jacot_2022_L2_reformulation}
Arthur Jacot, Eugene Golikov, Cl{\'e}ment Hongler, and Franck Gabriel.
\newblock Feature learning in $l_{2}$-regularized dnns: Attraction/repulsion
  and sparsity.
\newblock In {\em Advances in Neural Information Processing Systems},
  volume~36, 2022.

\bibitem{jastrzkebski_2017_SGD_Langevin}
Stanis{\l}aw Jastrz{\k{e}}bski, Zachary Kenton, Devansh Arpit, Nicolas Ballas,
  Asja Fischer, Yoshua Bengio, and Amos Storkey.
\newblock Three factors influencing minima in sgd.
\newblock {\em arXiv preprint arXiv:1711.04623}, 2017.

\bibitem{Ji_2020_directional2}
Ziwei Ji and Matus Telgarsky.
\newblock Directional convergence and alignment in deep learning.
\newblock In H.~Larochelle, M.~Ranzato, R.~Hadsell, M.~F. Balcan, and H.~Lin,
  editors, {\em Advances in Neural Information Processing Systems}, volume~33,
  pages 17176--17186. Curran Associates, Inc., 2020.

\bibitem{keshavan_2010_MC_special_matrix_factorization}
Raghunandan~H Keshavan, Andrea Montanari, and Sewoong Oh.
\newblock Matrix completion from a few entries.
\newblock {\em IEEE transactions on information theory}, 56(6):2980--2998,
  2010.

\bibitem{Keskar_2016_generalization_SGD}
Nitish~Shirish Keskar, Dheevatsa Mudigere, Jorge Nocedal, Mikhail Smelyanskiy,
  and Ping Tak~Peter Tang.
\newblock On large-batch training for deep learning: Generalization gap and
  sharp minima.
\newblock {\em ArXiv}, abs/1609.04836, 2016.

\bibitem{init_lecun2012}
Yann~A LeCun, L{\'e}on Bottou, Genevieve~B Orr, and Klaus-Robert M{\"u}ller.
\newblock Efficient backprop.
\newblock In {\em Neural networks: Tricks of the trade}, pages 9--48. Springer,
  2012.

\bibitem{lee_2019_strict_saddle}
Jason~D Lee, Ioannis Panageas, Georgios Piliouras, Max Simchowitz, Michael~I
  Jordan, and Benjamin Recht.
\newblock First-order methods almost always avoid strict saddle points.
\newblock {\em Mathematical programming}, 176(1):311--337, 2019.

\bibitem{lewis2001twice}
Adrian~S Lewis and Hristo~S Sendov.
\newblock Twice differentiable spectral functions.
\newblock {\em SIAM Journal on Matrix Analysis and Applications},
  23(2):368--386, 2001.

\bibitem{li_2017_SGD_SDE_approx}
Qianxiao Li, Cheng Tai, and E~Weinan.
\newblock Stochastic modified equations and adaptive stochastic gradient
  algorithms.
\newblock In {\em International Conference on Machine Learning}, pages
  2101--2110. PMLR, 2017.

\bibitem{li2020towards}
Zhiyuan Li, Yuping Luo, and Kaifeng Lyu.
\newblock Towards resolving the implicit bias of gradient descent for matrix
  factorization: Greedy low-rank learning.
\newblock In {\em International Conference on Learning Representations}, 2020.

\bibitem{liu2020_NTK-PL-inequ}
Chaoyue Liu, Libin Zhu, and Mikhail Belkin.
\newblock Toward a theory of optimization for over-parameterized systems of
  non-linear equations: the lessons of deep learning.
\newblock {\em arXiv preprint arXiv:2003.00307}, 2020.

\bibitem{mandt_2017_SGD_bayesian_inference}
Stephan Mandt, Matthew~D Hoffman, and David~M Blei.
\newblock Stochastic gradient descent as approximate bayesian inference.
\newblock {\em arXiv preprint arXiv:1704.04289}, 2017.

\bibitem{Moroshko_2020_implicit_bias_diag}
Edward Moroshko, Blake~E Woodworth, Suriya Gunasekar, Jason~D Lee, Nati Srebro,
  and Daniel Soudry.
\newblock Implicit bias in deep linear classification: Initialization scale vs
  training accuracy.
\newblock In H.~Larochelle, M.~Ranzato, R.~Hadsell, M.~F. Balcan, and H.~Lin,
  editors, {\em Advances in Neural Information Processing Systems}, volume~33,
  pages 22182--22193. Curran Associates, Inc., 2020.

\bibitem{Ongie_2020_repres_bounded_norm_shallow_ReLU_net}
Greg Ongie, Rebecca Willett, Daniel Soudry, and Nathan Srebro.
\newblock A function space view of bounded norm infinite width relu nets: The
  multivariate case.
\newblock In {\em International Conference on Learning Representations}, 2020.

\bibitem{Pesme_2021_SGD_bias_diag_nets}
Scott Pesme, Loucas Pillaud-Vivien, and Nicolas Flammarion.
\newblock Implicit bias of sgd for diagonal linear networks: a provable benefit
  of stochasticity.
\newblock In M.~Ranzato, A.~Beygelzimer, Y.~Dauphin, P.S. Liang, and J.~Wortman
  Vaughan, editors, {\em Advances in Neural Information Processing Systems},
  volume~34, pages 29218--29230. Curran Associates, Inc., 2021.

\bibitem{Razim_2020}
Noam Razin and Nadav Cohen.
\newblock Implicit regularization in deep learning may not be explainable by
  norms.
\newblock In H.~Larochelle, M.~Ranzato, R.~Hadsell, M.~F. Balcan, and H.~Lin,
  editors, {\em Advances in Neural Information Processing Systems}, volume~33,
  pages 21174--21187. Curran Associates, Inc., 2020.

\bibitem{smith_2021_SGD_bias}
Samuel~L Smith, Benoit Dherin, David~GT Barrett, and Soham De.
\newblock On the origin of implicit regularization in stochastic gradient
  descent.
\newblock {\em arXiv preprint arXiv:2101.12176}, 2021.

\bibitem{SGD_Smith2018}
Samuel~L. Smith and Quoc~V. Le.
\newblock A bayesian perspective on generalization and stochastic gradient
  descent.
\newblock In {\em International Conference on Learning Representations}, 2018.

\bibitem{soudry2018implicit}
Daniel Soudry, Elad Hoffer, Mor~Shpigel Nacson, Suriya Gunasekar, and Nathan
  Srebro.
\newblock The implicit bias of gradient descent on separable data.
\newblock {\em The Journal of Machine Learning Research}, 19(1):2822--2878,
  2018.

\bibitem{sun_2016_MC_shallow_DLN_L2_reg}
Ruoyu Sun and Zhi-Quan Luo.
\newblock Guaranteed matrix completion via non-convex factorization.
\newblock {\em IEEE Transactions on Information Theory}, 62(11):6535--6579,
  2016.

\bibitem{vivien_2022_label_noise_sparsity}
Loucas~Pillaud Vivien, Julien Reygner, and Nicolas Flammarion.
\newblock Label noise (stochastic) gradient descent implicitly solves the lasso
  for quadratic parametrisation.
\newblock In {\em Conference on Learning Theory}, pages 2127--2159. PMLR, 2022.

\end{thebibliography}
\newpage
\appendix

\title{Implicit bias of SGD in $L_{2}$-regularized linear DNNs:\\
Appendix}
\maketitle

The Appendix is organized as follows:
\begin{itemize}
    \item Section \ref{sec:Loss_landscape} contains the proofs of Theorem 2 and Proposition 3 of the main.

    \item Section \ref{sec:low-rank-bias} then describes how Theorem 5 of the main can be split into two statements.

    \item Section \ref{sec:preliminaries} state some preliminary result for the proofs.

    \item Section \ref{sec:proof_thm5.1} proves the first part of Theorem 5 from the main.

    \item Section \ref{sec:Proof_of_thm_5.2} proves the second part of Theorem 5.

    \item Section \ref{sec:low_rank_outputs} states and proves a more general version of Proposition 4 of the main.

\end{itemize}

\section{Loss Landscape} \label{sec:Loss_landscape}
\begin{prop}
    Let $\hat{\theta}$ be a critical point of the loss $\mathcal{L}_\lambda$, then $\hat{\theta}$ is balanced, i.e. $W_\ell W_\ell^T = W_{\ell+1}^T W_{\ell+1}$.
\end{prop}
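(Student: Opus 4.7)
The plan is to exploit the rescaling invariance of the data-fitting term $C(A_\theta)$. Since $A_\theta = W_L \cdots W_1$ depends on consecutive factors only through their product, for any invertible $w_\ell \times w_\ell$ matrix $U$ the simultaneous change $W_\ell \mapsto U W_\ell$, $W_{\ell+1} \mapsto W_{\ell+1} U^{-1}$ leaves $A_\theta$ (and hence $C(A_\theta)$) unchanged. At a critical point $\hat\theta$ of $\mathcal{L}_\lambda = C(A_\theta) + \lambda \|\theta\|^2$, the derivative of $\mathcal{L}_\lambda$ along any smooth curve through $\hat\theta$ must vanish; choosing a curve inside this invariance orbit kills the $C$-contribution, leaving an equation purely involving the regularizer that collapses to balancedness.

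Concretely, I would take $U_\epsilon = I + \epsilon X$ for an arbitrary $w_\ell \times w_\ell$ matrix $X$, giving the infinitesimal perturbations
\[
W_\ell \mapsto W_\ell + \epsilon X W_\ell, \qquad W_{\ell+1} \mapsto W_{\ell+1} - \epsilon W_{\ell+1} X + O(\epsilon^2).
\]
Differentiating $\|\theta\|^2 = \sum_k \|W_k\|_F^2$ at $\epsilon = 0$ and using cyclicity of the trace yields
\[
\frac{d}{d\epsilon}\Big|_{\epsilon=0} \mathcal{L}_\lambda(\hat\theta_\epsilon) = 2\lambda\,\mathrm{tr}(W_\ell^T X W_\ell) - 2\lambda\,\mathrm{tr}(W_{\ell+1}^T W_{\ell+1} X) = 2\lambda\,\mathrm{tr}\bigl((W_\ell W_\ell^T - W_{\ell+1}^T W_{\ell+1})\,X\bigr).
\]
Criticality forces this to equal $0$ for every $X \in \mathbb{R}^{w_\ell \times w_\ell}$, and nondegeneracy of the Frobenius inner product then gives $W_\ell W_\ell^T = W_{\ell+1}^T W_{\ell+1}$.

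There isn't really a hard step here; the whole argument turns on spotting the invariance, after which the computation is immediate. A purely algebraic alternative would be to write $\nabla_{W_\ell}\mathcal{L}_\lambda = W_{L:\ell+1}^T \nabla_A C(A_\theta) W_{\ell-1:1}^T + 2\lambda W_\ell = 0$ and the analogous equation for $W_{\ell+1}$, then right-multiply the first by $W_\ell^T$ and left-multiply the second by $W_{\ell+1}^T$; both manipulations produce the same interior expression $W_{L:\ell+1}^T \nabla_A C(A_\theta) W_{\ell:1}^T$ (using $W_\ell W_{\ell-1:1} = W_{\ell:1}$ and $W_{L:\ell+2} W_{\ell+1} = W_{L:\ell+1}$), yielding the identity directly. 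This backup route is more hands-on but less illuminating, and the only thing to be careful about is the index bookkeeping.
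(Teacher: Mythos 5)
Your primary argument (the $\mathrm{GL}$-invariance computation) is correct and takes a genuinely different route from the paper. The paper works directly from the stationarity equation $W_{\ell+1}^{T}\cdots W_{L}^{T}\nabla C(A_{\theta})W_{1}^{T}\cdots W_{\ell-1}^{T}+2\lambda W_{\ell}=0$: it right-multiplies by $W_\ell^T$ to express $W_\ell W_\ell^T$ as $-\frac{1}{2\lambda}W_{\ell+1}^{T}\cdots W_{L}^{T}\nabla C(A_{\theta})W_{1}^{T}\cdots W_{\ell}^{T}$, and observes that the same quantity arises from left-multiplying the stationarity equation for $W_{\ell+1}$ by $W_{\ell+1}^T$, so that $W_\ell W_\ell^T = W_{\ell+1}^T W_{\ell+1}$. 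This is precisely the ``purely algebraic alternative'' you sketch at the end, so your backup route coincides with the paper's proof. Your main argument instead differentiates $\mathcal{L}_\lambda$ along the orbit of the reparametrization $(W_\ell, W_{\ell+1}) \mapsto (U W_\ell, W_{\ell+1} U^{-1})$, which annihilates $C(A_\theta)$ and isolates the regularizer; nondegeneracy of the Frobenius pairing then forces balancedness. The trade-off: the invariance argument makes the mechanism transparent (balancedness is exactly what stationarity of the norm demands on the directions where the data term is flat) and generalizes cleanly, e.g.\ to other invariant penalties, while the paper's algebraic route is shorter, needs no appeal to smooth curves, and extracts the additional information that $W_\ell W_\ell^T$ is given explicitly in terms of $\nabla C$, which the paper reuses elsewhere in the landscape analysis.
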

\begin{proof}
    At a critical point, we have
    \[
W_{\ell+1}^{T}\cdots W_{L}^{T}\nabla C(A_{\theta})W_{1}^{T}\cdots W_{\ell-1}^{T}+2\lambda W_{\ell}=0.
\]
Thus 
\[
W_{\ell} W_\ell^T =-\frac{1}{2\lambda}  W_{\ell+1}^{T}\cdots W_{L}^{T}\nabla C(A_{\theta})W_{1}^{T}\cdots W_{\ell}^{T}=W_{\ell+1}^T W_{\ell+1}.
\]
\end{proof}

\begin{prop}
\label{prop:equivalence_local_min_strict_saddles}Let $\hat{\theta}$
be a critical point of the loss $\mathcal{L}_{\lambda}$, then:
\begin{itemize}
\item $\hat{\theta}$ is a local minimum if and only if $A_{\hat{\theta}}$
is a local minimum of $C_{\lambda}$.
\item $\hat{\theta}$ is a strict saddle/maximum if and only if $A_{\hat{\theta}}$
is a strict saddle/maximum.
\end{itemize}
\end{prop}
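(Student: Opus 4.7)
The plan is to use the representation cost identity
\[
R(A;L) = L\left\Vert A\right\Vert_{\nicefrac{2}{L}}^{\nicefrac{2}{L}} = \min_{\theta: A_\theta = A}\left\Vert \theta\right\Vert^2
\]
together with the previous proposition that critical points of $\mathcal{L}_\lambda$ are balanced, hence minimum-norm parameterizations of their represented matrix. Two immediate consequences are: (a) for \emph{every} $\theta$, $\mathcal{L}_\lambda(\theta) \geq C(A_\theta) + \lambda R(A_\theta;L) = C_\lambda(A_\theta)$, with equality iff $\theta$ is a minimum-norm representation of $A_\theta$; and (b) at a critical point $\hat\theta$, $\mathcal{L}_\lambda(\hat\theta) = C_\lambda(A_{\hat\theta})$.

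The easy direction of the first bullet ($C_\lambda$ local min at $A_{\hat\theta}$ $\Rightarrow$ $\mathcal{L}_\lambda$ local min at $\hat\theta$) then follows in one line: for $\theta$ near $\hat\theta$, $A_\theta$ is near $A_{\hat\theta}$ by continuity of the network map, so $\mathcal{L}_\lambda(\theta) \geq C_\lambda(A_\theta) \geq C_\lambda(A_{\hat\theta}) = \mathcal{L}_\lambda(\hat\theta)$. For the harder direction, I would argue by contradiction. Suppose there is a sequence $A_n \to A_{\hat\theta}$ with $C_\lambda(A_n) < C_\lambda(A_{\hat\theta})$; pick balanced parameterizations $\theta_n$ of $A_n$, so that $\mathcal{L}_\lambda(\theta_n) = C_\lambda(A_n) < \mathcal{L}_\lambda(\hat\theta)$. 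Continuity of the representation cost in $A$ gives $\left\Vert\theta_n\right\Vert^2 \to \left\Vert\hat\theta\right\Vert^2$, so along a subsequence $\theta_n \to \theta^*$ with $A_{\theta^*} = A_{\hat\theta}$ and $\left\Vert\theta^*\right\Vert^2 = \left\Vert\hat\theta\right\Vert^2$.

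The subtlety is that $\theta^*$ need not equal $\hat\theta$: they may differ by an element of the orthogonal symmetry group acting by $(W_\ell, W_{\ell+1}) \mapsto (W_\ell O_\ell^\top, O_\ell W_{\ell+1})$, which preserves both $A_\theta$ and $\left\Vert\theta\right\Vert^2$. Using the fact that all minimum-norm (balanced) parameterizations of a fixed matrix form a single orbit of this group, I would write $\theta^* = g\cdot\hat\theta$ for some group element $g$ and replace $\theta_n$ by $g^{-1}\cdot\theta_n$; this sequence converges to $\hat\theta$ while preserving $\mathcal{L}_\lambda(g^{-1}\cdot\theta_n) = \mathcal{L}_\lambda(\theta_n) < \mathcal{L}_\lambda(\hat\theta)$, contradicting local minimality. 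The second bullet then follows from the first once ``strict saddle/maximum'' is interpreted as the complement of ``local minimum'' within the critical set: since the first bullet gives local-min $\Leftrightarrow$ local-min, the complementary statement is automatic (with a small amount of care on the matrix side where $C_\lambda$ is non-smooth at rank drops, but balancedness of $\hat\theta$ means the Schatten penalty increases sublinearly from zero in those directions, so no local-min structure is hidden there).

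The main obstacle is the structural lemma used in the harder direction, namely that minimum-norm parameterizations of a fixed matrix $A$ form a single orbit under the orthogonal symmetry group. This should follow from the SVD characterization of balanced parameterizations $W_\ell = U_\ell D^{\nicefrac{1}{L}} V_\ell^\top$ with compatibility $V_\ell = U_{\ell+1}$, but verifying it rigorously — especially when $A_n$ has strictly higher rank than $A_{\hat\theta}$, so some singular values in $\theta_n$ must vanish in the limit — requires a careful SVD-based argument. Balancedness forces all $W_\ell$ to share singular values, which is what makes the rank-collapse limit clean.
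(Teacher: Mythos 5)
Your argument for the first bullet takes a genuinely different route from the paper's. You pick arbitrary balanced parameterizations $\theta_n$ of the competitor matrices $A_n$, extract a convergent subsequence by compactness, and then reconcile the limit $\theta^*$ with $\hat\theta$ via the orbit of the orthogonal symmetry group. The paper instead builds the approximating parameters $\theta_i \to \hat\theta$ \emph{directly}: it chooses SVD decompositions $A_i = U_i S_i V_i^\top$ that converge to the SVD of $A_{\hat\theta}$ and inserts them into the balanced form $W_1 = U_1 S_i^{1/L} V_i^\top$, $W_\ell = U_\ell S_i^{1/L} U_{\ell-1}^\top$, $W_L = U_i S_i^{1/L} U_{L-1}^\top$ using the \emph{fixed interior factors} $U_\ell$ from $\hat\theta$ itself, so that $\theta_i \to \hat\theta$ is manifest by construction. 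This sidesteps your ``main obstacle'' entirely; the lemma that all minimum-norm parameterizations of a given matrix form a single orbit is not needed and is not proved in the paper, and it is delicate precisely in the rank-collapse regime you flag. Your route could be made to work but is strictly harder, and as written it leaves a hole you yourself acknowledge.

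The second bullet is where you have a real gap. You claim it is ``automatic'' once strict saddle/maximum is read as the complement of local minimum within the critical set. That identification is false in general: a critical point can be neither a local minimum nor a strict saddle (a degenerate saddle, e.g.\ $x^3$ at $0$, has no direction of second-order decrease). The paper's notion of strict saddle explicitly requires a negative second-order rate, namely a sequence $\theta_i \to \hat\theta$ with $\lim_i \frac{\mathcal{L}_\lambda(\theta_i)-\mathcal{L}_\lambda(\hat\theta)}{\|\theta_i-\hat\theta\|^2} < 0$, and the dichotomy ``local min or strict saddle'' is a \emph{separate} proposition in the paper that only holds for small $\lambda$ --- it cannot be invoked here, and using it would also risk circularity. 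The second bullet therefore needs its own quantitative argument. The paper supplies it: a parameter-space escape direction pushes forward to a matrix-space one using $C_\lambda(A_{\theta_i}) \leq \mathcal{L}_\lambda(\theta_i)$, $C_\lambda(A_{\hat\theta}) = \mathcal{L}_\lambda(\hat\theta)$, and positivity of the ratio $\|\theta_i-\hat\theta\|^2/\|A_{\theta_i}-A_{\hat\theta}\|_F^2$; conversely, a matrix-space escape direction is pulled back through a local, differentiable, rank-preserving section $\phi$ of the balanced parameterization, using that for $L>2$ the Schatten-$2/L$ penalty rules out second-order escape along rank-increasing directions. None of this appears in your proposal, and the hand-wave about ``sublinear increase of the Schatten penalty'' points in the right direction for ruling out rank-increasing escapes but does not address the forward/backward transfer of second-order rates at all.
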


\begin{proof}
We know that any critical point of the $L_{2}$-regularized loss is
balanced. The parameters $\hat{\theta}=(W_{1},\dots,W_{L})$ are therefore
of the form
\[
W_{\ell}=U_{\ell}S^{\frac{1}{L}}U_{\ell-1}^{T},
\]
for some $d\times d$ diagonal $S$ (where $d=\min\{d_{in},d_{out}\}$)
and $w_{\ell}\times d$ matrices $U_{\ell}$ with orthonormal columns
($U_{\ell}^{T}U_{\ell}=I_{d}$).

(0) For any sequence of matrices $A_{1},A_{2},\dots$ converging to
$A_{\hat{\theta}}$. Given SVD decompositions $A_{i}=U_{i}S_{i}V_{i}^{T}$
(chosen so that $U_{i}$, $S_{i}$ and $V_{i}$ converge to the SVD
decomposition $A_{\hat{\theta}}=U_{L}SU_{0}^{T}$) we can construct
parameters $\theta_{i}$ with weight matrices
\begin{align*}
W_{1} & =U_{1}S_{i}^{\frac{1}{L}}V_{i}^{T}\\
W_{\ell} & =U_{\ell}S_{i}^{\frac{1}{L}}U_{\ell-1}^{T}\\
W_{L} & =U_{i}S_{i}^{\frac{1}{L}}U_{L-1}^{T}.
\end{align*}
We have (1) $A_{\theta_{i}}=A_{i}$, (2) $\theta_{i}\to\hat{\theta}$,
(3) $\left\Vert \theta_{i}\right\Vert ^{2}=L\left\Vert A_{i}\right\Vert _{\nicefrac{2}{L}}^{\nicefrac{2}{L}}$
and therefore $C_{\lambda}(A_{i})=\mathcal{L}_{\lambda}(\theta_{i})$.

(1a) If $\hat{\theta}$ is not a local minimum, there is a sequence
$\theta_{i}\to\hat{\theta}$ with $\mathcal{L}_{\lambda}(\theta_{i})<\mathcal{L}_{\lambda}(\hat{\theta})$,
thus the sequence $A_{i}=A_{\theta_{i}}$ converges to $A_{\hat{\theta}}$
and $C_{\lambda}(A_{i})\leq\mathcal{L}_{\lambda}(\theta_{i})<\mathcal{L}_{\lambda}(\hat{\theta})=C_{\lambda}(A_{\hat{\theta}})$,
implying that $A_{\hat{\theta}}$ is not a local minimum.

(1b) If $A_{\hat{\theta}}$ is not a local minimum, there is a sequence
$A_{i}\to A_{\hat{\theta}}$ with $C_{\lambda}(A_{i})<C_{\lambda}(A_{\hat{\theta}})$,
by point (0), we construct a sequence $\theta_{i}\to\hat{\theta}$
such that $\mathcal{L}_{\lambda}(\theta_{i})=C_{\lambda}(A_{i})<C_{\lambda}(A_{\hat{\theta}})=\mathcal{L}_{\lambda}(\hat{\theta})$,
proving that $\hat{\theta}$ is not a local minimum.

(2a) $\hat{\theta}$ is a strict saddle/maximum if there is a sequence
$\theta_{i}\to\hat{\theta}$ such that 
\[
\lim_{i\to\infty}\frac{\mathcal{L}_{\lambda}(\theta_{i})-\mathcal{L}_{\lambda}(\hat{\theta})}{\left\Vert \theta_{i}-\hat{\theta}\right\Vert ^{2}}<0.
\]
We then have that 
\[
\lim_{i\to\infty}\frac{C_{\lambda}(A_{\theta_{i}})-C_{\lambda}(A_{\hat{\theta}})}{\left\Vert A_{\theta_{i}}-A_{\hat{\theta}}\right\Vert _{F}^{2}}\leq\lim_{i\to\infty}\frac{\mathcal{L}_{\lambda}(\theta_{i})-\mathcal{L}_{\lambda}(\hat{\theta})}{\left\Vert \theta_{i}-\hat{\theta}\right\Vert _{F}^{2}}\frac{\left\Vert \theta_{i}-\hat{\theta}\right\Vert _{F}^{2}}{\left\Vert A_{\theta_{i}}-A_{\hat{\theta}}\right\Vert _{F}^{2}}<0,
\]
since $C_{\lambda}(A_{\theta_{i}})\leq\mathcal{L}_{\lambda}(\theta_{i})$
and $C_{\lambda}(A_{\hat{\theta}})=\mathcal{L}_{\lambda}(\hat{\theta})$
and $\lim_{i\to\infty}\frac{\left\Vert \theta_{i}-\hat{\theta}\right\Vert _{F}^{2}}{\left\Vert A_{\theta_{i}}-A_{\hat{\theta}}\right\Vert _{F}^{2}}$
is strictly positive (though possibly infinite). Therefore $A_{\hat{\theta}}$
must be a strict saddle/maximum.

(2b) $A_{\hat{\theta}}$ is a strict saddle if there is a sequence
$A_{i}\to A_{\hat{\theta}}$ with
\[
\lim_{i\to\infty}\frac{C_{\lambda}(A_{i})-C_{\lambda}(A_{\hat{\theta}})}{\left\Vert A_{i}-A_{\hat{\theta}}\right\Vert _{F}^{2}}<0.
\]
For $L>2$ we may assume that $A_{i}$ has the same rank as $A_{\hat{\theta}}$
for large enough $i$: a matrix cannot be approached with matrices
of strictly lower rank, and if it is approached with a strictly larger
rank the regularization term $\left\Vert A_{i}\right\Vert _{\nicefrac{2}{L}}^{\nicefrac{2}{L}}$
would be strictly larger.

We now construct a sequence $\theta_{i}\to\hat{\theta}$ as in (0).
Consider the map $\phi$ that maps matrices $A=USV^{T}$ in the neighborhood
of $A_{\hat{\theta}}$ with the same rank to the parameters
\begin{align*}
W_{1} & =U_{1}S^{\frac{1}{L}}V^{T}\\
W_{\ell} & =U_{\ell}S^{\frac{1}{L}}U_{\ell-1}^{T}\\
W_{L} & =US^{\frac{1}{L}}U_{L-1}^{T}.
\end{align*}
We have $\phi(A_{i})=\theta_{i}$ and $\phi(A_{\hat{\theta}})=\hat{\theta}$.
And since $\phi$ is differentiable at $A_{\hat{\theta}}$ along directions
that do not change the rank, we have
\[
\lim_{i\to\infty}\frac{\mathcal{L}_{\lambda}(\theta_{i})-\mathcal{L}_{\lambda}(\hat{\theta})}{\left\Vert \theta_{i}-\hat{\theta}\right\Vert ^{2}}=\lim_{i\to\infty}\frac{C_{\lambda}(A_{i})-C_{\lambda}(A_{\hat{\theta}})}{\left\Vert A_{i}-A_{\hat{\theta}}\right\Vert _{F}^{2}}\frac{\left\Vert A_{i}-A_{\hat{\theta}}\right\Vert _{F}^{2}}{\left\Vert \phi\left(A_{i}\right)-\phi\left(A_{\hat{\theta}}\right)\right\Vert ^{2}}<0.
\]
\end{proof}
When the ridge $\lambda$ is small enough, one can guarantee that
all critical points are either local minima or strict saddles:
\begin{prop}
For a convex cost $C(A)$ on matrix $A$ the loss $\mathcal{L}_{\lambda}(\theta)=C(A_{\theta})+\lambda\left\Vert \theta\right\Vert ^{2}$
has only local minima and strict saddles/maxima for all small enough
ridge parameter $\lambda$.
\end{prop}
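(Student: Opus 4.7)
My plan is to first use Proposition \ref{prop:equivalence_local_min_strict_saddles} to reduce the statement to its matrix-space counterpart: for all small enough $\lambda$, every critical point of $C_\lambda(A) := C(A) + \lambda L \|A\|_{2/L}^{2/L}$ is either a local minimum or a strict saddle/maximum. The advantage of working in matrix space is that $C_\lambda$ admits a natural rank stratification that parameter space does not.

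Next, fix a critical point $\hat A$ of $C_\lambda$ of rank $r$ and split perturbations around $\hat A$ into the tangent space $T_{\hat A} M_r$ of the smooth rank-$r$ manifold $M_r$ and its orthogonal complement, the rank-increasing directions. For any unit rank-increasing perturbation $B$, the matrix $\hat A + \epsilon B$ gains new singular values of order $\epsilon$, contributing at least $c\epsilon^{2/L}$ to $\|\cdot\|_{2/L}^{2/L}$. Because $2/L < 1$ whenever $L > 2$, this term dominates the smooth $O(\epsilon^{2})$ contribution from $C$, so $C_\lambda$ strictly increases along every rank-increasing direction. The local-min / strict-saddle classification of $\hat A$ is therefore determined entirely by the restricted Hessian on $M_r$,
\[
H_\lambda \;=\; \nabla^2_{M_r} C(\hat A) \;+\; \lambda L\,\nabla^2_{M_r}\|\cdot\|_{2/L}^{2/L}(\hat A).
\]
If $H_\lambda \succ 0$ then $\hat A$ is a strict local minimum (combining with the rank-increasing step); if $H_\lambda$ has a strictly negative eigenvalue it is a strict saddle. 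The delicate case is $H_\lambda \succeq 0$ with a nontrivial kernel.

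To rule out this degenerate case for all sufficiently small $\lambda$, I would exploit the opposing sign structure of the two pieces of $H_\lambda$. Convexity of $C$ gives $\nabla^2 C \succeq 0$, while a direct SVD computation shows the Schatten Hessian is strictly \emph{negative} along every singular-value perturbation of $\hat A$, because $s \mapsto s^{2/L}$ has strictly negative second derivative on $s > 0$. Any unit kernel direction $v$ of $H_\lambda$ must therefore simultaneously lie in $\ker \nabla^2 C(\hat A)$ and be orthogonal to every strictly-negative eigendirection of the Schatten Hessian. The plan is to show this can be avoided for small $\lambda$ by tracking the family of critical points $\hat A(\lambda)$ as $\lambda \searrow 0$: each such family either collapses to the origin (handled directly from continuity of $\nabla^2 C$) or converges to a critical point of $C$ on some $M_r$, in which case the critical-point equation $\nabla C(\hat A) = -2\lambda L\, U\,\mathrm{diag}(s_i^{2/L-1})\,V^\top$ pins the small singular values of $\hat A$ to powers of $\lambda$, and an implicit-function / perturbation argument in $\lambda$ keeps the eigenvalues of $H_\lambda$ bounded away from zero.

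The main obstacle is this final step. Since $\nabla^2 C$ and the Schatten Hessian cannot in general be simultaneously diagonalized, and since in applications like matrix completion $\nabla^2 C$ vanishes on all matrices supported off the observed entries, the kernel of $\nabla^2 C$ is very large and there are many candidate bad directions $v$ to rule out. I expect the final argument to combine explicit SVD-coordinate formulas for the Schatten Hessian with a careful $\lambda$-perturbation analysis at each critical point, rather than any purely abstract genericity statement.
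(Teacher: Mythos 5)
Your proposal takes a genuinely different route from the paper. You reduce to matrix space via the equivalence proposition and then decompose perturbations at a fixed critical point $\hat A$ into rank-increasing directions (where the non-smoothness of the Schatten quasi-norm forces a strict increase) and tangent directions to $M_r$, placing all the burden on the restricted Hessian $H_\lambda$. The paper instead works in parameter space, exploits balancedness (so $W_\ell = U_\ell S^{1/L}U_{\ell-1}^T$), and splits the analysis not by \emph{direction} but by \emph{critical point}: it tracks a continuous path of critical points $\theta_\lambda$ as $\lambda \searrow 0$ and distinguishes (i) the case where the limiting rank drops from (ii) the case where it is stable. In case (i) the critical-point equation $U_L^\top \nabla C(A_{\theta_\lambda}) U_1 = -2\lambda S^{-(L-2)/L}$ pins the vanishing singular value to $S_{ii}(\lambda)\sim\lambda^{L/(L-2)}$; plugging the direction that perturbs this singular value into the parameter-space Hessian gives a positive smooth contribution of order $\lambda^{2(L-1)/(L-2)}$ minus the regularization contribution $2\lambda L(L-2)$, and since $2(L-1)/(L-2)>1$ for $L>2$ the negative term dominates, producing an explicit strict-saddle direction. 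In case (ii) the singular values are bounded away from zero, and one only needs to check nonnegativity of the Hessian on balanced directions (since an unbalanced escape direction implies a balanced one).

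The payoff of the paper's decomposition is precisely what you identify as your ``main obstacle'': you are trying to rule out a degenerate kernel of $H_\lambda$, i.e.\ to establish strict positive-definiteness of the restricted Hessian, which as you correctly observe is hopeless in matrix completion because $\ker\nabla^2 C$ is very large. The paper never needs this. It only needs to produce \emph{one} negative eigenvalue in the rank-drop case (the scaling argument gives it), and a nonnegativity check along balanced directions in the rank-stable case; it does not attempt to classify a fixed critical point by the definiteness of its restricted Hessian. So the key idea you are missing is not a sharper perturbation argument to close your plan, but the reframing: follow the family $\hat A(\lambda)$ and branch on whether the rank collapses, because in the collapsing branch the $\lambda$-scaling hands you the saddle direction for free. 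Your own sketch already notices that the critical-point equation pins small singular values to $\lambda^{L/(L-2)}$ --- you should use that observation to produce a negative eigenvalue directly rather than to try to certify positivity of the rest of the spectrum.
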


\begin{proof}
Let $\theta$ be a critical point of the $L_{2}$-regularized loss
$\mathcal{L}_{\lambda}$, then $\theta$ satisfies
\[
W_{\ell+1}^{T}\cdots W_{L}^{T}\nabla C(A_{\theta})W_{1}^{T}\cdots W_{\ell-1}^{T}=-2\lambda W_{\ell}
\]
furthermore, since it is balanced, we have $W_{\ell}=U_{\ell}S^{\frac{1}{L}}U_{\ell-1}^{T}$
and therefore
\begin{equation}
U_{L}^{T}\nabla C(A_{\theta})U_{1}=-2\lambda S^{-\frac{L-2}{L}}.\label{eq:criticality_singular_values}
\end{equation}

As $\lambda\searrow0$, the critical points of the loss move continuously
(some critical points may appear or disappear but we can assume $\lambda$
to be small enough so that no such thing happen). Let us consider
a continuous path $\theta_{\lambda}$ of critical points (we will
now write $S(\lambda)$ and $U_{L}(\lambda)$ when we want to emphasize
the dependence on $\lambda$), as $\lambda\searrow0$ they converge
to parameters $\theta_{0}$. 

We now separate in two cases and show:
\begin{enumerate}
\item If the limiting matrix $A_{\theta_{0}}$ has a strictly lower rank
than $A_{\theta_{\lambda}}$ for all (sufficiently small) $\lambda>0$,
then $\theta_{\lambda}$ is a strict saddle for sufficiently small
$\lambda$.
\item If the limiting matrix $A_{\theta_{0}}$ has the same rank as $A_{\theta_{\lambda}}$
, then $\theta_{\lambda}$ is a local minimum for sufficiently small
$\lambda$.
\end{enumerate}
\textbf{(1)} For the first case, let $i$ be the index of the singular
value of $A_{\theta_{\lambda}}$ that vanishes as $\lambda\searrow0$,
i.e. $S_{ii}\searrow0$. By equation \ref{eq:criticality_singular_values},
we know that
\[
S_{ii}(\lambda)=-\left(\frac{2\lambda}{U_{L,i}^{T}(\lambda)\nabla C(A_{\theta_{\lambda}})U_{1,i}(\lambda)}\right)^{\frac{L}{L-2}}=\left(\frac{2\lambda}{\left|U_{L,i}^{T}(\lambda)\nabla C(A_{\theta_{\lambda}})U_{1,i}(\lambda)\right|}\right)^{\frac{L}{L-2}}
\]
The singular value $\left|U_{L,i}^{T}(\lambda)\nabla C(A_{\theta_{\lambda}})U_{1,i}(\lambda)\right|$
of $A_{\theta_{\lambda}}$ must converge as $\lambda\searrow0$ to
a non-zero eigenvalue, which implies that $S_{ii}(\lambda)\sim\lambda^{\frac{L}{L-2}}$.

Let us now study the Hessian of the loss at $\theta_{\lambda}$, using
the double directional derivative along any $d\theta=(dW_{1},\dots,dW_{L})$:
\begin{align*}
\mathcal{H}\mathcal{L}_{\lambda}(\theta_{\lambda})[d\theta,d\theta] & =\sum_{\ell,\ell'=1}^{L}\mathcal{H}C(A_{\theta_{\lambda}})[W_{L}\cdots dW_{\ell}\cdots W_{1},W_{L}\cdots dW_{\ell'}\cdots W_{1}]\\
 & +\sum_{\ell\neq\ell'}\mathrm{Tr}\left[\nabla C(A_{\theta_{\lambda}})W_{1}^{T}\cdots dW_{\ell}^{T}\cdots dW_{\ell'}^{T}\cdots W_{L}^{T}\right]\\
 & +2\lambda\sum_{\ell}\left\Vert dW_{\ell}\right\Vert _{F}^{2}.
\end{align*}
Taking advantage of the balancedness and evaluating along the direction
$dW_{\ell}=U_{\ell,i}(\lambda)U_{\ell-1,i}(\lambda)$, we obtain
\begin{align*}
\mathcal{H}\mathcal{L}_{\lambda}(\theta_{\lambda})[d\theta,d\theta] & =L^{2}S_{ii}(\lambda)^{2\frac{L-1}{L}}\mathcal{H}C(A_{\theta_{\lambda}})[U_{L,i}(\lambda)U_{1,i}(\lambda)^{T},U_{L,i}(\lambda)U_{1,i}(\lambda)^{T}]\\
 & -2\lambda L(L-1)+2\lambda L\\
 & \leq L^{2}\left(\frac{2\lambda}{\left|U_{L,i}^{T}(\lambda)\nabla C(A_{\theta_{\lambda}})U_{1,i}(\lambda)\right|}\right)^{2\frac{L-1}{L-2}}\sup_{A}\left\Vert \mathcal{H}C(A)\right\Vert _{op}-2\lambda L(L-2)
\end{align*}
as $L\searrow0$ the first term is of order $\lambda^{2\frac{L-1}{L-2}}$
which vanishes faster than the second term, thus guaranteeing a negative
eigenvalue of the Hessian $\mathcal{H}\mathcal{L}_{\lambda}(\theta_{\lambda})$
for all sufficiently small $\lambda$.

\textbf{(2)} Let us now consider that the rank of $A_{\theta_{\lambda}}$
does not change as $\lambda\searrow0$, which implies that all eigenvalues
$S_{ii}(\lambda)$ are either zero or lower bounded by some constant
$c$ for all sufficiently small $\lambda$.

We need to show that the Hessian has no negative eigenvalues, it is
sufficient to only check along directions $d\theta$ that keep the
network balanced since unbalanced networks have a strictly larger
parameter norm than a balanced network representing the same matrix
$A_{\theta}$, thus if there exists an escape direction that is unbalanced,
a balanced one must exist too.

Some directions that preserve balancedness are of the form $dW_{\ell}=U_{\ell,i}U_{\ell-1,i}$
for all $i$ (corresponding to changing $S$) in which case
\[
\mathcal{H}\mathcal{L}_{\lambda}(\theta_{\lambda})[d\theta,d\theta]=L^{2}S_{ii}(\lambda)^{2\frac{L-1}{L}}\mathcal{H}C(A_{\theta_{\lambda}})[U_{L,i}(\lambda)U_{1,i}(\lambda)^{T},U_{L,i}(\lambda)U_{1,i}(\lambda)^{T}]-2\lambda L(L-2)
\]
with the first term converging to a finite value and the second term
vanishing as $\lambda\searrow0$.

Other directions that preserves balancedness (corresponding to changing
$U_{\ell}$ for $\ell=1,\dots,L-1,$) are of the form $dW_{\ell}=dU_{\ell}S^{\frac{1}{L}}U_{\ell-1}$,
$dW_{\ell+1}=U_{\ell+1}S^{\frac{1}{L}}dU_{\ell}$ and $dW_{\ell'}=0$,
for any $dU_{\ell}$ such that $U_{\ell}^{T}dU_{\ell}+=-dU_{\ell}^{T}U_{\ell}$.
We have
\[
\partial_{\theta}A_{\theta}[d\theta]=W_{L}\cdots W_{\ell+1}dW_{\ell}\cdots W_{1}+W_{L}\cdots dW_{\ell+1}W_{\ell}\cdots W_{1}=0
\]
since $W_{\ell+1}dW_{\ell}=-dW_{\ell+1}W_{\ell}$. Furthermore
\[
\partial_{\theta}^{2}A_{\theta}[d\theta,d\theta]=2W_{L}\cdots dW_{\ell+1}dW_{\ell}\cdots W_{1}=2U_{L}S^{L-\ell}dU_{\ell}^{T}sU_{\ell}S^{\ell}U_{1}.
\]
Thus
\begin{align*}
\mathcal{H}\mathcal{L}_{\lambda}(\theta_{\lambda})[d\theta,d\theta] & =\mathcal{H}C(A_{\theta_{\lambda}})[\partial_{\theta}A_{\theta}[d\theta],\partial_{\theta}A_{\theta}[d\theta]]\\
 & +\mathrm{Tr}\left[\nabla C(A_{\theta_{\lambda}})\partial_{\theta}^{2}A_{\theta}[d\theta,d\theta]\right]\\
 & +2\lambda\sum_{\ell}\left\Vert dW_{\ell}\right\Vert _{F}^{2}\\
 & =0-4\lambda\mathrm{Tr}\left[S_{ii}^{\frac{2}{L}}dU_{\ell}^{T}dU_{\ell}\right]+4\lambda\mathrm{Tr}\left[S_{ii}^{\frac{2}{L}}dU_{\ell}^{T}dU_{\ell}\right]\\
 & =0
\end{align*}
Finally the directions that correspond to changing $U_{0}$ (changing
$U_{L}$ is analogous), we have $dW_{1}=U_{1}S^{\frac{1}{L}}dU_{0}$
for $U_{0}^{T}dU_{0}+=-dU_{0}^{T}U_{0}$ we get
\begin{align*}
\mathcal{H}\mathcal{L}_{\lambda}(\theta_{\lambda})[d\theta,d\theta] & =\mathcal{H}C(A_{\theta_{\lambda}})[U_{L}SdU_{0}^{T},U_{L}SdU_{0}^{T}]\\
 & +\mathrm{Tr}\left[\nabla C(A_{\theta_{\lambda}})\partial_{\theta}^{2}A_{\theta}[d\theta,d\theta]\right]\\
 & +2\lambda\sum_{\ell}\left\Vert dW_{\ell}\right\Vert _{F}^{2}
\end{align*}
and $\mathcal{H}C(A_{\theta_{\lambda}})[U_{L}SdU_{0}^{T},U_{L}SdU_{0}^{T}]\geq0$
and $2\lambda\sum_{\ell}\left\Vert dW_{\ell}\right\Vert _{F}^{2}\geq0$
while $\mathrm{Tr}\left[\nabla C(A_{\theta_{\lambda}})\partial_{\theta}^{2}A_{\theta}[d\theta,d\theta]\right]=0$
since $\partial_{\theta}^{2}A_{\theta}[d\theta,d\theta]=0$.
\end{proof}

Finally we prove the existence of a minimum with the minimal rank required to fit the data:
\begin{prop}
Consider a matrix completion problem with true matrix $A^{*}$ and
observed entries $I$. As $\lambda\searrow0$, there is a continuous
path of rank $r^*$ local minima $\theta(\lambda)$ of $\mathcal{L}_{\lambda}(\theta)$
such that $\lim_{\lambda\searrow0} C(A_{\theta(\lambda)})=0$.
\end{prop}

\begin{proof}
Let $A(\lambda)$ be a path of global minima of the cost $C_{\lambda}(A)$ restricted to the set of matrices of rank $r^*$ or less. The regularization ensures that the infinimum $\inf_{A : \mathrm{Rank} A \leq r^*} C_\lambda(A)=0$ is attained at a finite matrix $A$. The matrix $A(\lambda)$ is also a (possibly non-global) minimum of the non-restricted
loss (since when going along directions that increase the rank of
$A(\lambda)$, the regularization term increases at a rate of $d^{\nicefrac{2}{L}}$
for $d$ the distance), and there are local minima $\theta(\lambda)$
of $\mathcal{L}_{\lambda}(\theta)$ such that $A(\lambda)=A_{\theta(\lambda)}$. By the definition of the minimal rank $r^*$, we know that $\inf_{A : \mathrm{Rank} A \leq r^*} C_\lambda(A)=0$ and thus $\lim_{\lambda\searrow0} C(A_{\theta(\lambda)})=0$.
\end{proof}

\subsection{Avoiding Rank-underestimating minima}

With a small enough ridge $\lambda$ and learning rate $\eta$, one can guarantee that GD will avoid all rank-underestimating local minima:
\begin{prop}
Given an initialization $\theta_{0}$ such that unregularized ($\lambda=0$)
gradient flow (GF) converges to a global minimum $\theta_{\infty}$
then for $\lambda$ small enough, regularized GF converges to a minimum
that with rank no smaller than $r^*$.
\end{prop}
\begin{proof}
We know that $L_2$-regularized GF $\theta_\lambda(t)$ converges to unregularized GF $\theta(t)$ as $\lambda \searrow 0$. There is thus a time $t_0$ such that for all $\lambda$ small enough, \[
C_\lambda(\theta_\lambda(t_0)) < \inf_{A : \mathrm{Rank} A < r^*} C(A).
\]
Since the loss can only decrease after this time $t_0$, we know that GF will converge to a minima with smaller loss, which implies that it will converge to a solution of rank $r^*$ or more.
\end{proof}

In settings where the infimum $\inf_{A:\mathrm{Rank} A \leq r^*} C(A)$ is not attained, we can define another notion of smallest rank $\tilde{r}^* > r^*$ to be the smallest integer where this infimum is attained. One could wonder under which conditions one can avoid minima with rank $<\tilde{r}^*$. A similar result can be proven, though we require an additional assumption:

\begin{prop}
Given an initialization $\theta_{0}$ such that unregularized ($\lambda=0$)
gradient flow (GF) converges to a global minimum $\theta_{\infty}$
such that the loss is $\beta$-PL in a neighborhood of $\theta_{\infty}$,
then for $\lambda$ small enough, regularized GF converges to a minimum
rank no smaller than $\tilde{r}^*$.
\end{prop}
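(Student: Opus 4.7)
The plan is to use the PL condition to trap the regularized gradient flow in a neighborhood of the local minimum set $M = C^{-1}(0) \cap U$ (where $U$ is the PL neighborhood of $\theta_\infty$), so that the limit $\theta_\lambda^\infty$ has $\mathrm{dist}(\theta_\lambda^\infty, M) \to 0$ as $\lambda \searrow 0$. The key observation is that by definition of $\tilde r^*$, every matrix in $\{A_\theta : \theta \in M\}$ has rank at least $\tilde r^*$ (any finite matrix of smaller rank has $C > 0$ and hence cannot lie in $C^{-1}(0)$). Lower semi-continuity of rank together with its integer-valuedness will then transfer this bound from the image of $M$ to $A_{\theta_\lambda^\infty}$ for all sufficiently small $\lambda$.

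First I would convert PL into a quadratic-growth bound. Setting $C(\theta_\infty) = 0$ WLOG, the standard integration $\frac{d}{dt}\sqrt{C(\theta(t))} \leq -\tfrac{\sqrt{2\beta}}{2}\|\dot\theta(t)\|$ along unregularized GF yields $\|\theta(t) - \theta_\infty\| \leq \sqrt{2C(\theta(t))/\beta}$, and hence $C(\theta) \geq \tfrac{\beta}{2}\,\mathrm{dist}(\theta, M)^2$ on a tube $T_r(M) = \{\theta : \mathrm{dist}(\theta, M) < r\}$ contained in $U$. Next, by Gr\"onwall on the finite interval $[0, t_0]$ the regularized trajectory satisfies $\|\theta_\lambda(t_0) - \theta(t_0)\| = O(\lambda)$, so taking $t_0$ large enough that $\theta(t_0)$ lies deep inside $T_{r/4}(M)$ with $C(\theta(t_0))$ tiny, and then $\lambda$ small enough, gives $\theta_\lambda(t_0) \in T_{r/2}(M)$ with $\mathcal{L}_\lambda(\theta_\lambda(t_0)) < \beta r^2 / 8$.

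The heart of the argument is an escape prevention step: if $t_1 > t_0$ were a first exit time from $T_r(M)$, quadratic growth would force $\mathcal{L}_\lambda(\theta_\lambda(t_1)) \geq C(\theta_\lambda(t_1)) \geq \beta r^2 / 2$, contradicting monotone decrease of $\mathcal{L}_\lambda$ along GF. Once confined to a bounded piece of $T_r(M)$, real-analyticity of $\mathcal{L}_\lambda$ combined with the \L{}ojasiewicz inequality gives convergence to a critical point $\theta_\lambda^\infty$, which by the earlier strict-saddle classification is generically a local minimum. Applying quadratic growth once more gives $\mathrm{dist}(\theta_\lambda^\infty, M)^2 \leq 2\mathcal{L}_\lambda(\theta_\lambda(t_0)) / \beta \to 0$, so every accumulation point of $(A_{\theta_\lambda^\infty})_{\lambda \searrow 0}$ lies in $\{A_\theta : \theta \in M\}$ and therefore has rank at least $\tilde r^*$; lower semi-continuity of rank then forces $\mathrm{Rank}\,A_{\theta_\lambda^\infty} \geq \tilde r^*$ for all sufficiently small $\lambda$.

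I expect the main obstacle to be making the escape step robust to the $\lambda\|\theta\|^2$ perturbation that $\mathcal{L}_\lambda$ adds on top of $C$: the regularization can drag the trajectory tangentially along $M$ toward regions outside the PL neighborhood, so one needs $\lambda$ small relative to $\beta r^2 / \sup_{T_r(M)}\|\theta\|^2$, and one needs $T_r(M) \cap U$ to be bounded (typical but not automatic). A secondary subtlety, handled by working with $\mathrm{dist}(\cdot, M)$ in place of $\|\cdot - \theta_\infty\|$ throughout, is that PL does not force $\theta_\infty$ to be an isolated minimum; since $A_\theta$ takes values with rank $\geq \tilde r^*$ on all of $M$, this generalization costs nothing in the final rank conclusion.
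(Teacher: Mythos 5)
Your proposal is correct and reaches the same conclusion, but it takes a genuinely different route from the paper. The paper delegates the hard quantitative work to a dedicated lemma (Lemma \ref{lem:PL-inequ-L2}), which uses the PL inequality to show that regularized GF drives the $L_2$-regularized loss to $O(\lambda)$ within time $O(\log(1/\lambda))$ while remaining inside the PL ball; after that it simply notes the loss keeps decreasing, extracts boundedness of the limiting minimizers from $\lambda\|\theta_{\infty,\lambda}\|^2\le\mathcal{L}_\lambda(\theta_{\infty,\lambda})=O(\lambda)$, and finishes with a compactness-and-contradiction argument: any sequence of rank-$<\tilde r^*$ limits would have a convergent subsequence whose limit is a finite parameter with $C(A_{\tilde\theta})=0$ and $\mathrm{Rank}\,A_{\tilde\theta}<\tilde r^*$, contradicting the definition of $\tilde r^*$. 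Your version replaces that lemma with a more hands-on confinement argument: you convert PL into quadratic growth around $M=C^{-1}(0)\cap U$, use Gr\"onwall on a fixed finite horizon to put the regularized trajectory deep inside a tube, then use monotonicity of $\mathcal{L}_\lambda$ plus quadratic growth for an exit-time contradiction, and invoke \L{}ojasiewicz to get convergence to a critical point before transferring the rank bound via lower semi-continuity. The two proofs hinge on the same two ideas (PL keeps the loss $O(\lambda)$; zero-loss finite limits have rank $\ge\tilde r^*$ by definition of $\tilde r^*$), but yours is more explicit and self-contained at the cost of a couple of unresolved technical debts that you yourself flag: you need the tube $T_r(M)\cap U$ to be bounded (which the paper's $\lambda\|\theta\|^2$ bound handles for free), and the derivation of $C(\theta)\ge\tfrac{\beta}{2}\mathrm{dist}(\theta,M)^2$ from PL requires the intermediate GF trajectory used in that derivation to remain inside the PL region, a point you should spell out. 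Also, since $M$ need not be closed, an accumulation point of $\theta_\lambda^\infty$ may lie in $\overline{M}\setminus M$; the argument still closes because such a point is a finite parameter in $C^{-1}(0)$, hence represents a finite matrix with $C=0$ and rank $\ge\tilde r^*$, but you should phrase the final step in those terms rather than claiming the accumulation point lies in $\{A_\theta:\theta\in M\}$. With those patches both proofs are valid; the paper's is the more economical packaging, yours the more explicitly quantitative one.
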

\begin{proof}
If we let the ridge $\lambda$ go to zero we have that GF trained
with $\lambda$-weight decay $\theta_{t,\lambda}$ converges to GF
without weight decay: $\theta_{t,\lambda}\to\theta_{t}$ as $\lambda\searrow0$.
We can therefore choose a time $T_{0}$ such that for all small enough
$\lambda$, the ball $B(\theta_{t_{0},\lambda},R_{\lambda}=\sqrt{\mathcal{L}_{\lambda}(\theta)})$
lies in the neighborhood of $\theta_{\infty}$ where the loss is $\beta$-PL.
We can apply Lemma \ref{lem:PL-inequ-L2} to obtain that at time $t_{0}+T_{\lambda}$
the loss will be $O(\lambda)$. Since the loss will only decrease
after that, we know that GF will converge to a local minimum with
$O(\lambda)$ loss.

Let us now assume by contradiction that there is a sequence $\lambda_{1}>\lambda_{2}>\dots$
with $\lambda_{n}\to0$ such that the minimum $\theta_{\infty,\lambda_{n}}$
that GF with ridge $\lambda_{n}$ converges to is rank-underestimating,
i.e $\mathrm{Rank}A_{\theta_{\infty,\lambda_{n}}}<\tilde{r}^{*}$. Since $\lambda_n\left\Vert \theta_{\infty,\lambda_{n}}\right\Vert \leq\mathcal{L}_{\lambda}(\theta_{\infty,\lambda_{n}})=O(\lambda_{n})$,
we know that the $\theta_{\infty,\lambda_{n}}$ are bounded, which
implies the existence of a convergent subsequence that converges to
parameters $\tilde{\theta}$ which by continuity of $\theta\mapsto A_{\theta}$
and $\theta,\lambda\mapsto\mathcal{L}_{\lambda}(\theta)$ satisfies
$\mathrm{Rank}A_{\tilde{\theta}}<\tilde{r}^{*}$ and $\mathcal{L}(\tilde{\theta})=0$,
which is in contradiction with the assumption that $r^{*}$ is the
smallest fitting rank. 
\end{proof}
\begin{lem}
\label{lem:PL-inequ-L2}Let the loss $\mathcal{L}$ satisfy the $\beta$-PL
inequality ($\frac{1}{2}\left\Vert \nabla\mathcal{L}(\theta)\right\Vert ^{2}\geq\beta\mathcal{L}(\theta)$)
in a ball of radius $R=\sqrt{\frac{\mathcal{L}(\theta_{0})+\lambda_{0}\left\Vert \theta\right\Vert ^{2}}{\beta}}$
around initialization $\theta_{0}$ for some $\lambda_{0}$, then
for all $\lambda\leq\lambda_{0}$ there is a time $T_{\lambda}\leq T_{0}-\frac{2}{\beta}\log\lambda$
for some $T_{0}$ such that GF $\theta_{t,\lambda}$ on the $L_{2}$-regularized
loss $\mathcal{L}_{\lambda}$ satisfies $\mathcal{L}_{\lambda}(\theta_{T_{\lambda},\lambda})=\lambda k_{0}$
for some $k_{0}$ that depends continuously on $\beta,\lambda_{0}$
and $\theta_{0}$ only.
\end{lem}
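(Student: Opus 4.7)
The plan is to establish a \emph{shifted} PL inequality on the regularized loss $\mathcal{L}_\lambda$ and then apply a Gronwall argument to the dissipation identity $\dot{\mathcal{L}}_\lambda(\theta_t) = -\|\nabla \mathcal{L}_\lambda(\theta_t)\|^2$. Expanding $\|\nabla \mathcal{L}_\lambda\|^2 = \|\nabla \mathcal{L} + 2\lambda\theta\|^2$ and controlling the cross term by Young's inequality gives $\|\nabla \mathcal{L}_\lambda\|^2 \geq \tfrac{1}{2}\|\nabla \mathcal{L}\|^2 - 4\lambda^2\|\theta\|^2$, and the PL hypothesis on $\mathcal{L}$ then yields $\|\nabla \mathcal{L}_\lambda\|^2 \geq \beta \mathcal{L} - 4\lambda^2\|\theta\|^2$. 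Substituting $\mathcal{L} = \mathcal{L}_\lambda - \lambda\|\theta\|^2$ and bounding $\|\theta_t\|^2 \leq (\|\theta_0\|+R)^2 =: M$ for iterates inside the PL-ball produces the shifted PL inequality
\[
\|\nabla \mathcal{L}_\lambda(\theta_t)\|^2 \;\geq\; \beta\, \mathcal{L}_\lambda(\theta_t) - K\lambda,
\]
where $K = (\beta + 4\lambda_0)M$ depends continuously on $\beta,\lambda_0,\theta_0$ only.

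Plugging this into the dissipation identity and applying Gronwall yields
\[
\mathcal{L}_\lambda(\theta_t) \;\leq\; \mathcal{L}_{\lambda_0}(\theta_0)\,e^{-\beta t} + \tfrac{K}{\beta}\lambda.
\]
Choosing $k_0 = 2K/\beta$, monotonicity of $\mathcal{L}_\lambda(\theta_t)$ along GF together with continuity in $t$ produces a first time $T_\lambda$ with $\mathcal{L}_\lambda(\theta_{T_\lambda,\lambda}) = \lambda k_0$, and the exponential bound reads off $T_\lambda \leq T_0 - \tfrac{1}{\beta}\log\lambda$ for $T_0 = \tfrac{1}{\beta}\log(\beta\mathcal{L}_{\lambda_0}(\theta_0)/K)$, which lies inside the claimed window $T_0 - \tfrac{2}{\beta}\log\lambda$. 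Continuity of $T_0$ and $k_0$ in $(\beta,\lambda_0,\theta_0)$ is inherited from the continuity of $K$ and $\mathcal{L}_{\lambda_0}(\theta_0)$ in those arguments.

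What remains, and is the step I expect to need the most care, is verifying that the trajectory stays inside the ball $B(\theta_0, R)$ throughout $[0, T_\lambda]$ so that the shifted PL inequality is actually available. I would close this by a bootstrap argument: setting $t^\star = \inf\{t : \theta_{t,\lambda}\notin B(\theta_0,R)\}$, the shifted PL inequality holds on $[0, t^\star)$, and for $\Phi(t) = \sqrt{\mathcal{L}_\lambda(\theta_t) - K\lambda/\beta}$ a standard PL-type computation gives $\dot\Phi(t) \leq -\tfrac{\sqrt{\beta}}{2}\|\dot\theta_t\|$, hence
\[
\|\theta_t - \theta_0\| \;\leq\; \int_0^t \|\dot\theta_s\|\,ds \;\leq\; \tfrac{2}{\sqrt{\beta}}\,\Phi(0) \;\leq\; \tfrac{2}{\sqrt{\beta}}\sqrt{\mathcal{L}_{\lambda_0}(\theta_0)} = 2R.
\]
The main technical obstacle is the factor-of-two loss in this naive path-length bound. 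I would absorb it either by interpreting the PL hypothesis as holding on a slightly enlarged ball, or by sharpening the Young step above: splitting on the sign of $\langle\theta,\nabla\mathcal{L}\rangle$ avoids the $\tfrac{1}{2}$ loss when it is non-negative, and when it is negative the regularization drift $-2\lambda\theta$ opposes $-\nabla \mathcal{L}$, which directly shortens the effective path. Once ball containment is verified up to $T_\lambda$, the three pieces above combine to give the lemma.
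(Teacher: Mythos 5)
Your proposal follows essentially the same strategy as the paper: derive a shifted PL inequality for $\mathcal{L}_\lambda$, deduce exponential decay of $\mathcal{L}_\lambda$ down to a level $\Theta(\lambda)$, and then close the argument with a path-length estimate to keep the trajectory inside the PL ball. The differences are cosmetic: you obtain the shifted PL via Young's inequality applied to $\|\nabla\mathcal{L}+2\lambda\theta\|^2$, whereas the paper expands $\bigl(\|\nabla\mathcal{L}\|-2\lambda\|\theta\|\bigr)^2$ and controls the cross term directly (which keeps the leading constant at $2\beta$ rather than your $\beta$); and you bound arc length through the Lyapunov quantity $\Phi(t)=\sqrt{\mathcal{L}_\lambda(\theta_t)-K\lambda/\beta}$, whereas the paper reparametrizes time by the loss level $\tau$ and integrates $\|\nabla\mathcal{L}_\lambda\|^{-1}\,d\tau$. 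These two arc-length arguments are interchangeable. You also correctly note the need for a bootstrap/stopping-time argument to justify that the shifted PL is available along the whole trajectory; the paper does this implicitly via the exit time $T_{R,\lambda}$.

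The one place you flag genuine concern, the factor of $2$ in the arc-length bound, is a real issue, and it is instructive that the paper's own computation has it as well but conceals it by an arithmetic slip: the displayed identity
\[
\int_0^{a-b}\frac{d\tau}{\sqrt{a-\tau}}=\sqrt{a}-\sqrt{b}
\]
should read $2\bigl(\sqrt{a}-\sqrt{b}\bigr)$, so the paper's argument in fact only controls $\|\theta_{T_\lambda,\lambda}-\theta_0\|$ by $2R$ (or $\sqrt{2}R$ with the tighter shifted-PL constant $2\beta$). The clean resolution is not a sharpened Young step but simply to state the hypothesis with the PL inequality holding on a ball of radius $2R$ (equivalently, double the constant in the definition of $R$); this costs nothing where the lemma is applied, since the surrounding Proposition only assumes PL holds in \emph{some} neighborhood of $\theta_\infty$. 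Your alternative suggestion of splitting on the sign of $\langle\theta,\nabla\mathcal{L}\rangle$ does not obviously recover the factor of $2$ in the arc-length integral itself (the loss there comes from $\int(a-\tau)^{-1/2}d\tau$, not from the Young step), so I would drop that direction and adopt the enlarged-ball fix. With that adjustment, your argument is complete and sound; the claimed $T_\lambda\le T_0-\tfrac{2}{\beta}\log\lambda$ is a loose statement and your tighter $T_0-\tfrac{1}{\beta}\log\lambda$ implies it for $\lambda<1$, which is the regime of interest.
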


\begin{proof}
Let $T_{R,\lambda}$ be the first time gradient flow $\theta_{t,\lambda}$
leaves the ball of radius $R$, we will describe the dynamics before
$T_{R,\lambda}$ and then show that $T_{R,\lambda}$ is larger than
the time $T_{\lambda}$ we are interested in.

Inside the ball, we have

\begin{align*}
\left\Vert \nabla\mathcal{L}_{\lambda}(\theta)\right\Vert ^{2} & \geq\left(\left\Vert \nabla\mathcal{L}(\theta)\right\Vert -2\lambda\left\Vert \theta\right\Vert \right)^{2}\\
 & \geq\left(\sqrt{2\beta\mathcal{L}(\theta)}-2\lambda\left\Vert \theta\right\Vert \right)^{2}\\
 & \geq2\beta\mathcal{L}(\theta)-2\sqrt{2\beta\mathcal{L}(\theta)}2\lambda\left\Vert \theta\right\Vert \\
 & \geq2\beta\mathcal{L}_{\lambda}(\theta)-2\lambda\left(2\sqrt{2\beta\mathcal{L}(\theta_{0})}\left(\left\Vert \theta_{0}\right\Vert +R\right)+\beta\left(\left\Vert \theta_{0}\right\Vert +R\right)^{2}\right)\\
 & \geq\beta\left(2\mathcal{L}_{\lambda}(\theta)-\lambda k_{0}\right),
\end{align*}
for $k_{0}=\frac{2}{\beta}\left(2\sqrt{2\beta\mathcal{L}(\theta_{0})}\left(\left\Vert \theta_{0}\right\Vert +R\right)+\beta\left(\left\Vert \theta_{0}\right\Vert +R\right)^{2}\right)$. 

Let $T_{\lambda}$ be the first time that $\mathcal{L}_{\lambda}(\theta_{t,\lambda})=\lambda k_{0}$,
, then for all $t\leq\min\left\{ T_{\lambda},T_{R,\lambda}\right\} $
\begin{align*}
\partial_{t}\mathcal{L}_{\lambda}(\theta_{t,\lambda}) & =-\left\Vert \nabla\mathcal{L}_{\lambda}(\theta_{t,\lambda})\right\Vert ^{2}\\
 & \leq-2\beta\left(\mathcal{L}_{\lambda}(\theta_{t,\lambda})-\frac{\lambda k_{0}}{2}\right)\\
 & \leq-\beta\mathcal{L}_{\lambda}(\theta_{t,\lambda}),
\end{align*}
which implies that $\mathcal{L}_{\lambda}(\theta_{t,\lambda})\leq\mathcal{L}_{\lambda}(\theta_{0})e^{-\beta t}$
and thus that $T_{\lambda}\leq-\frac{\log\lambda}{\beta}-\frac{\log\nicefrac{k_{0}}{2}}{\beta}$
under the condition that this is smaller than $T_{R,\lambda}$.

Let us now show that $T_{R,\lambda}\geq T_{\lambda}$, by showing
that $\left\Vert \theta_{T_{\lambda},\lambda}-\theta_{0}\right\Vert <R$.

\begin{align*}
\left\Vert \theta_{T_{\lambda},\lambda}-\theta_{0}\right\Vert  & \leq\int_{0}^{T_{\lambda}}\left\Vert \nabla\mathcal{L}_{\lambda}(\theta_{t,\lambda})\right\Vert dt\\
 & =\int_{0}^{\mathcal{L}_{\lambda}(\theta_{0})-\lambda\frac{k_{0}}{2}}\left\Vert \nabla\mathcal{L}_{\lambda}(\theta_{t(\tau),\lambda})\right\Vert ^{-1}d\tau
\end{align*}
where we did a change of variable in time to $t(\tau)$ which is chosen
so that $\mathcal{L}_{\lambda}(\theta_{t(\tau),\lambda})=\mathcal{L}_{\lambda}(\theta_{0})-\tau$
which implies that $\partial_{\tau}t(\tau)=\left\Vert \nabla\mathcal{L}_{\lambda}(\theta_{t(\tau),\lambda})\right\Vert ^{-2}$
(so that $\partial_{\tau}\mathcal{L}_{\lambda}(\theta_{t(\tau),\lambda})=-\left\Vert \nabla\mathcal{L}_{\lambda}(\theta_{t(\tau),\lambda})\right\Vert ^{2-2}=-1$
as needed). We can now further bound
\begin{align*}
\int_{0}^{\mathcal{L}_{\lambda}(\theta_{0})-\lambda\frac{k_{0}}{2}}\left\Vert \nabla\mathcal{L}_{\lambda}(\theta_{t(\tau),\lambda})\right\Vert ^{-1}d\tau & \leq\int_{0}^{\mathcal{L}_{\lambda}(\theta_{0})-\lambda\frac{k_{0}}{2}}\frac{1}{\sqrt{\beta\mathcal{L}_{\lambda}(\theta_{t,\lambda})}}d\tau\\
 & =\frac{1}{\sqrt{\beta}}\int_{0}^{\mathcal{L}_{\lambda}(\theta_{0})-\lambda\frac{k_{0}}{2}}\frac{1}{\sqrt{\mathcal{L}_{\lambda}(\theta_{0})-\tau}}d\tau\\
 & =\frac{1}{\sqrt{\beta}}\left(\sqrt{\mathcal{L}_{\lambda}(\theta_{0})}-\sqrt{\lambda\frac{k_{0}}{2}}\right).\\
 & \leq\frac{\sqrt{\mathcal{L}_{\lambda}(\theta_{0})}}{\sqrt{\beta}}\\
 & \leq R.
\end{align*}
\end{proof}

While the PL inequality condition might be unexpected, it is actually satisfied at almost all global minima:

\begin{prop}
Given global minimum $\theta$ of a network with widths $w_{\ell}\geq d_{in}+d_{out}$
for all $\ell=1,\dots,L-1$, then for all $\epsilon>0$ there is a closeby
global minimum $\theta'$, i.e. $\left\Vert \theta-\theta'\right\Vert \leq\epsilon$,
such that the loss satisfies the PL inequality in a neighborhood of
$\theta'$.
\end{prop}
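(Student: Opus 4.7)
The plan is to find a nearby parameter $\theta'\in F := \{\tilde\theta : A_{\tilde\theta}\in\argmin_{A} C(A)\}$ at which the Jacobian $\partial_{\theta'}A_{\theta'}:\mathbb{R}^{\dim\Theta}\to\mathbb{R}^{d_{out}\times d_{in}}$ is surjective. Once such $\theta'$ is located, surjectivity is an open condition, so it persists on a whole neighborhood of $\theta'$, and PL then follows by the chain-rule bound
\[
\|\nabla_{\tilde\theta}\mathcal{L}(\tilde\theta)\|^2 \;=\; \|(\partial_{\tilde\theta}A_{\tilde\theta})^\top\nabla_A C(A_{\tilde\theta})\|^2 \;\ge\; \sigma^2\,\|\nabla_A C(A_{\tilde\theta})\|^2 \;\ge\; 2\sigma^2\beta_0\,\mathcal{L}(\tilde\theta),
\]
where $\sigma>0$ is a uniform lower bound on the smallest nonzero singular value of $\partial_{\tilde\theta}A_{\tilde\theta}$ in the neighborhood (available by continuity and openness of surjectivity) and $\beta_0>0$ is the PL constant of the convex cost $C$ in matrix space (which holds e.g.\ for the MC loss).

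\textbf{Explicit surjective minimum.} First I would construct one $\theta^*\in F$ with surjective Jacobian, using the width hypothesis $w_\ell\ge d_{in}+d_{out}$ crucially. For any $A^*\in\argmin C$, set
\[
W_1^*=\begin{pmatrix}A^*\\ I_{d_{in}}\\ 0\end{pmatrix},\qquad W_\ell^*=\begin{pmatrix}I_{d_{out}} & 0\\ 0 & 0\end{pmatrix}\text{ for } 2\le\ell\le L-1,\qquad W_L^*=\begin{pmatrix}I_{d_{out}} & 0\end{pmatrix},
\]
where the block sizes are compatible because $w_1\ge d_{in}+d_{out}$ and $w_\ell\ge d_{out}$ for each hidden layer. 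A direct computation gives $A_{\theta^*}=A^*$, and the truncated product $W_L^*\cdots W_2^*$ is exactly the selector onto the first $d_{out}$ coordinates of a $w_1$-vector, of rank $d_{out}$. Consequently, varying only $W_1$ already produces every matrix in $\mathbb{R}^{d_{out}\times d_{in}}$ as an infinitesimal change of $A_\theta$, so $\partial_{\theta^*}A_{\theta^*}$ is surjective.

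\textbf{Density of surjective points.} Next I would promote this single example to arbitrary $\theta\in F$ by density. The fiber $F$ is a real algebraic variety (zero locus of the polynomial equations defining $A_\theta\in\argmin C$), and $F_{\mathrm{surj}}\subset F$ is the open subset defined by the non-vanishing of a suitable determinantal minor of $\partial_\theta A_\theta$. Because $\theta^*\in F_{\mathrm{surj}}$, that minor is not identically zero on $F$, so $F\setminus F_{\mathrm{surj}}$ is cut out by additional polynomial equations and forms a proper algebraic subset. An irreducibility/connectedness argument for the fiber $F$ under the width hypothesis (standard for deep matrix factorizations with sufficiently large hidden widths, via the transitive $\mathrm{GL}(w_1)\times\cdots\times\mathrm{GL}(w_{L-1})$ change-of-basis action on the hidden layers) then shows that this proper subset is nowhere dense, so every neighborhood of $\theta$ in parameter space intersects $F_{\mathrm{surj}}$, giving the required $\theta'$ with $\|\theta-\theta'\|\le\epsilon$.

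\textbf{Main obstacle.} The hardest step is the density one, specifically controlling the case where the irreducible component of $F$ containing $\theta$ a priori misses $F_{\mathrm{surj}}$. A constructive fallback is to first exploit the GL-symmetry to put $\theta$ into sandwich form in which one left product $W_L\cdots W_{\ell+1}$ attains rank $d_{out}$, and then to solve the first-order balance constraint $\sum_\ell W_L\cdots W_{\ell+1}\,\Delta_\ell\, W_{\ell-1}\cdots W_1 = 0$ for a small perturbation $\Delta$ that preserves $A_\theta$ while injecting a full-rank block into some layer. The width inequality $w_\ell\ge d_{in}+d_{out}$ is exactly what supplies enough free directions to solve this linear system with $\|\Delta\|$ arbitrarily small, turning $\theta$ into a $\theta'$ that is both close to $\theta$ and surjective.
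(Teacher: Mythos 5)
The high-level strategy is the same as the paper's (find a nearby $\theta'$ in the fiber $\{A_{\theta}\in\argmin C\}$ at which some rank condition holds, then bound the gradient using the smallest nonzero singular value of the output-map Jacobian), and your chain-rule PL computation and the explicit block-form example $\theta^{*}$ are both correct. However, the step you yourself flag as the hardest --- producing a surjective point \emph{near the given} $\theta$ --- is genuinely gapped, and the route you outline does not obviously close it. The irreducibility argument would need the fiber $F$ to be irreducible, but $F$ is a union over all minimizers $\hat A\in\argmin C$ of the sets $\{\theta:A_{\theta}=\hat A\}$, and $\argmin C$ can be an affine subspace, so $F$ need not be irreducible; worse, your $\theta^{*}$ may sit over a different $\hat A$ than the given $\theta$, so nothing guarantees the determinantal minor that vanishes on a neighborhood of $\theta$ is nonzero somewhere on the \emph{same} component. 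The appeal to the $\mathrm{GL}(w_{1})\times\dots\times\mathrm{GL}(w_{L-1})$ action is also off: that action preserves the ranks of all partial products $W_{k}\cdots W_{j}$ (it conjugates them), hence preserves the rank of $\partial_{\theta}A_{\theta}$, so it cannot move a non-surjective $\theta$ to a surjective one and cannot be used to argue the surjective locus is dense in $F$.

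The paper bypasses all of this with a one-shot local perturbation that never leaves a small ball around $\theta$: assuming $d_{in}\le d_{out}$, it perturbs the partial product $B:=W_{L-1}\cdots W_{1}$ by a matrix whose columns lie in $\ker W_{L}=(\mathrm{Im}\,W_{L}^{T})^{\perp}$; since $w_{L-1}\ge d_{in}+d_{out}$ this kernel has dimension at least $d_{in}$, so an arbitrarily small such perturbation makes $B$ full column rank while leaving $A_{\theta}=W_{L}B$ unchanged (hence $\theta'$ is still a global minimum). With $\sigma_{\min}(B)\ge\lambda>0$ locally, only the $W_{L}$-component of the gradient, $\frac{1}{N}[M\odot(A^{*}-A_{\theta})]B^{T}$, is needed: its norm is at least $\frac{\lambda}{N}\|M\odot(A^{*}-A_{\theta})\|_{F}$, giving PL with constant $\beta=\tfrac{2\lambda^{2}}{N}$. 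This is essentially your ``constructive fallback'' made precise, except the paper perturbs to control $\mathrm{rank}(W_{L-1}\cdots W_{1})$ rather than $\mathrm{rank}(W_{L}\cdots W_{\ell+1})$, which is the right partial product to target since its full rank already forces surjectivity of the Jacobian through the $W_{L}$-block. If you drop the algebraic-geometry density detour and instead formalize the kernel-perturbation argument (being careful to realize the perturbation of the product $B$ as small perturbations of the individual $W_{1},\dots,W_{L-1}$), your write-up would match the paper.
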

\begin{proof}
W.l.o.g., let us assume that $d_{in}\leq d_{out}$, then it is possible
to change the parameters infinitesimally to make $W_{L-1}\cdots W_{1}$
full rank while keeping the outputs $A_{\theta}=W_{L}\cdots W_{1}$
unchanged (by only changing $W_{L-1}\cdots W_{1}$ orthogonally to
$\mathrm{Im}W_{L}^{T}$ which is possible since $w_{L-1}\geq d_{in}+d_{out}$).

We now choose a neighborhood of $\theta'$ such that the smallest
singular value of $W_{L-1}\cdots W_{1}$ is lower bounded by some
$\lambda>0$. For any parameters $\theta$ in this neighborhood, the
loss satisfies the $\beta=\frac{2\lambda^{2}}{N}$-PL inequality:
\begin{align*}
\left\Vert \nabla\mathcal{L}(\theta)\right\Vert ^{2} & =\frac{1}{N^{2}}\sum_{\ell}\left\Vert W_{\ell+1}^{T}\cdots W_{L}^{T}\left[M\odot(A^{*}-A_{\theta})\right]W_{1}^{T}\cdots W_{\ell-1}^{T}\right\Vert ^{2}\\
 & \geq\frac{1}{N^{2}}\left\Vert \left[M\odot(A^{*}-A_{\theta})\right]W_{1}^{T}\cdots W_{L-1}^{T}\right\Vert ^{2}\\
 & \geq\frac{\lambda^{2}}{N^{2}}\left\Vert M\odot(A^{*}-A_{\theta})\right\Vert ^{2}\\
 & =\frac{2\lambda^{2}}{N}\mathcal{L}(\theta).
\end{align*}
 \end{proof}

The PL-inequality is typically satisfied in the NTK regime \cite{jacot2018neural, liu2020_NTK-PL-inequ}, but in the Saddle-to-Saddle regime it seems that GF converges to the vicinity of a minima that does not satisfy the PL inequality (minima that are balanced and low-rank typically do not satisfy it) \cite{li2020towards,jacot-2021-DLN-Saddle}, so that the PL-inequality might only be satisfied in a small neighborhood and with a small constant $\beta$. This suggests that in settings where the two notions of minimal rank $r^*$ and $\tilde{r}^*$ do not agree, the question of which minima GF converges to might be dependent on the regime of training we are in, with the NTK regime leading to a rank no less than $\tilde{r}^*$ and the Saddle-to-Saddle regime leading to a rank no less than $r^*$ at least for reasonable values of $\lambda$.

\section{Low Rank Bias} \label{sec:low-rank-bias}

In Theorem 5, there are two statements: (1) if $\theta_t\in B_{r,\varepsilon_{1},\varepsilon_{2},C}$ then $\theta_{t+1}\in B_{r,\varepsilon_{1},\varepsilon_{2},C}$ and (2) with a positive probability such that there exists a time $T$ such that $\theta_T\in B_{r,\varepsilon_{1},\varepsilon_{2},C}$. The following theorems give the formal expression of the two statements.

\begin{theorem}
\label{thm:main}
For weight $W_l$,$\ l=1,\dots,L$, $L\ge 3$, let $$B_{C,\varepsilon_1}:=\{\theta: \|W_l\|_F^2\le C,\ \|W_lW_l^\top-W_{l+1}^\top W_{l+1}\|_2\le \varepsilon_1,\ l=1,\dots ,L \},$$
where $C\ge C_1/2\lambda$. Define $F_\alpha(x)=\sum_{i=1}^d f_\alpha(x_i)$ for any $x\in\mathbb{R}^d$, where
$$
f_\alpha(x)=
\left\{ 
\begin{aligned}
&\frac{1}{\alpha^2}x(2\alpha-x),\ &x\le \alpha\\
&1,\ &x>\alpha
\end{aligned}
\right.
$$
Denote
$$B_{r,\varepsilon_2}:=\{\theta:F_\alpha\circ \sigma(W_l^\top W_l)\le r+\varepsilon_2,\ l=1,\dots,L\},$$
where $\sigma$ maps a matrix to its singular values and $\alpha\le\left(\frac{\lambda^2}{2(C_1+C^L)}\right)^\frac{1}{L-2}$. 
Then for any $\varepsilon_1,\varepsilon_2>0$ such that $\varepsilon_2<1/2$ and $\sqrt{\varepsilon_1}\le \frac{\lambda\alpha\varepsilon_2}{32nL(r+1)C^{\frac{L-1}{2}}\sqrt{2(C_1+C^L)}}$, if $\theta(t)\in B:=B_{C,\varepsilon_1}\cap B_{r,\varepsilon_2}$, then stochastic gradient descent iteration with learning rate $\eta\le \min\left\{\frac{C_1}{4(2(C_1+C^L)C^{L-1}+\lambda^2 C)},\frac{2\lambda\varepsilon_1}{4(C_1+C^L)C^{L-1}+\lambda^2\varepsilon_1},\frac{\lambda\alpha\varepsilon_2}{32n(r+1)(2(C_1+C^L)C^{L-1}+\lambda^2 C)},\frac{2(r+1)}{\lambda}\right\}$ satisfies $\theta(t+1)\in B$, where $n$ is the maximal widths and heights of weight matrices. 
\end{theorem}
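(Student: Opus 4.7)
The plan is to verify one-by-one that each of the three defining conditions of $B$ is preserved by a single SGD step, exploiting that the explicit regularization drift $-2\eta\lambda\theta$ is contractive on each relevant quantity while the stochastic data gradient contributes only a controlled perturbation. Writing one step as $W_l \mapsto (1-2\eta\lambda) W_l - \eta g_l$ with
\[
g_l = (A_{\theta, i_t j_t} - A^*_{i_t j_t})\, P_l^\top e_{i_t} e_{j_t}^\top Q_l^\top, \qquad P_l := W_L\cdots W_{l+1},\ Q_l := W_{l-1}\cdots W_1,
\]
the assumption $\|W_k\|_F^2 \le C$ gives a uniform estimate $\|g_l\|_F^2 \le 2(C_1+C^L)\,C^{L-1}$, where $C_1$ is a constant depending only on $\|A^*\|_{\max}$ and matches the constant appearing in the hypotheses of the theorem. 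Every subsequent estimate will compare the $O(\eta\lambda)$ contractive drift against the $O(\eta)$ cross terms and the $O(\eta^2)$ remainders coming from the stochastic gradient.

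For the norm bound $\|W_l^{(t+1)}\|_F^2 \le C$, I expand the square and apply Cauchy--Schwarz to the inner product $\langle W_l, g_l\rangle$; the hypothesis $C \ge C_1/(2\lambda)$ places the fixed point of the resulting scalar recursion below $\sqrt{C}$, and the stated upper bound on $\eta$ absorbs the $\eta^2 \|g_l\|_F^2$ overshoot. For approximate balancedness, the essential algebraic fact is the identity
\[
W_l^\top g_l \;=\; g_{l-1}\, W_{l-1}^\top,
\]
which follows immediately from $W_l^\top P_l^\top = P_{l-1}^\top$ and $Q_{l-1}^\top W_{l-1}^\top = Q_l^\top$. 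Expanding the post-step versions of $W_l^\top W_l$ and $W_{l-1} W_{l-1}^\top$ and subtracting, the linear-in-$\eta$ cross terms cancel \emph{exactly}, leaving
\[
(W_l^\top W_l - W_{l-1} W_{l-1}^\top)^{(t+1)} = (1-2\eta\lambda)^2\,(W_l^\top W_l - W_{l-1} W_{l-1}^\top)^{(t)} + \eta^2\bigl(g_l^\top g_l - g_{l-1} g_{l-1}^\top\bigr).
\]
Bounding the last bracket by $2\max_k \|g_k\|_F^2$ and invoking the stated upper bound $\eta \le \frac{2\lambda\varepsilon_1}{4(C_1+C^L)C^{L-1}+\lambda^2 \varepsilon_1}$ returns the right-hand side to spectral radius at most $\varepsilon_1$.

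The approximate-rank invariant is the real work. Viewing $F_\alpha \circ \sigma$ as a smooth symmetric function of the PSD spectrum, its matrix gradient at $M = V\Lambda V^\top$ is $V f_\alpha'(\Lambda) V^\top$, which is supported only on eigenvalues $\le \alpha$ (since $f_\alpha'(\lambda) = 2(\alpha-\lambda)/\alpha^2$ on $[0,\alpha]$ and vanishes thereafter). A second-order Taylor expansion of $F_\alpha \circ \sigma$ along the step $\Delta(W_l^\top W_l) = -4\eta\lambda(1-\eta\lambda)\,W_l^\top W_l - 2\eta(1-2\eta\lambda)\operatorname{sym}(W_l^\top g_l) + \eta^2 g_l^\top g_l$ produces three contributions. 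The regularization piece gives $-4\eta\lambda(1-\eta\lambda)\sum_{\lambda_i \le \alpha} f_\alpha'(\lambda_i)\,\lambda_i$, a strictly negative contractive drift. The $O(\eta^2)$ remainder is controlled by $\eta^2 \|g_l\|_F^2/\alpha^2$ using the $O(1/\alpha^2)$ Lipschitz constant of $\nabla (F_\alpha\circ\sigma)$. The dangerous term is the cross piece $-2\eta(1-2\eta\lambda)\,\langle V f_\alpha'(\Lambda) V^\top,\,\operatorname{sym}(W_l^\top g_l)\rangle$. Here I exploit two structural facts: on a direction $v$ with $\lambda_i(W_l^\top W_l) \le \alpha$ one has $\|W_l v\| \le \sqrt{\alpha}$, contributing a $\sqrt{\alpha}$ factor; and the approximate-balancedness residue $\varepsilon_1$ controls the discrepancy between the small-eigenspaces of $W_l^\top W_l$ and $W_{l-1} W_{l-1}^\top$, so invoking the identity $W_l^\top g_l = g_{l-1} W_{l-1}^\top$ to swap representations costs at most an additive $\sqrt{\varepsilon_1}$ leakage. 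The precise thresholds on $\alpha$, $\varepsilon_1$, $\eta$, and $\varepsilon_2$ in the theorem statement are exactly the numerical conditions under which the contractive drift dominates the sum of the cross and quadratic bounds.

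The main obstacle lies in this final step: the bound $|f_\alpha'| \le 2/\alpha$ threatens to inject $\alpha^{-1}$ factors that would blow up as $\alpha\searrow 0$, and the only remedy is to trade those factors against the $\sqrt{\alpha}$ smallness of $W_l$ on its small-singular subspace and the $\sqrt{\varepsilon_1}$ smallness of the balancedness residue. Once the accounting is organized so that the final inequality takes the schematic form ``negative $\cdot\,\eta\lambda$ + $O(\eta\sqrt{\varepsilon_1}/\alpha) + O(\eta^2/\alpha^2) \le 0$'', the thresholds on $\sqrt{\varepsilon_1}\le \frac{\lambda\alpha\varepsilon_2}{32nL(r+1)C^{(L-1)/2}\sqrt{2(C_1+C^L)}}$ and the corresponding bound on $\eta$ fall out by solving this inequality. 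Collecting the three contributions yields $F_\alpha\circ\sigma\bigl((W_l^\top W_l)^{(t+1)}\bigr) \le F_\alpha\circ\sigma\bigl((W_l^\top W_l)^{(t)}\bigr)$, so invariant (III) is preserved, which together with the norm and balancedness steps gives $\theta_{t+1}\in B$.
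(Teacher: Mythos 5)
Your high-level plan (verify each of the three invariants separately, with the $L_2$-regularization drift providing contraction and everything else being a controlled perturbation) matches the paper, and your balancedness step is essentially identical to Proposition \ref{prop:balance}: the identity $W_l^\top g_l = g_{l-1} W_{l-1}^\top$ is exactly the observation $T_l W_l^\top = W_{l+1}^\top T_{l+1}$ that makes the $O(\eta)$ cross terms cancel. However, there are two genuine gaps where the paper relies on specific structural cancellations that your outline replaces with generic estimates that are too lossy.

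First, the norm bound. You propose to control $\langle W_l, g_l\rangle$ by Cauchy--Schwarz, giving $|\langle W_l, g_l\rangle| \le \sqrt{C}\cdot\sqrt{2(C_1+C^L)\,C^{L-1}}$, which scales like $C^{L/2}$. With that bound, keeping $\|W_l\|_F^2\le C$ stable would require $\lambda C \gtrsim C^{L/2}\sqrt{C_1+C^L}$, i.e.\ $\lambda^2 \gtrsim (C_1+C^L)C^{L-2}$ --- incompatible with $L\ge 3$ and $C \gtrsim C_1/\lambda$. The constant $C_1/(2\lambda)$ in the theorem is calibrated to the much stronger bound the paper actually uses: since $G_{\theta,ij}$ is a rank-one matrix supported on a single entry,
\[
\mathrm{Tr}(W_l^\top T_l)=\mathrm{Tr}(A_\theta^\top G_{\theta,ij})=A_{\theta,ij}\bigl(A_{\theta,ij}-A^*_{ij}\bigr)\ \ge\ -\tfrac14 (A^*_{ij})^2\ \ge\ -\tfrac14 C_1,
\]
by completing the square. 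This lower bound is \emph{independent of $C$}, which is what makes the fixed point of the scalar recursion sit below $C$ under the stated hypotheses. Cauchy--Schwarz throws away both the telescoping to a scalar and the sign structure of $x(x-a)$.

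Second, the $O(\eta^2)$ remainder in the approximate-rank step. You bound it by $\eta^2\|g_l\|_F^2/\alpha^2$ using the $O(1/\alpha^2)$ Lipschitz constant of $\nabla(F_\alpha\circ\sigma)$. But the useful drift is only of size $\eta\lambda\varepsilon_2/(r+1)$, and the theorem's learning rate is $\eta\sim\lambda\alpha\varepsilon_2/(\cdots)$; plugging in, your remainder beats the drift by a factor $1/\alpha$, so the argument fails at the claimed $\eta$ scale (it would only close with the stronger requirement $\eta\lesssim\alpha^2$, a strictly weaker theorem). The paper avoids the extra $1/\alpha$ by a structural observation: using Lemma \ref{lemma:2-derivative} and the concavity of $f_\alpha$ ($f_\alpha''\le 0$), both the diagonal block $\nabla^2 F_\alpha(\sigma(\cdot))$ and the off-diagonal matrix $\mathcal{A}(\sigma(\cdot))$ have nonpositive entries, so the second-order Hessian correction $\nabla^2(F_\alpha\circ\sigma)[\eta\Delta,\eta\Delta]$ is $\le 0$ and can be dropped entirely. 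The only surviving $\eta^2$ contribution is the gradient (not Hessian) applied to the $\eta^2$ piece of $\Delta$, which costs only $\sum_i f_\alpha'(s_i^2) \le 2n/\alpha$, i.e.\ a single factor of $1/\alpha$. Without this sign control, the thresholds in the theorem statement cannot be reproduced. Relatedly, your treatment of the cross term $\langle\nabla(F_\alpha\circ\sigma),\operatorname{sym}(W_l^\top g_l)\rangle$ as ``one $\sqrt{\alpha}$ factor plus a $\sqrt{\varepsilon_1}$ leakage'' is more optimistic than what is actually needed: the paper has to propagate balancedness across all $L$ layers via the expansion $(W_kW_k^\top)^k=(W_{k+1}^\top W_{k+1})^k+E_k$, ultimately producing $s_i^L$ (not $s_i$ or $\sqrt{\alpha}$), and then the condition $\alpha\le(\lambda^2/2(C_1+C^L))^{1/(L-2)}$ is exactly what ensures $\sqrt{2(C_1+C^L)}\,s_i^L\le\lambda s_i^2$ on the support of $f_\alpha'$. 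Your sketch skips this layerwise telescoping, which is where the specific exponent $1/(L-2)$ in the threshold on $\alpha$ comes from.
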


\begin{theorem}
\label{thm:travel}
For any initialization $\theta_0$, denote $C_0:=\max_{1\le l\le L} \|W_l\|_F^2$, if $\eta\le\min\left\{\frac{C_1}{4(2(C_1+C_0^L)C_0^{L-1}+\lambda^2 C_0)},\frac{\lambda\varepsilon_1}{4(C_1+C^L)C^{L-1}+2\lambda^2 C}\right\}$ and $C\ge\frac{C_1}{\lambda},$ then for any time $T=T_0+T_1$ satisfying $T_0\ge \frac{\log(2C_0/\varepsilon_1)}{\eta\lambda}$ and $T_1\ge \frac{\log\left((4(n-r)C)/(\alpha\varepsilon_2)\right)}{2\eta\lambda}$, we have
     $$\mathbb{P}(\theta_{T}\in B_{r,\varepsilon_1,\varepsilon_2,C})\ge \left(\frac{r}{\min\{d_{in},d_{out}\}}\right)^{T_1}.$$
\end{theorem}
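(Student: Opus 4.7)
The plan is to split $T = T_0 + T_1$ into two phases that address the two conjuncts in the definition of $B_{r,\varepsilon_1,\varepsilon_2,C}$ separately. Phase 1, of length $T_0$, drives the iterate deterministically into the balanced bounded set $B_{C,\varepsilon_1}$; Phase 2, of length $T_1$, drives it into the approximate-rank set $B_{r,\varepsilon_2}$ on a specific low-probability sampling event whose likelihood is exactly $(r/\min\{d_{in},d_{out}\})^{T_1}$.

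For Phase 1, I will exploit the classical identity that the gradient of the data-fitting term $C(A_\theta)$ alone preserves the pairwise quantities $W_\ell W_\ell^\top - W_{\ell+1}^\top W_{\ell+1}$, while the $L_2$ term contributes exactly $-2\lambda(W_\ell W_\ell^\top - W_{\ell+1}^\top W_{\ell+1})$ in the gradient-flow limit. One SGD step therefore multiplies each balancedness error by a factor at most $(1-2\eta\lambda)$ up to an $O(\eta^2)$ stochastic remainder that can be bounded in terms of $C$. The hypothesis $\eta \leq \lambda\varepsilon_1/\bigl(4(C_1+C^L)C^{L-1} + 2\lambda^2 C\bigr)$ absorbs this remainder, so after $T_0 \geq \log(2C_0/\varepsilon_1)/(\eta\lambda)$ steps the balancedness error is at most $\varepsilon_1$. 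A parallel induction shows the norm bound $\|W_\ell\|_F^2 \leq C$ is maintained throughout, using $C \geq C_1/\lambda$ to ensure the $L_2$ contraction dominates the per-step gradient push once the norms are large.

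For Phase 2, assume without loss of generality $d_{out} \leq d_{in}$ (the symmetric case operates on columns of $W_1$ instead of rows of $W_L$). Pigeonhole over the uniform distribution on $I$ picks a set $R \subseteq \{1,\dots,d_{out}\}$ of size $r$ for which the single-step probability of drawing an entry with row index in $R$ is at least $r/d_{out}$. Let $E$ be the event that $i_s \in R$ for every $s \in \{T_0,\dots,T-1\}$; then $\mathbb{P}(E) \geq (r/d_{out})^{T_1}$, matching the target bound. Conditional on $E$, for every $i \notin R$ the row $e_i^\top W_L$ receives zero contribution from the loss gradient (since $e_i^\top e_{i_s} = 0$), so it undergoes only the $L_2$ shrinkage, being multiplied by exactly $(1-2\eta\lambda)$ per step. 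Writing $W_L = V_1 + V_2$ with $V_1, V_2$ supported on rows in $R, R^c$ respectively, the disjoint row supports give the orthogonal decomposition $W_L^\top W_L = V_1^\top V_1 + V_2^\top V_2$ with $\mathrm{rank}(V_1^\top V_1) \leq r$, so Weyl's inequality bounds the sum of the bottom $d_{out}-r$ singular values of $W_L^\top W_L$ by $\|V_2\|_F^2$. The choice $T_1 \geq \log(4(n-r)C/(\alpha\varepsilon_2))/(2\eta\lambda)$ drives $\|V_2\|_F^2$ small enough that, combined with $f_\alpha(x) \leq 2x/\alpha$ for $x \leq \alpha$, one gets $F_\alpha(\sigma(W_L^\top W_L)) \leq r + \varepsilon_2$. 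Approximate balancedness, preserved throughout Phase 2 by Theorem \ref{thm:main}, then propagates this rank bound to every layer, using the quantitative hypothesis relating $\varepsilon_1$, $\alpha$, and $\varepsilon_2$ that Theorem \ref{thm:main} already imposes.

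The main obstacle will be running the two arguments in parallel during Phase 2: the $r$ active rows of $W_L$ and all rows of $W_1,\dots,W_{L-1}$ continue to move under SGD, so I must verify pathwise that (i) $\|W_\ell\|_F^2 \leq C$ is never violated, otherwise Weyl's inequality loses its constant and the $f_\alpha(x) \leq 2x/\alpha$ regime breaks, and (ii) balancedness continues to contract despite the biased conditioning on $E$. Both are covered by Theorem \ref{thm:main}, but one has to notice that its one-step invariance holds trajectory-wise for every realization of the sample index, so conditioning on $E$ does not invalidate it. The clean decoupling of the rows in $R^c$ from the loss gradient is the structural feature that makes the second phase work, and matching up the exponent $2\eta\lambda$ in the decay rate of $\|V_2\|_F^2$ with the tolerance $\alpha\varepsilon_2$ in the definition of $F_\alpha$ is what fixes the length $T_1$ in the theorem.
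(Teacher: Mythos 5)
Your proposal follows essentially the same two-phase argument as the paper: Phase 1 contracts into $B_{C,\varepsilon_1}$ via the $L_2$ shrinkage, and Phase 2 conditions on the event that all $T_1$ sampled indices come from a fixed set of $r$ rows/columns so that the complementary part of one boundary layer decays geometrically, with approximate balancedness then transferring the small-singular-value bound to every other layer. The only cosmetic difference is that you work with rows of $W_L$ under the assumption $d_{out}\le d_{in}$, whereas the paper works with columns of $W_1$ under $d_{in}\le d_{out}$ (explicitly noting the other case is symmetric); note also the per-step shrinkage factor used in the appendix is $(1-\eta\lambda)$, not $(1-2\eta\lambda)$, which only shifts constants in $T_1$.
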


\section{Preliminaries of Proofs} \label{sec:preliminaries}
\subsection{Facts in Linear Algebra}
\begin{fact}
\label{fact:norm inequ}
$\|AB\|_*\le \|A\|_*\|B\|_*$, where $\|\cdot\|_*$ represents Frobenius norm or 2 norm.
\end{fact}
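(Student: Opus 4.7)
The fact consists of two sub-multiplicativity statements, one for the Frobenius norm and one for the spectral (operator $2$-) norm, and my plan is to handle them separately by the standard textbook arguments; I do not anticipate any real obstacle, since both follow from elementary inequalities applied once or twice.

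For the spectral case, the plan is to use the operator-norm characterization $\|M\|_2=\sup_{\|x\|_2=1}\|Mx\|_2$ and apply the defining inequality twice. For any unit vector $x$ one has $\|ABx\|_2\le\|A\|_2\|Bx\|_2\le\|A\|_2\|B\|_2\|x\|_2$, and taking the supremum over $\|x\|_2=1$ immediately yields $\|AB\|_2\le\|A\|_2\|B\|_2$. No additional machinery is needed.

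For the Frobenius case, my plan is an entrywise Cauchy--Schwarz argument. I would write each entry of the product as an inner product of the $i$-th row of $A$ with the $j$-th column of $B$, apply Cauchy--Schwarz to obtain $(AB)_{ij}^2\le\|A_{i,:}\|_2^2\,\|B_{:,j}\|_2^2$, and then sum over $(i,j)$; the double sum factors as $\bigl(\sum_i\|A_{i,:}\|_2^2\bigr)\bigl(\sum_j\|B_{:,j}\|_2^2\bigr)=\|A\|_F^2\|B\|_F^2$. A slightly sharper alternative, which I would record as a remark, is the column-by-column estimate $\|AB\|_F^2=\sum_j\|AB_{:,j}\|_2^2\le\|A\|_2^2\|B\|_F^2$ combined with $\|A\|_2\le\|A\|_F$, which is the form most often invoked later in the paper when one of the two factors is a weight matrix with controlled spectral norm.

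The only subtlety worth flagging is purely notational: the symbol $\|\cdot\|_*$ used in the statement is a placeholder that ranges over $\{\|\cdot\|_F,\|\cdot\|_2\}$, and should not be confused with the nuclear norm denoted by the same symbol elsewhere in the paper. I would therefore open the proof by fixing $\|\cdot\|_*\in\{\|\cdot\|_F,\|\cdot\|_2\}$ and then dispatching the two cases separately as above.
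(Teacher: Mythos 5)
Your proof is correct. The paper states this as a standard fact without proof, so there is no in-paper argument to compare against; both of your arguments (the operator-norm supremum for the spectral case and entrywise Cauchy--Schwarz for the Frobenius case) are the usual textbook ones and are complete. Your remark about not confusing $\|\cdot\|_*$ here with the nuclear norm used earlier in the paper is a sensible precaution, as is the observation that the mixed bound $\|AB\|_F\le\|A\|_2\|B\|_F$ is often the form actually needed.
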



\begin{fact}
    \label{fact:trace}
    For matrices $A,B$ satisfy $AB$ is square, we have $|\mathrm{Tr}(AB)|\le \|A\|_F\|B\|_F.$
\end{fact}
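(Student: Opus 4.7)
The plan is to recognize $\mathrm{Tr}(AB)$ as the Frobenius inner product of two matrices and then invoke Cauchy--Schwarz; no real obstacle exists here since this is a standard one-line result in linear algebra, and the statement is used merely as a preliminary tool for the later trace manipulations in the proofs of Theorems \ref{thm:main} and \ref{thm:travel}.

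First, I would unfold the trace entrywise. Writing $A \in \mathbb{R}^{m \times n}$ and $B \in \mathbb{R}^{n \times m}$ (so that $AB$ is the square $m \times m$ matrix required by the hypothesis), one has
\[
\mathrm{Tr}(AB) \;=\; \sum_{i=1}^{m} (AB)_{ii} \;=\; \sum_{i=1}^{m}\sum_{j=1}^{n} A_{ij} B_{ji} \;=\; \sum_{i,j} A_{ij}(B^\top)_{ij} \;=\; \langle A, B^\top\rangle_F,
\]
where $\langle X, Y\rangle_F := \sum_{i,j} X_{ij} Y_{ij}$ is the standard Frobenius inner product, with induced norm $\|X\|_F = \sqrt{\langle X, X\rangle_F}$. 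The only subtlety worth noting is the transpose on $B$: since the second index of $A$ is summed against the first index of $B$, the natural pairing is between $A$ and $B^\top$, not $A$ and $B$.

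Second, I would apply Cauchy--Schwarz to this Euclidean inner product, yielding $|\langle A, B^\top\rangle_F| \le \|A\|_F \|B^\top\|_F$. Using the invariance of the Frobenius norm under transposition, $\|B^\top\|_F = \|B\|_F$, one immediately obtains the claim $|\mathrm{Tr}(AB)| \le \|A\|_F \|B\|_F$, with equality precisely when $A = cB^\top$ for some scalar $c$. An alternative route would be to invoke von Neumann's trace inequality $|\mathrm{Tr}(AB)| \le \sum_i s_i(A)\, s_i(B)$ and then apply Cauchy--Schwarz to the singular values, but that machinery is overkill for the Frobenius bound stated here.
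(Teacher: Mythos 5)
Your proof is correct. The paper states this as a bare fact without proof, and your argument is exactly the standard one: identify $\mathrm{Tr}(AB)=\langle A,B^\top\rangle_F$, apply Cauchy--Schwarz, and use $\|B^\top\|_F=\|B\|_F$. The remark about the transpose is a sensible detail to record, and your aside about von Neumann's inequality correctly identifies it as unnecessarily strong here.
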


\noindent Let $\sigma_1\ge\sigma_2\ge\cdots\ge \sigma_r$ be the singular values of a matrix $A\in \mathbb{R}^{m\times n}$, where $r=\min\{m,n\}$. We have following facts.
\begin{fact}
\label{fact:prod}
    $\sigma_i(AB)\le \sigma_1(A)\sigma_i(B)$ and $\sigma_i(AB)\le \sigma_i(A)\sigma_1(B)$ for any $i$.
\end{fact}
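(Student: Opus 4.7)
The plan is to invoke the Courant--Fischer--Weyl variational characterization of singular values: for any $M\in\mathbb{R}^{m\times n}$ and $1\le i\le \min\{m,n\}$,
\[
\sigma_i(M) \;=\; \max_{\substack{V\subset\mathbb{R}^n \\ \dim V = i}}\; \min_{\substack{x\in V \\ \|x\|_2=1}} \|Mx\|_2.
\]
With this formula in hand, both inequalities reduce to an elementary pointwise bound on $\|ABx\|_2$.

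First I would establish $\sigma_i(AB)\le \sigma_1(A)\sigma_i(B)$. For any unit vector $x$, submultiplicativity of the spectral norm gives $\|ABx\|_2 \le \sigma_1(A)\|Bx\|_2$. Fixing any $i$-dimensional subspace $V$ of the domain of $B$ and taking the minimum over unit $x\in V$ yields
\[
\min_{x\in V,\,\|x\|_2=1}\|ABx\|_2 \;\le\; \sigma_1(A)\,\min_{x\in V,\,\|x\|_2=1}\|Bx\|_2.
\]
Maximizing over such $V$ on both sides and applying the variational formula (once to $AB$, once to $B$) gives the claim.

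For the symmetric inequality $\sigma_i(AB)\le \sigma_i(A)\sigma_1(B)$, I would apply the first inequality to the transpose: since singular values are invariant under transpose, $\sigma_i(AB)=\sigma_i(B^\top A^\top) \le \sigma_1(B^\top)\sigma_i(A^\top) = \sigma_1(B)\sigma_i(A)$.

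There is no real obstacle here; the statement is a classical consequence of the min--max theorem. The only subtlety worth checking is that the maxima in the two applications of the variational formula are taken over subspaces of the same ambient space (the domain of $B$, equivalently the domain of $AB$), which is automatic from the hypothesis that $AB$ is defined.
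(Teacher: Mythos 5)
Your proof is correct. Note, however, that the paper states \Cref{fact:prod} without proof, treating it as a standard linear algebra fact, so there is no argument in the paper to compare against. Your route via the Courant--Fischer--Weyl min--max characterization of singular values is a clean, standard way to establish it, and the reduction of the second inequality to the first by transposing is sound since singular values are transpose-invariant; the edge cases where $i$ exceeds the rank are handled automatically by the min--max formula (the minimum over any such subspace is $0$). An equally standard alternative, for the record, is to write $\sigma_i(AB)^2=\lambda_i(B^\top A^\top A B)$, use $B^\top A^\top A B \preceq \sigma_1(A)^2\, B^\top B$, and invoke Weyl's monotonicity principle for eigenvalues of symmetric matrices; both arguments are one-liners once the respective variational tool is in hand.
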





\begin{fact}
\label{fact:eigen}
    For a square matrix $A\in \mathbb{R}^{n\times n}$, let $\lambda_1\ge \cdots \ge \lambda_n$ be the eigenvalues. Then we have
    $$
    \sum_{i=1}^k |\lambda_i^p(A)|\le \sum_{i=1}^k\sigma_i^p(A)
    $$
    for $k=1,2,\dots,n$ and $p>0$.
\end{fact}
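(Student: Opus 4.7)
The plan is to deduce the stated inequality from Weyl's classical majorant theorem followed by a standard log-to-ordinary majorization transfer. The argument has three steps, only the first of which has real content.

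First, I would invoke Weyl's log-majorization theorem: with the orderings $|\lambda_1(A)|\ge\cdots\ge|\lambda_n(A)|$ and $\sigma_1(A)\ge\cdots\ge\sigma_n(A)$,
$$\prod_{i=1}^k |\lambda_i(A)| \;\le\; \prod_{i=1}^k \sigma_i(A), \qquad k=1,\dots,n.$$
The standard proof combines Schur's triangularization $A=UTU^{*}$ (with the $|\lambda_i|$ arranged in decreasing order along the diagonal of $T$) with the Ky Fan / exterior-power identity $\sigma_1(A)\cdots\sigma_k(A) = \sup\{\,|\det(V^{*}AW)| : V,W\in\mathbb{R}^{n\times k}\text{ with orthonormal columns}\}$: taking $V=W$ to be the first $k$ columns of $U$ makes $V^{*}AW$ equal to the leading principal $k\times k$ block of $T$, whose determinant is exactly $\lambda_1\cdots\lambda_k$. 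I would cite Weyl's theorem rather than reprove it.

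Second, because $x\mapsto x^p$ is multiplicative on $[0,\infty)$, raising each side of the log-majorization to the $p$-th power gives
$$\prod_{i=1}^k |\lambda_i(A)|^p \;\le\; \prod_{i=1}^k \sigma_i(A)^p, \qquad k=1,\dots,n.$$

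Third, I would apply the classical lemma (see Horn--Johnson, \emph{Topics in Matrix Analysis}, Thm.\ 3.3.13) that weak log-majorization implies weak majorization for non-negative decreasing sequences: if $a_1\ge\cdots\ge a_n\ge 0$ and $b_1\ge\cdots\ge b_n\ge 0$ satisfy $\prod_{i=1}^k a_i \le \prod_{i=1}^k b_i$ for all $k$, then $\sum_{i=1}^k a_i \le \sum_{i=1}^k b_i$ for all $k$. The proof passes to logarithms, yielding weak majorization in $\mathbb{R}^n$ for $(\log a_i)$ versus $(\log b_i)$, and then invokes the fact that weak majorization is preserved under componentwise application of any non-decreasing convex function, applied here with $\phi(t)=e^t$. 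Setting $a_i:=|\lambda_i(A)|^p$ and $b_i:=\sigma_i(A)^p$ yields the stated bound. The only substantive step is Step 1 (Weyl's theorem); Steps 2 and 3 are routine, so once both ingredients are cited the fact follows essentially in one line.
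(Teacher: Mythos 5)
Your argument is correct, and it is the standard textbook proof of this inequality (it is exactly Weyl's majorant theorem of 1949, in the form presented e.g.\ in Horn and Johnson, \emph{Topics in Matrix Analysis}, Thm.~3.3.13(a), or Bhatia, \emph{Matrix Analysis}, Thm.~II.3.6). The paper itself simply states this as a ``Fact'' in its preliminaries section and gives no proof, so there is nothing to compare against on the paper's side; your write-up supplies what the paper leaves implicit.

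Two small remarks. First, in your Step~1 sketch of Weyl's log-majorization, the Schur triangularization $A=UTU^{*}$ of a real matrix with complex eigenvalues requires $U$ to be a complex unitary, so the matrices $V,W$ that you plug into the Ky Fan / exterior-power characterization should be taken in $\mathbb{C}^{n\times k}$ with orthonormal columns (the singular values of a real $A$ are of course unchanged by allowing complex test matrices). This is purely a notational slip and does not affect the argument. Second, when you pass to logarithms in Step~3 you should either note that the case where some $|\lambda_i(A)|=0$ is handled by a limiting argument (or by the convention $\log 0=-\infty$), or simply observe that once $|\lambda_j|=0$ for some $j\le k$ the remaining terms contribute nothing and the inequality reduces to a shorter prefix; either patch is routine. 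With those cosmetic fixes the proof is complete and is precisely the route one would expect here.
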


\subsection{Spectral Function}
For a function $f:\mathbb{R}^n\mapsto \mathbb{R}$ that preserves permutation, we consider the function $f\circ \lambda$, where $\lambda(A)$ represents all eigenvalues of symmetric matrix $A\in\mathbb{R}^{n\times n}$. We define $\mathrm{Diag}\mu$ be the diagonal matrix with its entries equal to $\mu$ and $\mathrm{diag} A=(A_{11},\dots,A_{nn})$. The following lemmas gives the first and second order derivatives of $f\circ\lambda$.

\begin{lemma}[Lemma 3.1 from \cite{lewis2001twice}]
\label{lemma:1-derivative}
$f$ is differentiable at point $\lambda(A)$ if and only if $f\circ \lambda$ is differentiable at $A$. Moreover, we have
$$
\nabla (f\circ\lambda)(A)=U(\mathrm{Diag}\nabla f(\lambda(A)))U^\top,
$$
where $U$ is a orthogonal matrix satisfying $A=U(\mathrm{Diag}\lambda(A)U^\top$.
\end{lemma}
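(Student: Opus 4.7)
The plan is to exploit two symmetries simultaneously: the orthogonal invariance $(f\circ\lambda)(V^\top A V)=(f\circ\lambda)(A)$, which will force the gradient to be diagonal in any eigenbasis of $A$, and the permutation invariance of $f$, which will make the non-smoothness of eigenvectors at degenerate spectra harmless. Throughout, fix the eigendecomposition $A=U\Lambda U^\top$ with $\Lambda=\mathrm{Diag}(\lambda(A))$.

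For the forward direction, assuming $f\circ\lambda$ is differentiable at $A$, I would restrict to the affine slice $A+U\,\mathrm{Diag}(h)\,U^\top$ for $h\in\mathbb{R}^n$ small enough that no eigenvalue crossings occur. Along this slice $\lambda(A+U\,\mathrm{Diag}(h)\,U^\top)$ is a permutation of $\lambda(A)+h$, so by symmetry of $f$ one has $(f\circ\lambda)(A+U\,\mathrm{Diag}(h)\,U^\top)=f(\lambda(A)+h)$. Differentiability of $f\circ\lambda$ restricted to this slice therefore implies differentiability of $f$ at $\lambda(A)$, and also reads off its partial derivatives.

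For the reverse direction, assuming $f$ is differentiable at $\lambda(A)$, I would combine first-order eigenvalue perturbation with the Wielandt--Hoffman estimate $\|\lambda(A+H)-\lambda(A)\|_2\le\|H\|_F$ to expand
\begin{align*}
(f\circ\lambda)(A+H)-(f\circ\lambda)(A)
=\langle \nabla f(\lambda(A)),\,\lambda(A+H)-\lambda(A)\rangle + o(\|H\|_F).
\end{align*}
In the simple-spectrum case, standard perturbation theory gives $\lambda(A+H)-\lambda(A)=\mathrm{diag}(U^\top H U)+O(\|H\|_F^2)$, so the inner product becomes $\langle U\,\mathrm{Diag}(\nabla f(\lambda(A)))\,U^\top,\,H\rangle$, which identifies the gradient exactly as claimed.

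The main obstacle is the case of repeated eigenvalues, where the eigenvectors of $A+H$ do not depend smoothly on $H$ and $H\mapsto\lambda(A+H)$ is only Lipschitz, not differentiable. Permutation invariance of $f$ rescues the argument: on any index block where $\lambda(A)$ is constant the corresponding partials of $f$ coincide, so $\mathrm{Diag}(\nabla f(\lambda(A)))$ commutes with every block-orthogonal rotation and hence $U\,\mathrm{Diag}(\nabla f(\lambda(A)))\,U^\top$ is independent of the freedom in choosing $U$ within the degenerate subspace. Within such a degenerate block the perturbed eigenvalues are, to first order, the eigenvalues of the corresponding diagonal block of $U^\top H U$; by block-constancy of $\nabla f(\lambda(A))$ the inner product $\langle \nabla f(\lambda(A)),\,\lambda(A+H)-\lambda(A)\rangle$ collapses to a constant times the trace of that block, which is linear in $H$ and matches the desired formula. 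Summing over blocks recovers the same expression as in the simple-spectrum case and finishes the proof.
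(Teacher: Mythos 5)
The paper does not prove this lemma; it is quoted verbatim as Lemma~3.1 of Lewis and Sendov \cite{lewis2001twice} and used as a black box, so there is no in-paper proof to compare against. Judged on its own merits, your reconstruction is essentially the standard Lewis-style argument and is correct in its main lines: the forward direction by restriction to the commuting affine slice $h\mapsto A+U\,\mathrm{Diag}(h)\,U^\top$ (along which $f\circ\lambda$ literally equals $f(\lambda(A)+h)$ by permutation invariance), and the reverse direction by combining Wielandt--Hoffman with block-constancy of $\nabla f(\lambda(A))$ on degenerate blocks so that only the trace of each diagonal block of $U^\top H U$ enters, which is linear in $H$.

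Two small points are worth tightening. First, in the forward direction the caveat ``for $h$ small enough that no eigenvalue crossings occur'' is unnecessary: for every $h$, the spectrum of $U(\Lambda+\mathrm{Diag}(h))U^\top$ is exactly the multiset $\lambda(A)+h$, and permutation invariance of $f$ already absorbs the reordering; the identity $(f\circ\lambda)(A+U\,\mathrm{Diag}(h)\,U^\top)=f(\lambda(A)+h)$ holds globally. Second, in the reverse direction the sentence ``the perturbed eigenvalues are, to first order, the eigenvalues of the corresponding diagonal block of $U^\top HU$'' is true but slightly misleading as written, since those block eigenvalues are themselves only Lipschitz (not differentiable) in $H$. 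What saves the argument, and what should be stated explicitly, is that only their \emph{sum} over the block appears after pairing against the block-constant gradient, and the sum of the eigenvalues of the spectral cluster is a genuinely differentiable (indeed analytic) function of $H$ with derivative $\mathrm{Tr}(P_\ell H)$ for the spectral projector $P_\ell$, i.e.\ the trace of the corresponding diagonal block of $U^\top HU$. With that made explicit, the two sides agree up to $o(\|H\|_F)$ and the claimed formula
\[
\nabla(f\circ\lambda)(A)=U\,\mathrm{Diag}\bigl(\nabla f(\lambda(A))\bigr)\,U^\top
\]
follows, independently of the freedom in choosing $U$ within degenerate eigenspaces, exactly as you note.
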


\noindent For a decreasing sequence $\mu\in \mathbb{R}^n$, where
$$
\mu_1=\cdots=\mu_{k_1}>\mu_{k_1+1}=\cdots=\mu_{k_2}>\mu_{k_2+1}\cdots \mu_{k_r},
$$
denote $I_l=\{k_{l-1}+1,\dots,k_l\}$ for $l=1,\dots,r$. For a twice differentiable function $f$, we define vector $b(\mu)$ as
\begin{equation}
b_i(\mu)= \begin{cases}f_{i i}^{\prime \prime}(\mu) & \text { if }\left|I_l\right|=1, \\ f_{p p}^{\prime \prime}(\mu)-f_{p q}^{\prime \prime}(\mu) & \text { for any } p \neq q \in I_l\end{cases}
\end{equation}
and matrix $\mathcal{A}(\mu)$ as
\begin{equation*}
\mathcal{A}_{ij}(\mu)= \begin{cases}0 & \text { if } i=j, \\ b_i(\mu) & \text { if } i \neq j \text { but } i, j \in I_l, \\ \frac{f_i^{\prime}(\mu)-f_j^{\prime}(\mu)}{\mu_i-\mu_j} & \text { otherwise. }\end{cases}
\end{equation*}
\begin{lemma}[Theorem 3.3 from \cite{lewis2001twice}]
\label{lemma:2-derivative}
    $f$ is twice differentiable at point $\lambda(A)$ if and only if $f\circ\lambda$ is twice differentiable at $A$. Moreover, we have
    $$
    \nabla^2(f\circ\lambda)(A)[H]=\nabla^2f(\lambda(A))[\mathrm{diag}\Tilde{H},\mathrm{diag}\Tilde{H}]+\langle \mathcal{A}(\lambda(A),\Tilde{H}\circ\Tilde{H})\rangle,
    $$
    where $A=W\mathrm{Diag}\lambda(A)W^\top$ and $\Tilde{H}=W^\top HW$.
\end{lemma}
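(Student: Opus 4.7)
The plan is to combine eigenvalue perturbation theory with an orthogonal-invariance argument. Both sides of the claimed identity are invariant under the change of variables $A \mapsto W_0^\top A W_0$, $H \mapsto W_0^\top H W_0$ for any fixed orthogonal $W_0$: the left side because $f \circ \lambda$ is orthogonally invariant in $A$, and the right side because $\tilde H$ is defined through the diagonalizing $W$. I may therefore assume $A = \mathrm{Diag}\,\mu$ is diagonal, in which case $W$ ranges over the block-diagonal orthogonal rotations preserving the eigenspaces of $\mu$ and $\tilde H = W^\top H W$. The converse of the iff is easy: restricting $f \circ \lambda$ to diagonal matrices with sorted entries recovers $f$, so twice differentiability of $f \circ \lambda$ at $A$ passes to twice differentiability of $f$ at $\lambda(A)$, appealing to symmetry of $f$ to handle unsorted tuples.

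Using the residual block-diagonal freedom, I would further pick $W$ so that $\tilde H|_{I_l \times I_l}$ is diagonal inside every block $I_l$. In this adapted basis, all the within-block off-diagonal entries $\tilde H_{ij}$ (with $i \ne j$ and $i, j \in I_l$) vanish, so the $b_i(\mu)$ contributions in the claim are trivially zero and need only be recovered at the end by an invariance argument. The first-order corrections $\mu_i + t\tilde H_{ii}$ within each block are now distinct (generically; the degenerate case follows by continuity), so nondegenerate Rayleigh--Schr\"odinger perturbation theory yields
\begin{equation*}
\lambda_i(\mathrm{Diag}\,\mu + t \tilde H) \;=\; \mu_i + t \tilde H_{ii} + t^2 \sum_{j \notin I_{l(i)}} \frac{\tilde H_{ij}^2}{\mu_i - \mu_j} + o(t^2),
\end{equation*}
giving $\partial_t \lambda_i|_{t=0} = \tilde H_{ii}$ and $\partial_t^2 \lambda_i|_{t=0} = 2 \sum_{j \notin I_{l(i)}} \tilde H_{ij}^2/(\mu_i - \mu_j)$.

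Applying the chain rule to $(f \circ \lambda)(\mathrm{Diag}\,\mu + t\tilde H)$ and differentiating twice at $t=0$ produces
\begin{equation*}
\nabla^2 f(\mu)[\mathrm{diag}\,\tilde H, \mathrm{diag}\,\tilde H] \;+\; \sum_i f'_i(\mu)\,\partial_t^2 \lambda_i.
\end{equation*}
Substituting the expansion above and symmetrizing the resulting double sum in $(i,j)$ using symmetry of $\tilde H$ converts the second term into $\sum_{i \ne j,\,\mu_i \ne \mu_j} \frac{f'_i(\mu) - f'_j(\mu)}{\mu_i - \mu_j}\,\tilde H_{ij}^2$, which matches exactly the off-block entries of $\langle \mathcal{A}(\mu), \tilde H \circ \tilde H\rangle$. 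This proves the identity in the adapted basis.

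To extend the identity to an arbitrary $W$, I use the residual block-orthogonal freedom once more. Permutation-invariance of $f$ forces $f''_{pp}(\mu)$ and $f''_{pq}(\mu)$ for $p \ne q \in I_l$ to take only two values, whose difference is exactly $b_i(\mu)$ in the statement. A direct expansion shows that within each block $I_l$, the sum $\nabla^2 f(\mu)[\mathrm{diag}\,\tilde H|_{I_l}, \mathrm{diag}\,\tilde H|_{I_l}] + b_i(\mu) \sum_{i \ne j \in I_l} \tilde H_{ij}^2$ depends on $\tilde H|_{I_l}$ only through $\mathrm{Tr}(\tilde H|_{I_l})$ and $\|\tilde H|_{I_l}\|_F^2$, both invariant under block-orthogonal conjugation. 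Hence the full formula is basis-invariant, and since it matches the Hessian in the adapted basis, it matches it in every basis. The main obstacle is precisely the treatment of repeated eigenvalues: individual eigenvalues within a block are not smooth in $A$ and the diagonalizing basis is ambiguous, so naive nondegenerate perturbation theory fails; the two-step strategy above circumvents this by first removing the first-order degeneracy through a clever basis choice and then leveraging symmetry of $f$ to pin down the within-block coefficient $b_i$ required for invariance.
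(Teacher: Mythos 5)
This lemma is quoted directly from the literature---it is Theorem 3.3 of \cite{lewis2001twice}---and the paper offers no proof of it, so there is no ``paper's own proof'' to compare your argument against. The honest comparison is therefore against the Lewis--Sendov proof itself.

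Your sketch captures the standard opening moves correctly: reduce to diagonal $A$ by orthogonal invariance, use the residual block-orthogonal freedom to diagonalize $\tilde H$ inside each eigenvalue block $I_l$ so that nondegenerate Rayleigh--Schr\"odinger perturbation theory applies, compute the second derivative of $t\mapsto (f\circ\lambda)(A+t\tilde H)$ by the chain rule, and then reinstate the $b_i(\mu)$-terms by a block-orthogonal invariance argument. Your invariance computation is in fact correct: permutation-invariance of $f$ makes $f''_{pq}(\mu)$ depend only on the blocks of $p$ and $q$, and both the within-block contribution $\nabla^2 f(\mu)[\mathrm{diag}\,\tilde H|_{I_l},\mathrm{diag}\,\tilde H|_{I_l}] + b_i(\mu)\sum_{i\neq j\in I_l}\tilde H_{ij}^2$ and the cross-block contributions reduce to functions of $\mathrm{Tr}(\tilde H|_{I_l})$ and $\|\tilde H|_{I_l}\|_F^2$, which are block-orthogonal invariants.

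There is, however, a genuine gap. Everything in your argument is a computation along a fixed line $t\mapsto A+tH$; at best it produces the second \emph{directional} derivative of $f\circ\lambda$ in each direction $H$, i.e.\ the candidate quadratic form. The lemma asserts twice \emph{Fr\'echet} differentiability: the existence of a symmetric bilinear $Q$ with $(f\circ\lambda)(A+H) = (f\circ\lambda)(A) + \langle\nabla(f\circ\lambda)(A),H\rangle + \tfrac{1}{2}Q[H,H] + o(\|H\|_F^2)$, with the remainder uniform over directions. Having second directional derivatives in every direction does not imply this, and here the obstruction is real: the sorted eigenvalue map $\lambda$ is only Lipschitz near matrices with coincident eigenvalues, the diagonalizing $W$ you chose jumps discontinuously as $H$ is rotated, and $f$ is assumed twice differentiable only \emph{at the point} $\lambda(A)$, so its own remainder is merely $o(\cdot^2)$, not $O(\cdot^3)$. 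Controlling the composed remainder uniformly despite these two sources of non-smoothness is precisely the technical heart of the Lewis--Sendov theorem, and the phrase ``the degenerate case follows by continuity'' does not supply it. (For the paper's purposes the gap is harmless, since the lemma is only ever invoked as a second directional derivative along a fixed increment $\eta\Delta$, but the lemma as stated claims more than your sketch establishes.)
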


\section{Proof of Theorem \ref{thm:main}} \label{sec:proof_thm5.1}
For stochastic gradient descent, the parameter updates as
\begin{equation}
    \theta_{t+1}=(1-\eta\lambda)\theta_t-\frac{\eta}{2}\nabla_\theta(A^*_{i_tj_t}-A_{\theta_t,i_tj_t})^2.
\end{equation}
Then for each $l$, $l$-th layer's weight $W_l$ updates as
\begin{equation}
    W_l(t+1)=W_l(t)-\eta\left(W_{l+1}(t)^\top \cdots W_{L}(t)^\top G_{\theta_t,i_tj_t} W_{1}(t)^\top \cdots W_{l-1}(t)^\top+\lambda W_l(t)\right),
\end{equation}
where $G_{\theta,ij}$ is a matrix where the $(i,j)$-th entry is  $A_{\theta,ij}-A_{ij}^*$ and other entries are $0$. In the proofs below, we will omit the iteration $t$ for convenience (for example $W_l=W_l(t)$). We denote $T_l=W_{l+1}^\top\cdots W_{L}^\top G_{\theta,j} W_1^\top \cdots W_{l-1}^\top$. 

\subsection{Approximate Balance}
First we give a lemma that bounds $\|G_{\theta,ij}\|_F$.
\begin{lemma}
\label{lemma:G}
For any $C$, if $\|W_l(t)\|_F^2\le C$ for any $l=1,\dots ,L$, then $\|G_{\theta,ij}\|_F^2\le 2(C_1+C^L)$.
\end{lemma}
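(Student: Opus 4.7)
The plan is straightforward since $G_{\theta,ij}$ has only one nonzero entry, so its squared Frobenius norm reduces to a single scalar quantity, and the task amounts to bounding $(A_{\theta,ij}-A^*_{ij})^2$ under the Frobenius-norm constraints on the weight matrices.

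First I would unpack the definition: since $G_{\theta,ij}$ has entry $A_{\theta,ij}-A^*_{ij}$ in position $(i,j)$ and zeros elsewhere, we simply have
\[
\|G_{\theta,ij}\|_F^2 = (A_{\theta,ij}-A^*_{ij})^2.
\]
Applying the elementary inequality $(a-b)^2 \le 2a^2 + 2b^2$, it then suffices to bound $A_{\theta,ij}^2$ in terms of $C$ and to identify $C_1$ as an upper bound on $(A^*_{ij})^2$ (presumably $C_1 := \max_{(i,j)\in I}(A^*_{ij})^2$, which is the natural data-dependent constant appearing throughout Section \ref{sec:proof_thm5.1}).

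For the first term, I would write $A_{\theta,ij} = e_i^\top W_L \cdots W_1 e_j$ and bound
\[
|A_{\theta,ij}| \le \|W_L\|_{\mathrm{op}}\cdots\|W_1\|_{\mathrm{op}} \le \|W_L\|_F \cdots \|W_1\|_F \le C^{L/2},
\]
using $\|\cdot\|_{\mathrm{op}} \le \|\cdot\|_F$ and the assumption $\|W_l\|_F^2 \le C$ (this is just Fact \ref{fact:norm inequ} applied entrywise). Squaring gives $A_{\theta,ij}^2 \le C^L$. Combining with $(A^*_{ij})^2 \le C_1$ yields
\[
\|G_{\theta,ij}\|_F^2 \le 2C^L + 2C_1 = 2(C_1 + C^L),
\]
which is the desired bound.

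There is no real obstacle here; the only subtlety is making sure the constant $C_1$ is interpreted consistently with the rest of the appendix (i.e. as the a priori bound on $(A^*_{ij})^2$ over observed entries), so that the stated inequality matches. The proof is a one-line application of the triangle inequality plus the trivial operator-norm bound by the Frobenius norm.
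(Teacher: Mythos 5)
Your proof is correct and is essentially the paper's argument: both reduce the Frobenius norm of $G_{\theta,ij}$ to the single scalar $(A_{\theta,ij}-A^*_{ij})^2$, apply $(a-b)^2\le 2(a^2+b^2)$, and then bound $A_{\theta,ij}^2\le C^L$ via submultiplicativity (the paper goes through $A_{\theta,ij}^2\le\|A_\theta\|_F^2\le\prod_l\|W_l\|_F^2$, you go through operator norms of the factors, but this is an immaterial difference). Your reading of $C_1$ as the a priori bound on $(A^*_{ij})^2$ over observed entries matches how the paper uses it.
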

\noindent\textit{Proof.} By Fact \ref{fact:norm inequ}, we have $\|A_\theta\|_F^2\le \prod_{l=1}^L \|W_l\|_F^2\le C^L$. Then
\begin{equation}
    \begin{split}
        \|G_{\theta,ij}\|_F^2&=(A_{\theta,ij}-A_{ij}^*)^2\le 2\left(A_{\theta,ij}^2+(A_{ij}^*)^2\right)\\
        &\le 2\left(\|A_\theta\|_F^2+C_1\right)\le 2(C^L+C_1).
    \end{split}
\end{equation}\qed

\begin{proposition}
\label{prop:bound}
For any $C\ge \frac{C_1}{2\lambda}$, if $\|W_l(t)\|_F^2\le C$ for any $l=1,\dots ,L$, then stochastic gradient descent iteration with learning rate $\eta\le \frac{C_1}{4(2(C_1+C^L)C^L+\lambda^2 C)}$ satisfies $\|W_l(t+1)\|_F^2\le C$ for $l=1,\dots,L$.
\end{proposition}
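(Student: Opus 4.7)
The plan is to expand $\|W_l(t+1)\|_F^2$ after one SGD step and show that the $-2\eta\lambda\|W_l\|_F^2$ drift coming from the $L_2$ penalty dominates all remaining contributions whenever $\|W_l(t)\|_F^2$ sits at the boundary value $C$. Writing the update as $W_l(t+1) = W_l - \eta(T_l + \lambda W_l)$ and squaring gives
\[
\|W_l(t+1)\|_F^2 = \|W_l\|_F^2 - 2\eta\lambda\|W_l\|_F^2 - 2\eta\langle W_l, T_l\rangle + \eta^2\|T_l+\lambda W_l\|_F^2,
\]
so the claim reduces to upper bounding the cross term and the quadratic term uniformly in the sampled index $(i,j)$.

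The key step is to notice that $\langle W_l, T_l\rangle$ is layer-independent and admits a closed form. Using $T_l = W_{l+1}^\top\cdots W_L^\top G_{\theta,ij}W_1^\top\cdots W_{l-1}^\top$ together with $A_\theta = W_L\cdots W_1$, cyclic invariance of the trace yields
\[
\langle W_l, T_l\rangle = \mathrm{Tr}(G_{\theta,ij}\,A_\theta^\top) = (A_{\theta,ij} - A_{ij}^*)\,A_{\theta,ij},
\]
since $G_{\theta,ij}$ has only one nonzero entry. A one-variable optimization in the scalar $A_{\theta,ij}$ then gives $-2\langle W_l,T_l\rangle \le \tfrac{1}{2}(A_{ij}^*)^2 \le \tfrac{C_1}{2}$, which is the crucial layer- and index-independent estimate.

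For the quadratic term, Fact 4.1 (submultiplicativity of $\|\cdot\|_F$) combined with Lemma 5.1 gives $\|T_l\|_F^2 \le 2(C_1+C^L)\,C^{L-1}$, and hence $\|T_l+\lambda W_l\|_F^2 \le 4(C_1+C^L)\,C^{L-1} + 2\lambda^2 C$ via $(a+b)^2 \le 2a^2+2b^2$. Plugging both estimates into the worst case $\|W_l\|_F^2 = C$ yields
\[
\|W_l(t+1)\|_F^2 - C \le -2\eta\lambda C + \tfrac{\eta C_1}{2} + \eta^2\bigl(4(C_1+C^L)\,C^{L-1} + 2\lambda^2 C\bigr),
\]
and the hypothesis $C \ge C_1/(2\lambda)$ makes the first two terms sum to at most $-\eta C_1/2$; the stated bound on $\eta$ is then exactly calibrated so that the final $O(\eta^2)$ term cannot flip the sign.

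The main obstacle is purely bookkeeping: the exponent $C^{L-1}$ in the natural bound versus $C^L$ in the stated learning-rate condition, which is absorbed by the trivial estimate $C^{L-1}\le C^L$ that holds whenever $C \ge 1$ (and $C\ge C_1/(2\lambda)$ essentially forces this; otherwise it is a harmless constant adjustment). Everything else is routine once the cyclic-trace identity is in hand, and I expect the same template, namely expansion plus layer-independent control of the cross term via the one-entry structure of $G_{\theta,ij}$, to drive the subsequent approximate-balance and approximate-low-rank preservation steps required to finish Theorem 5.1.
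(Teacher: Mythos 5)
Your proposal is correct and takes essentially the same route as the paper: expand the squared Frobenius norm after one step, reduce the cross term via cyclic invariance to $\mathrm{Tr}(A_\theta^\top G_{\theta,ij})=A_{\theta,ij}(A_{\theta,ij}-A_{ij}^*)\ge -\tfrac14 (A_{ij}^*)^2$, bound $\|T_l+\lambda W_l\|_F^2$ by submultiplicativity and $(a+b)^2\le 2a^2+2b^2$, then close the estimate using $C\ge C_1/(2\lambda)$ and the learning-rate constraint. Your observation about $C^{L-1}$ vs.\ $C^L$ is also right — the paper's displayed derivation naturally produces $C^{L-1}$ (and Theorem~\ref{thm:main} states it that way), so the $C^L$ in the proposition's hypothesis is just a harmless tightening.
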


\noindent\textit{Proof.} If $\|W_l|_F^2\le C$, we have
\begin{equation}
    \begin{split}
\|W_l(t+1)\|_F^2=\|W_l\|_F^2-2\eta\left(\lambda\|W_l\|_F^2+\mathrm{Tr}(W_l^\top T_l)\right)+\eta^2\left\|T_l+\lambda W_l\right\|_F^2
    \end{split}
\end{equation}
We estimate each part in the equation separately. We have
\begin{align*}
    \mathrm{Tr}(W_l^\top T_l)&=\mathrm{Tr}(W_1^\top \cdots W_L^\top G_{\theta,ij})\\
    &=\mathrm{Tr}(A_\theta^\top G_{\theta,ij})=A_{\theta,ij}(A_{\theta,ij}-A^*_{ij})\\
    &\ge -\frac{1}{4}(A^*_{ij})^2\ge -\frac{1}{4}C_1.
\end{align*}

By Lemma \ref{lemma:G}, we have
\begin{align*}
    \left\|T_l+\lambda W_l\right\|&_F^2\le 2\left(\|G_{\theta,ij}\|_F^2\prod_{k\neq l} \|W_k\|_F^2+\lambda^2\|W_l\|_F^2\right)\\
    &\le 2\left(2(C_1+C^L)C^{L-1}+\lambda^2C\right).
\end{align*}
Then we have
$$
\|W_l(t+1)\|_F^2\le (1-2\eta\lambda)\|W_l\|_F^2+\frac{1}{2}\eta C_1+2\eta^2(2(C_1+C^L)C^{L-1}+\lambda^2C).
$$
When $\eta\le \frac{C_1}{4(2(C_1+C^L)C^{L-1}+\lambda^2 C)}$ and $C\ge\frac{C_1}{2\lambda}$,
$$
\|W_l(t+1)\|_F^2\le (1-2\eta\lambda)\|W_l\|_F^2+\eta C_1\le C.
$$\qed

\begin{proposition}
\label{prop:balance}
    For any $\varepsilon,C>0$, if $\|W_l(t)W_l(t)^\top-W_{l+1}(t)^\top W_{l+1}(t)\|_2\le \varepsilon$ and $\|W_l(t)\|_F^2\le C$ for all $l$, then stochastic gradient descent iteration with learning rate $\eta\le \frac{2\lambda\varepsilon_1}{4(C_1+C^L)C^{L-1}+\lambda^2\varepsilon_1}$ satisfies $\|W_l(t+1)W_l(t+1)^\top-W_{l+1}(t+1)^\top W_{l+1}(t+1)\|_2 \leq \epsilon$ for $l=1,\dots,L$.
\end{proposition}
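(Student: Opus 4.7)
The plan is to write out $W_l(t+1)W_l(t+1)^\top - W_{l+1}(t+1)^\top W_{l+1}(t+1)$ explicitly, identify a contractive piece and an $O(\eta^2)$ remainder, and tune $\eta$ so the remainder is absorbed. Writing the SGD update as $W_l(t+1)=(1-\eta\lambda)W_l-\eta T_l$ with $T_l = W_{l+1}^\top\cdots W_L^\top G_{\theta,ij}W_1^\top\cdots W_{l-1}^\top$, expanding the product gives
\[
W_l(t+1)W_l(t+1)^\top = (1-\eta\lambda)^2 W_l W_l^\top - \eta(1-\eta\lambda)\bigl(W_l T_l^\top + T_l W_l^\top\bigr) + \eta^2 T_l T_l^\top,
\]
and analogously for $W_{l+1}(t+1)^\top W_{l+1}(t+1)$.

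The central observation is that the $O(\eta)$ cross terms cancel between the two layers. By unrolling the definitions one verifies the telescoping identities
\[
W_l T_l^\top = T_{l+1}^\top W_{l+1}, \qquad T_l W_l^\top = W_{l+1}^\top T_{l+1},
\]
both of which equal $W_l W_{l-1}\cdots W_1 G_{\theta,ij}^\top W_L\cdots W_{l+1}$ (respectively its transpose), so when one subtracts the two expansions the middle terms vanish and one is left with
\[
W_l(t+1)W_l(t+1)^\top - W_{l+1}(t+1)^\top W_{l+1}(t+1) = (1-\eta\lambda)^2 \bigl(W_l W_l^\top - W_{l+1}^\top W_{l+1}\bigr) + \eta^2\bigl(T_l T_l^\top - T_{l+1}^\top T_{l+1}\bigr).
\]

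Taking the operator norm, the first term contributes at most $(1-\eta\lambda)^2\varepsilon_1$ by hypothesis. For the second term, Fact~\ref{fact:norm inequ} and Lemma~\ref{lemma:G} give $\|T_l\|_2^2 \le \|T_l\|_F^2 \le \|G_{\theta,ij}\|_F^2\prod_{k\neq l}\|W_k\|_F^2 \le 2(C_1+C^L)C^{L-1}$, and the same bound for $T_{l+1}$, so
\[
\bigl\|T_l T_l^\top - T_{l+1}^\top T_{l+1}\bigr\|_2 \le 4(C_1+C^L)C^{L-1}.
\]
Combining yields the scalar inequality
\[
\bigl\|W_l(t+1)W_l(t+1)^\top - W_{l+1}(t+1)^\top W_{l+1}(t+1)\bigr\|_2 \le \varepsilon_1 - 2\eta\lambda\varepsilon_1 + \eta^2\bigl(\lambda^2\varepsilon_1 + 4(C_1+C^L)C^{L-1}\bigr),
\]
and this is at most $\varepsilon_1$ precisely when $\eta \le \tfrac{2\lambda\varepsilon_1}{\lambda^2\varepsilon_1 + 4(C_1+C^L)C^{L-1}}$, matching the hypothesis.

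The only nontrivial step is the cancellation identity $W_l T_l^\top = T_{l+1}^\top W_{l+1}$; everything else is bounding $\|T_l\|$ by the product of Frobenius norms and applying the hypothesis $\|W_l\|_F^2\le C$. So I expect the main bookkeeping obstacle is simply verifying the chain-rule telescoping carefully (in particular handling the boundary indices $l=1$ and $l=L-1$ where some of the factors in $T_l$ are empty), after which the estimate drops out directly.
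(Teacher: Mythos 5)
Your proposal is correct and follows essentially the same route as the paper: expand the two updated Gram matrices, use the identities $T_lW_l^\top = W_{l+1}^\top T_{l+1}$ and $W_lT_l^\top = T_{l+1}^\top W_{l+1}$ to cancel the $O(\eta)$ cross terms, bound the residual $\eta^2$ term via Lemma~\ref{lemma:G} and Frobenius-norm submultiplicativity, and solve the resulting scalar inequality for $\eta$. The only cosmetic difference is that you bound $\|T_\ell\|_2$ via $\|T_\ell\|_F$ while the paper writes the product of operator norms directly; the resulting constant $4(C_1+C^L)C^{L-1}$ is identical.
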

\noindent\textit{Proof.} We first compute the update of $W_lW_l^\top$:
\begin{equation*}
    \begin{split}
       W_l(t+1) &W_l(t+1)^\top = \left((1-\eta\lambda)W_l-\eta T_l\right)\left((1-\eta\lambda)W_l-\eta T_l\right)^\top\\ 
        &=(1-\eta\lambda)^2 W_lW_l^\top -(1-\eta\lambda)\eta (W_l T_l^\top+T_l W_l^\top)+\eta^2 T_lT_l^\top.
    \end{split}
\end{equation*}
Similarly, we have
\begin{equation*}
    \begin{split}
        W_{l+1}&(t+1)^\top W_{l+1}(t+1)\\
        &=(1-\eta\lambda)^2 W_{l+1}^\top W_{l+1}-(1-\eta\lambda)\eta (W_{l+1}\top T_{l+1}+T_{l+1}^\top W_{l+1})+\eta^2 T_{l+1}^\top T_{l+1}.
    \end{split}
\end{equation*}
Since 
$$
T_lW_l^\top=W_{l+1}^\top \cdots W_L^\top G_{\theta,ij} W_1^\top \cdots W_l^\top = W_{l+1}^\top T_{l+1}
$$
and
$$
W_lT_l^\top=\left(T_lW_l^\top\right)^\top=\left(W_{l+1}^\top T_{l+1}\right)^\top=T_{l+1}^\top W_{l+1},
$$
we have
\begin{align*}
    W_l(t+1)&W_l(t+1)^\top-W_{l+1}(t+1)^\top W_{l+1}(t+1)\\
    &=(1-\eta\lambda)^2(W_lW_l^\top-W_{l+1}^\top W_{l+1})+\eta^2(T_lT_l^\top-T_{l+1}^\top T_{l+1}).
\end{align*}
Then
\begin{equation}
    \begin{split}
        \|W_l(t+1)&W_l(t+1)^\top -W_{l+1}(t+1)^\top W_{l+1}(t+1)\|_2\\
        &\le (1-\eta\lambda)^2\left\|W_lW_l^\top-W_{l+1}^\top W_{l+1}\right\|_2+\eta^2\left\|T_lT_l^\top-T_{l+1}^\top T_{l+1}\right\|_2\\
        &\le (1-\eta\lambda)^2\varepsilon +\eta^2 \|G_{\theta,ij}\|_2^2 \left(\prod_{k\neq l}\|W_k\|_2^2+\prod_{k\neq l+1}\|W_k\|_2^2\right)\\
        &\le \varepsilon-2\eta\lambda \varepsilon +\eta^2\lambda^2\varepsilon+4\eta^2(C_1+C^l)C^{L-1}.
    \end{split}
\end{equation}
When $\eta\le \frac{2\lambda\varepsilon_1}{4(C_1+C^L)C^{L-1}+\lambda^2\varepsilon_1}$,
$$
\|W_l(t+1)W_l(t+1)^\top -W_{l+1}(t+1)^\top W_{l+1}(t+1)\|_2\le \varepsilon-2\eta\lambda\varepsilon+2\eta\lambda\varepsilon=\varepsilon.
$$\qed

\noindent With Proposition \ref{prop:bound} and Proposition \ref{prop:balance}, we have the following statement that $\theta_{t+1}$ is approximate balance and the weight of each layer is bounded.:
\begin{theorem}
\label{thm:balance}
    For any $\varepsilon>0$ and $C\le 1/2\lambda$, if $\theta(t)\in B_{C,\varepsilon}$, then the stochastic gradient descent iteration with learning rate $\eta\le \min\left\{\frac{C_1}{4(2(C_1+C^L)C^{L-1}+\lambda^2 C)},\frac{2\lambda\varepsilon_1}{4(C_1+C^L)C^{L-1}+\lambda^2\varepsilon_1}\right\}$ satisfies $\theta(t+1)\in B_{C,\varepsilon}.$
\end{theorem}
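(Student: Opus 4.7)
The plan is to prove Theorem \ref{thm:balance} as an immediate corollary of Proposition \ref{prop:bound} and Proposition \ref{prop:balance}, since the set $B_{C,\varepsilon}$ is defined as the intersection of the boundedness condition $\|W_l\|_F^2 \leq C$ and the $\varepsilon$-approximate balancedness condition $\|W_l W_l^\top - W_{l+1}^\top W_{l+1}\|_2 \leq \varepsilon$, and each of the two propositions shows that one of these conditions is preserved under one step of SGD.

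First I would observe that the hypothesis $\theta(t) \in B_{C,\varepsilon}$ unpacks into exactly the two hypotheses needed by the previous propositions, namely $\|W_l(t)\|_F^2 \leq C$ for all $l$ (required by both Proposition \ref{prop:bound} and Proposition \ref{prop:balance}) together with $\|W_l(t) W_l(t)^\top - W_{l+1}(t)^\top W_{l+1}(t)\|_2 \leq \varepsilon$ for all $l$ (required by Proposition \ref{prop:balance}). The assumption $C \geq C_1/(2\lambda)$ is exactly the threshold required by Proposition \ref{prop:bound}. Thus under the hypotheses of Theorem \ref{thm:balance} both previous propositions apply.

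Next I would invoke the two propositions in parallel. Since the chosen learning rate $\eta$ is at most $\min\bigl\{\frac{C_1}{4(2(C_1+C^L)C^{L-1}+\lambda^2 C)},\ \frac{2\lambda\varepsilon_1}{4(C_1+C^L)C^{L-1}+\lambda^2\varepsilon_1}\bigr\}$, it in particular satisfies the bound required by Proposition \ref{prop:bound}, yielding $\|W_l(t+1)\|_F^2 \leq C$ for every $l$; and it also satisfies the bound required by Proposition \ref{prop:balance}, yielding $\|W_l(t+1) W_l(t+1)^\top - W_{l+1}(t+1)^\top W_{l+1}(t+1)\|_2 \leq \varepsilon$ for every $l$. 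Combining the two conclusions gives precisely $\theta(t+1) \in B_{C,\varepsilon}$.

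Since both halves are already established and the combination is purely logical, there is essentially no obstacle in this argument — the only subtlety is making sure the learning rate bound in Theorem \ref{thm:balance} really is the minimum of the two separate bounds, and that the condition $C \geq C_1/(2\lambda)$ used in Proposition \ref{prop:bound} is inherited from the hypothesis of the theorem. No new estimate needs to be performed; the theorem is a clean packaging of the two preceding propositions into a single invariance statement for the set $B_{C,\varepsilon}$.
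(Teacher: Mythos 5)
Your proposal is correct and matches the paper exactly: the paper itself introduces Theorem~\ref{thm:balance} with the sentence ``With Proposition~\ref{prop:bound} and Proposition~\ref{prop:balance}, we have the following statement'' and gives no further argument, so the intended proof is precisely the conjunction of the two propositions as you describe. You also correctly read the hypothesis ``$C\le 1/2\lambda$'' as a typo for $C\geq C_1/(2\lambda)$, which is what Proposition~\ref{prop:bound} actually requires.
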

\subsection{Approximate Rank-$r$}
In this section, we prove the following theorem that the weight $W_l(t+1)$ of each layer is approximately rank-$r$.

\begin{theorem}
\label{thm:rank k}
    For any $\varepsilon_1,\varepsilon_2>0$ such that $\varepsilon_2<1/2$ and $\sqrt{\varepsilon_1}\le \frac{\lambda\alpha\varepsilon_2}{32nL(r+1)C^{\frac{L-1}{2}}\sqrt{2(C_1+C^L)}}$, if the number of layers $L\ge 3$ and $\theta(t)\in B$, then stochastic gradient descent iteration with learning rate $\eta\le \min\left\{ \frac{\lambda\alpha\varepsilon_2}{32n(r+1)(2(C_1+C^L)C^{L-1}+\lambda^2 C)},\frac{2(r+1)}{\lambda}\right\}$ satisfies $\theta(t+1)\in B_{r,\varepsilon_2}$, where $n$ is the maximal widths and heights of weight matrices. 
\end{theorem}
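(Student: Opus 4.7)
The plan is to show that for every layer $l$, the smooth rank proxy $G_l(\theta) := F_\alpha(\sigma(W_l^\top W_l)) = \sum_i f_\alpha(s_i(W_l^\top W_l))$ is non-increasing under one SGD step, up to an additive slack small enough to keep $G_l \le r+\varepsilon_2$. Since $W_l^\top W_l$ is symmetric PSD its singular values coincide with its eigenvalues, so the spectral calculus of Lemmas~\ref{lemma:1-derivative}--\ref{lemma:2-derivative} applies directly to $F_\alpha\circ\lambda$. First I would write the exact one-step update
\[
W_l(t+1)^\top W_l(t+1) = W_l^\top W_l + \Delta_l, \qquad
\Delta_l = -2\eta\lambda\, W_l^\top W_l \;-\; \eta\bigl(W_l^\top T_l + T_l^\top W_l\bigr) \;+\; \mathcal{E}_l,
\]
with $\mathcal{E}_l = \eta^2(T_l + \lambda W_l)^\top(T_l + \lambda W_l)$ satisfying $\|\mathcal{E}_l\|_F = O(\eta^2 (2(C_1+C^L)C^{L-1} + \lambda^2 C))$ by Lemma~\ref{lemma:G} and the boundedness $\|W_l\|_F^2\le C$. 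Diagonalising $W_l^\top W_l = U\,\mathrm{Diag}(s)\,U^\top$, Lemma~\ref{lemma:1-derivative} gives
\[
G_l(t+1) - G_l(t) \;=\; \sum_i f_\alpha'(s_i)\,(U^\top \Delta_l U)_{ii} \;+\; R_2,
\]
where $R_2$ is the second-order remainder controlled by Lemma~\ref{lemma:2-derivative} and is $O(\eta^2/\alpha)$.

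The linear term splits into three pieces. The regulariser contributes $-2\eta\lambda\sum_i f_\alpha'(s_i)s_i \le 0$ (it is in fact strictly negative on every $i$ with $0<s_i<\alpha$, which is the whole support of $f_\alpha'$). The quadratic error $\mathcal{E}_l$ and the $\mathcal{A}(s)$ off-diagonal term of Lemma~\ref{lemma:2-derivative} are bounded in absolute value by a multiple of $\eta^2(2(C_1+C^L)C^{L-1}+\lambda^2 C)/\alpha$, which the hypothesis $\eta \le \lambda\alpha\varepsilon_2/[32n(r+1)(2(C_1+C^L)C^{L-1}+\lambda^2 C)]$ is tailored to absorb. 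So the only real danger is the bilinear SGD-gradient piece $-2\eta\sum_i f_\alpha'(s_i)(U^\top W_l^\top T_l\, U)_{ii}$.

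The heart of the argument is the bilinear estimate. Since $f_\alpha'(s_i)=0$ for $s_i>\alpha$, only the at most $r+1$ indices with $s_i\le\alpha$ matter (we already have $F_\alpha\le r+\varepsilon_2$ with $\varepsilon_2<1/2$, so at most $r$ singular values exceed $\alpha$, and the remaining ones are the ``small'' directions). Writing $W_l^\top T_l = B_l^\top G_{\theta,ij} C_l^\top$ with $B_l = W_L\cdots W_l$ and $C_l = W_{l-1}\cdots W_1$, I would use iterated approximate balancedness (Theorem~\ref{thm:balance}, applied inductively to bound $\|W_k W_k^\top - W_{k'}^\top W_{k'}\|_2$) to show that each eigenvector $u_i$ of $W_l^\top W_l$ with $s_i\le\alpha$ is, up to an $O(L\sqrt{\varepsilon_1})$ rotation, simultaneously a small singular direction of every $W_k$. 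Consequently $\|B_l u_i\|_2 \le s_i^{(L-l+1)/L}\cdot(\text{balance error})$ and $\|C_l^\top u_i\|_2 \le s_i^{(l-1)/L}\cdot(\text{balance error})$, which cascade to the estimate
\[
\bigl|(U^\top W_l^\top T_l\, U)_{ii}\bigr| \;\le\; s_i\,\|G_{\theta,ij}\|_F \;+\; L\sqrt{\varepsilon_1}\,C^{(L-1)/2}\,\|G_{\theta,ij}\|_F.
\]
Combined with $f_\alpha'(s_i)\le 2/\alpha$, $\|G_{\theta,ij}\|_F\le\sqrt{2(C_1+C^L)}$, and the at-most-$(r+1)$-term sum, the bilinear contribution is $\le 4\eta(r+1)/\alpha\cdot[\alpha^{L/2}\cdot\sqrt{2(C_1+C^L)} + L\sqrt{\varepsilon_1}\,C^{(L-1)/2}\sqrt{2(C_1+C^L)}]$. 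The assumed bounds $\alpha^{L-2}\le\lambda^2/(2(C_1+C^L))$ and $\sqrt{\varepsilon_1}\le \lambda\alpha\varepsilon_2/[32nL(r+1)C^{(L-1)/2}\sqrt{2(C_1+C^L)}]$ are exactly what is needed to absorb both terms into a fraction of the regulariser decrease $2\eta\lambda\sum_i f_\alpha'(s_i)s_i$.

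The principal obstacle is this bilinear piece. A priori the blow-up $f_\alpha'\le 2/\alpha$ near $s_i=0$ threatens to dominate, and the rescue is the \emph{depth cascade}: a small singular value of $W_l$ forces the gradient $T_l$, which is a product of $L-1$ other weight matrices sandwiching $G$, to be correspondingly small in that direction. This suppression is of order $s_i^{(L-1)/L}$ in the balanced case; combined with one more factor of $s_i^{1/L}$ from the $W_l^\top$ in front, this gives $s_i$, which for $s_i\le\alpha$ with $\alpha^{L-2}\lesssim\lambda^2$ is what beats the $1/\alpha$ blow-up of $f_\alpha'$ and guarantees the regulariser wins. This is precisely why the theorem requires $L\ge 3$. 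Everything else is bookkeeping: propagating the $\sqrt{\varepsilon_1}$ balance error through at most $L$ layers without losing the cascade (hence the $L\,C^{(L-1)/2}$ in the hypothesis on $\sqrt{\varepsilon_1}$), and choosing $\eta$ small enough so that the $O(\eta^2)$ remainder is negligible.
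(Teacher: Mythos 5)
Your plan is structurally the same as the paper's: Taylor-expand $F_\alpha\circ\sigma$ around $W_l^\top W_l$, isolate the regularizer term (negative), the $O(\eta^2)$ term, and the bilinear SGD-gradient term, and show the regularizer dominates. You also correctly identify the depth cascade $s^L$ vs.\ $\lambda s^2$ and the need for $\alpha^{L-2}\le\lambda^2/(2(C_1+C^L))$, which is the heart of the paper's Lemma~\ref{lemma:g epsilon}. However, there are two genuine gaps.

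The first and more serious one is that you never establish a \emph{lower bound} on the regularizer decrease $\delta=\sum_i\bigl(\lambda f_\alpha'(s_i^2)s_i^2-\sqrt{2(C_1+C^L)}\,f_\alpha'(s_i^2)s_i^L\bigr)$. You write that the $\sqrt{\varepsilon_1}$ and $O(\eta^2)$ errors are ``absorbed into a fraction of the regulariser decrease,'' but the regularizer decrease can be arbitrarily small --- indeed it vanishes if every singular value is either exactly $0$ or $\geq\alpha$. The paper resolves this via a case split: when $\sum_i f_\alpha(s_i^2)\le r+\varepsilon_2/2$ (Lemma~\ref{lemma:l epsilon}), the step size caps the total error at $\varepsilon_2/2$ in absolute terms, so no strict decrease is needed; when $\sum_i f_\alpha(s_i^2)>r+\varepsilon_2/2$ (Lemma~\ref{lemma:g epsilon}), at most $r$ indices can have $f_\alpha(s_i^2)\ge(2r+1)/(2(r+1))$, hence the remaining small-but-nonzero indices carry mass $\ge\varepsilon_2/2$, from which a careful estimate of $g(x)=f_\alpha'(x)(\lambda x-\sqrt{2(C_1+C^L)}x^{L/2})$ on the interval $x<\alpha$ yields $\delta\ge\lambda\varepsilon_2/(8(r+1))$. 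That quantitative lower bound --- and the threshold trick that produces it --- is what makes the error absorption go through, and it is absent from your argument.

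The second gap is the claim that ``only the at most $r+1$ indices with $s_i\le\alpha$ matter,'' giving you a factor of $r+1$ in the bilinear bound. The constraint $F_\alpha\le r+\varepsilon_2$ only caps the number of \emph{large} singular values at $r$; there can be up to $n$ indices with $s_i^2\le\alpha$, so the correct bound on $\sum_i f_\alpha'(s_i^2)$ is $2n/\alpha$, not $2(r+1)/\alpha$. The hypotheses of the theorem involve $n$ for precisely this reason, and your accounting does not recover them. Separately, your bilinear estimate proceeds by claiming each small eigenvector $u_i$ of $W_l^\top W_l$ is, up to an $O(L\sqrt{\varepsilon_1})$ rotation, a small singular direction of every $W_k$; this per-eigenvector statement is delicate when eigenvalues are close or degenerate. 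The paper avoids the issue by bounding the trace directly: it Cauchy--Schwarzes against $\|G_{\theta,ij}\|_F$, then repeatedly substitutes $W_kW_k^\top\to W_{k+1}^\top W_{k+1}+E_k$ inside the trace and propagates the error $E_k$ in operator norm, which gives the same $s_i^L$ leading term plus an additive $L\sqrt{\varepsilon_1}C^{(L-1)/2}$ error without ever touching eigenvector perturbation.
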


\noindent\textit{Proof.} We denote by $r_l$ the minima of height and width of $W_l$ and the singular value decomposition $W_l=\Tilde{U_l}^\top S_l \Tilde{V}_l$, where $\Tilde{U}_l$ and $\Tilde{V}_l$ are orthogonal matrices.
Let $f_\alpha$ and $F_\alpha$ be as defined in Theorem \ref{thm:main}. By Taylor's expansion, for any $l$ we have
\begin{equation}
    \begin{split}    &F_\alpha\circ\sigma\left(W_l(t+1)^\top W_l(t+1)\right)\\
    =&F_\alpha\circ\sigma\left(W_l^\top W_l-\eta W_l^\top T_l-\eta T_l^\top W_l-2\eta\lambda W_l^\top W_l+\eta^2(T_l+\lambda W_l)^\top(T_l+\lambda W_l)\right)\\
    =&F_\alpha\circ\sigma(W_l^\top W_l)-\left\langle \nabla(F_\alpha\circ\sigma)(W_l^\top W_l),\eta(W_l^\top T_l+T_l^\top W_l)\right\rangle\\
    &\quad\quad\quad\quad-\left\langle\nabla(F_\alpha\circ\sigma(W_l^\top W_l),2\eta\lambda W_l^\top W_l\right\rangle\\    &\quad\quad\quad\quad+\left\langle\nabla(F_\alpha\circ\sigma)(W_l^\top W_l),\eta^2 (T_l+\lambda W_l)^\top(T_l+\lambda W_l)\right\rangle\\
    &\quad\quad\quad\quad +\nabla^2(F_\alpha\circ\sigma)(W_l^\top W_l+\gamma\eta \Delta)[\eta\Delta,\eta\Delta],
    \end{split}
\end{equation}
where $\gamma\in(0,1)$ and $\Delta=- W_l^\top T_l- T_l^\top W_l-2\lambda W_l^\top W_l+\eta(T_l+\lambda W_l)^\top(T_l+\lambda W_l)$. By Lemma \ref{lemma:1-derivative}, we have 
$$
\nabla(F_\alpha\circ\sigma)(W_l^\top W_l)=\Tilde{V}_l^\top\mathrm{diag}\{f_\alpha'(s_1^2),\dots,f_\alpha'(s_{r_l}^2)\}\Tilde{V}_l,
$$
where $s_1,\dots,s_{r_l}$ are the entries of diagonal matrix $S_l$. Denote $\mathrm{diag}\{f_\alpha'(s_1^2),\dots,f_\alpha'(s_{r_l}^2)\}=f_\alpha'(S^2)$. Then 
$$
F_\alpha\circ\sigma\left(W_l(t+1)^\top W_l(t+1)\right)\le \sum_{i=1}^{r_l} f_\alpha(s_i^2)-2\eta\lambda f_\alpha'(s_i^2)s_i^2+2\eta\left|\mathrm{Tr}\left(\Tilde{V}_l f_\alpha'(S^2)\Tilde{V}_l W_l^\top T_l\right)\right|+\beta,
$$
where $\beta$ is the $O(\eta^2)$ term. 
Now we can estimate the trace term
\begin{equation}
    \begin{split}
        &\quad\left|\mathrm{Tr}\left(\Tilde{V}_l f_\alpha'(S^2)\Tilde{V}_l W_l^\top T_l\right)\right|\\
        &=\left|\mathrm{Tr}\left(W_1^\top \cdots W_{l-1}^\top \Tilde{V}_l f_\alpha'(S^2)\Tilde{V}_l W_l^\top \cdots W_L^\top G_{\theta, ij}\right)\right|\\
        &\le \left\|W_1^\top \cdots W_{l-1}^\top \Tilde{V}_l f_\alpha'(S^2)\Tilde{V}_l W_l^\top \cdots W_L^\top\right\|_F \left\|G_{\theta,ij}\right\|_F\\
        &\le \sqrt{2(C_1+C^L)}\left\|W_1^\top \cdots W_{l-1}^\top \Tilde{V}_l f_\alpha'(S^2)\Tilde{V}_l W_l^\top \cdots W_L^\top\right\|_F\\
        &= \sqrt{2(C_1+C^L)} \sqrt{\mathrm{Tr}\left((W_1^\top \cdots W_{l-1}^\top \Tilde{V}_l f_\alpha'(S^2)\Tilde{V}_l W_l^\top \cdots W_L^\top)^\top W_1^\top \cdots W_{l-1}^\top \Tilde{V}_l f_\alpha'(S^2)\Tilde{V}_l W_l^\top \cdots W_L^\top\right)}\\
        &= \sqrt{2(C_1+C^L)} \sqrt{\mathrm{Tr}\left(W_{l-1}\cdots W_1 W_1^\top \cdots W_{l-1}^\top \Tilde{V}_l f_\alpha'(S^2)\Tilde{V}_l W_l^\top \cdots W_L^\top W_L \cdots W_l \Tilde{V}_l f_\alpha'(S^2)\Tilde{V}_l\right)},
    \end{split}
\end{equation}
where the first inequality is from Fact \ref{fact:trace}.

Let $E_k=\sum_{i=1}^k (W_{k+1}^\top W_{k+1})^{i-1} (W_kW_k^\top- W_{k+1}^\top W_{k+1})(W_kW_k^\top)^{k-i}$ for $k<l$ and $E_k=\sum_{i=k}^L (W_{k-1} W_{k-1}^\top)^{L-i} (W_k^\top W_k- W_{k+1} W_{k+1}^\top)(W_k^\top W_k)^{i-k}$ for $k>l$. Then we have $$(W_k W_k^\top)^k =(W_{k+1}^\top W_{k+1})^k+E_k$$ for $k<l$ and $$(W_k^\top W_k)^{L-k+1} =(W_{k-1} W_{k-1}^\top)^{L-k+1}+E_k$$ for $k>l$. Thus, 
\begin{equation}
    \begin{split}
        \label{eq:expansion}
        &\quad \mathrm{Tr}\left(W_{l-1}\cdots W_1 W_1^\top \cdots W_{l-1}^\top \Tilde{V}_l^\top f_\alpha'(S^2)\Tilde{V}_l W_l^\top \cdots W_L^\top W_L \cdots W_l \Tilde{V}_l^\top f_\alpha'(S^2)\Tilde{V}_l\right)\\
        &\le \left|\mathrm{Tr}\left((W_l^\top W_l)^{l-1}\Tilde{V}_l^\top f_\alpha'(S^2)\Tilde{V}_l (W_l^\top W_l)^{L-l+1}\Tilde{V}_l^\top f_\alpha'(S^2)\Tilde{V}_l\right)\right|+\sum_{k\neq l}|\mathrm{Tr}(\mathcal{E}_k)|\\
        &=\left|\mathrm{Tr}\left(\Tilde{V}_l^\top (f_\alpha'(S^2))^2S^{2L}\Tilde{V}_l\right)\right|+\sum_{k\neq l}|\mathrm{Tr}(\mathcal{E}_k)|,
    \end{split}
\end{equation}
where $$\mathcal{E}_k=W_{l-1}\cdots W_{k+1} E_k W_{k+1}^\top \cdots W_{l-1}^\top \Tilde{V}_l^\top f_\alpha'(S^2)\Tilde{V}_l W_l^\top \cdots W_L^\top W_L \cdots W_l \Tilde{V}_l^\top f_\alpha'(S^2)\Tilde{V}_l$$ for $k<l$ and $$\mathcal{E}_k=(W_l^\top W_l)^{l-1}\Tilde{V}_l^\top f_\alpha'(S^2)\Tilde{V}_l W_l^\top \cdots W_{k-1}^\top E_k W_{k-1} W_l\Tilde{V}_l^\top f_\alpha'(S^2)\Tilde{V}_l$$ for $k>l$.

\paragraph{Second term in \eqref{eq:expansion}.}
Denote $\mathcal{E}_k= \mathcal{C}_k \Tilde{V}_l^\top f_\alpha'(S^2)\Tilde{V}_l$.
We define an operator $\mathcal{S}(A)$ equals to the sum of all singular values of $A$. Then by Fact \ref{fact:eigen}, $\left|\mathrm{Tr}(\mathcal{E}_k)\right|\le \mathcal{S}(\mathcal{E}_k)$. Since $\|W_s\|_2\le\|W_s\|_F\le \sqrt{C}$,  $\|E_k\|_2\le k\varepsilon_1 C^{k-1}$ for $k<l$ and $\|E_l\|_2\le (L-k+1)\varepsilon_1 C^{L-k}$ by \ref{thm:balance}. Since $\|\Tilde{V}_l\top f_\alpha'(S^2)\|_2 \Tilde{V}_l\le \mathrm{Tr}(f_\alpha'(S^2))$, we have $\|\mathcal{C}_k\|_2\le \mathrm{Tr}(f_\alpha'(S^2)) C^{L-1} k \varepsilon_1 $ for $k<l$ and $\|\mathcal{C}_k\|_2\le \mathrm{Tr}(f_\alpha'(S^2)) C^{L-1} (L-k+1) \varepsilon_1$ for $k>l$. Thus, by Fact \ref{fact:prod},
\begin{equation}
    \begin{split}
        \sum_{k\neq l}|\mathrm{Tr}(\mathcal{E}_k)|&\le \mathcal{S}(f_\alpha'(S^2)) \sum_{k\neq l}\|\mathcal{C}_k\|_2\\
        &\le \mathcal{S}(f_\alpha'(S^2))\mathrm{Tr}(f_\alpha'(S^2)) C^{L-1} \varepsilon_1 \left(\sum_{k<l}k+\sum_{k>l} (L-k+1)\right)\\
        &\le C^{L-1}\frac{L^2}{2}\varepsilon_1\left(\sum_{i=1}^{r_l} f_\alpha'(s_i^2)\right)^2.
    \end{split}
\end{equation}
\paragraph{First term in \eqref{eq:expansion}.}
By Fact \ref{fact:eigen} and Fact \ref{fact:prod}, we have
$$
\left|\mathrm{Tr}\left(\Tilde{V}_l^\top (f_\alpha'(S^2))^2S^{2L}\Tilde{V}_l\right)\right|\le \mathcal{S}\left(\Tilde{V}_l^\top (f_\alpha'(S^2))^2S^{2L}\Tilde{V}_l\right)\le \sum_{i=1}^{r_l} (f_\alpha'(s_i^2))^2s_i^{2L}\le (\sum_{i=1}^{r_l} f_\alpha'(s_i^2)s_i^{L})^2.
$$

Then we have that 
\begin{equation*}
    \begin{split}
        &\quad\sqrt{\mathrm{Tr}\left(W_{l-1}\cdots W_1 W_1^\top \cdots W_{l-1}^\top \Tilde{V}_l f_\alpha'(S^2)\Tilde{V}_l W_l^\top \cdots W_L^\top W_L \cdots W_l \Tilde{V}_l f_\alpha'(S^2)\Tilde{V}_l\right)}\\
        &\le \sqrt{(\sum_{i=1}^{r_l} f_\alpha'(s_i^2)s_i^{L})^2+C^{L-1}\frac{L^2}{2}\varepsilon_1(\sum_{i=1}^{r_l} f_\alpha'(s_i^2))^2}\\
        &\le \sum_{i=1}^{r_l} f_\alpha'(s_i^2)s_i^{L}+C^{\frac{L-1}{2}}L\sqrt{\varepsilon_1} \sum_{i=1}^{r_l} f_\alpha'(s_i^2)
    \end{split}
\end{equation*}
We denote $\delta:=\sum_{i=1}^{r_l}\lambda f_\alpha'(s_i^2)s_i^2-\sqrt{2(C_1+C^L)}f_\alpha'(s_i^2)s_i^L$ and $\mathcal{E}:=\sqrt{2(C_1+C^L)}C^{\frac{L-1}{2}}L\sqrt{\varepsilon_1} \sum_{i=1}^{r_l} f_\alpha'(s_i^2)$. Then 
$$
F_\alpha\circ\sigma\left(W_l(t+1)^\top W_l(t+1)\right)\le \sum_{i=1}^nf_\alpha(s_i^2)-2\eta\delta+2\eta\mathcal{E}+\beta.
$$
When $\sum_{i=1}^{r_l}f_\alpha(s_i^2)>r+\varepsilon_2/2$, we have $-2\eta\delta+2\eta\mathcal{E}+\beta\le 0$ by Lemma \ref{lemma:g epsilon} and when $\sum_{i=1}^{r_l}f_\alpha(s_i^2)\le r+\varepsilon_2/2$, we have $-2\eta\delta+2\eta\mathcal{E}+\beta\le \varepsilon_2/2$ by Lemma \ref{lemma:l epsilon}. Then $F_\alpha\circ\sigma\left(W_l(t+1)^\top W_l(t+1)\right)\le r+\varepsilon_2$.\qed

\subsection{Bounds on Error Terms}
The first derivative of $f_\alpha$ is 
$$
f_\alpha'(x)=
\left\{ 
\begin{aligned}
&\frac{2}{\alpha}-\frac{2}{\alpha^2}x,\ &x\le \alpha,\\
&0,\ &x>\alpha.
\end{aligned}
\right.
$$
The second derivative of $f_\alpha$ is
$$
f_\alpha''(x)=
\left\{ 
\begin{aligned}
&-\frac{2}{\alpha^2},\ &x\le \alpha,\\
&0,\ &x>\alpha.
\end{aligned}
\right.
$$
Thus, we have $f_\alpha(x)\in [0,1]$, $f_\alpha'(x)\in [0,\frac{2}{\alpha}]$ and $f_\alpha''(x)\le 0$.

\begin{lemma}
\label{lemma:eta2}
With same conditions and notations in Theorem \ref{thm:rank k}, the $O(\eta^2)$ term 
$$
\beta\le2\eta^2(2(C_1+C^L)C^{L-1}+\lambda^2C)\sum_{i=1}^{r_l} f_\alpha'(s_i^2).
$$
\end{lemma}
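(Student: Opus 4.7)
The plan is to split $\beta$ into the two $O(\eta^2)$ contributions coming from Taylor's expansion, namely
\[
A := \left\langle \nabla(F_\alpha\circ\sigma)(W_l^\top W_l),\,\eta^2(T_l+\lambda W_l)^\top(T_l+\lambda W_l)\right\rangle
\]
and the Hessian remainder
\[
B := \nabla^2(F_\alpha\circ\sigma)(W_l^\top W_l + \gamma\eta\Delta)[\eta\Delta,\eta\Delta],
\]
then to bound $A$ directly and to argue that $B\le 0$.

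For the first piece, by Lemma \ref{lemma:1-derivative} the gradient $\nabla(F_\alpha\circ\sigma)(W_l^\top W_l) = \tilde V_l^\top f_\alpha'(S^2)\tilde V_l$ is positive semidefinite with trace $\sum_{i=1}^{r_l} f_\alpha'(s_i^2)$, and the matrix $(T_l+\lambda W_l)^\top(T_l+\lambda W_l)$ is also PSD. The trace-PSD inequality $\mathrm{Tr}(XY)\le \|Y\|_{\mathrm{op}}\,\mathrm{Tr}(X)$ then gives
\[
A \le \eta^2\,\|T_l+\lambda W_l\|_{\mathrm{op}}^2\sum_{i=1}^{r_l}f_\alpha'(s_i^2)\le \eta^2\,\|T_l+\lambda W_l\|_F^2\sum_{i=1}^{r_l}f_\alpha'(s_i^2),
\]
and the Frobenius bound $\|T_l+\lambda W_l\|_F^2\le 2(2(C_1+C^L)C^{L-1}+\lambda^2 C)$ follows exactly as in the proof of Proposition \ref{prop:bound}, using $\theta_t\in B_{C,\varepsilon_1}$ together with Lemma \ref{lemma:G}. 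This yields the stated bound on $A$.

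For the second piece, the key observation is that $f_\alpha$ is concave, since $f_\alpha''\le 0$ pointwise (either $-2/\alpha^2$ on $[0,\alpha]$ or $0$ on $(\alpha,\infty)$). Applying the structure of Lemma \ref{lemma:2-derivative} to $F_\alpha=\sum_i f_\alpha(x_i)$, the Hessian $\nabla^2 F_\alpha$ is diagonal with nonpositive entries, so the first summand of the Hessian formula is nonpositive. Every entry of $\mathcal{A}$ is nonpositive as well: on the block-diagonal positions the entries are $b_i=f_\alpha''$, and the off-diagonal entries are the divided differences $(f_\alpha'(\mu_i)-f_\alpha'(\mu_j))/(\mu_i-\mu_j)$, which are $\le 0$ since $f_\alpha'$ is nonincreasing. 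Paired against the entrywise square $\tilde H\circ\tilde H$, the inner product is therefore nonpositive, so $B\le 0$. Adding the two contributions gives $\beta=A+B\le A$, which is the desired inequality.

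The main obstacle is the bookkeeping behind the bound $\|T_l+\lambda W_l\|_F^2\le 2(2(C_1+C^L)C^{L-1}+\lambda^2 C)$, but this is essentially a repetition of the computation in Proposition \ref{prop:bound} and is routine. A minor subtlety is that the Hessian remainder is evaluated at the perturbed point $W_l^\top W_l+\gamma\eta\Delta$ rather than at $W_l^\top W_l$ itself, but the concavity argument uses only the pointwise signs of $f_\alpha''$ and the monotonicity of $f_\alpha'$, which hold globally on $[0,\infty)$, so the evaluation point does not affect the sign of $B$.
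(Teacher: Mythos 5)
Your proposal is correct and takes essentially the same route as the paper: split $\beta$ into the first-order term and the Hessian remainder, bound the first term by $\eta^2\,\mathrm{Tr}(f_\alpha'(S^2))\,\|T_l+\lambda W_l\|_F^2$ and reuse the Frobenius bound from Proposition~\ref{prop:bound}, and show the Hessian remainder is nonpositive via Lemma~\ref{lemma:2-derivative} because $f_\alpha''\le 0$ and every entry of $\mathcal{A}$ is nonpositive. Your write-up is slightly more explicit than the paper in two places — it justifies the nonpositivity of the off-block entries of $\mathcal{A}$ by the monotonicity of $f_\alpha'$ rather than by writing them out case by case, and it notes that the sign argument is insensitive to the perturbed evaluation point $W_l^\top W_l+\gamma\eta\Delta$ — but these are presentation-level refinements of the same argument, not a different method.
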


\noindent\textit{Proof.} As defined in Theorem \ref{thm:rank k}, 
\begin{equation}
    \begin{split}
    \label{eq:beta}
        \beta&=\left\langle\nabla(F_\alpha\circ\sigma)(W_l^\top W_l),\eta^2 (T_l+\lambda W_l)^\top(T_l+\lambda W_l)\right\rangle\\
    &\quad\quad+\nabla^2(F_\alpha\circ\sigma)(W_l^\top W_l+\gamma\eta\Delta)[\eta\Delta,\eta\Delta].
    \end{split}
\end{equation}
We bound the two terms in \eqref{eq:beta} separately.
\paragraph{First term in \eqref{eq:beta}.}
By Fact \ref{fact:prod} and the proof of Proposition \ref{prop:bound}, we have
\begin{equation}
    \begin{split}
        \left\langle\nabla(F_\alpha\circ\sigma)(W_l^\top W_l),\right.&\left.\eta^2 (T_l+\lambda W_l)^\top(T_l+\lambda W_l)\right\rangle\\
        \le&\eta^2\mathrm{Tr}(f_\alpha'(S^2))\|T_l+\lambda W_l\|_2^2\\
        \le&2\eta^2(2(C_1+C^L)C^{L-1}+\lambda^2C)\sum_{i=1}^{r_l}f_\alpha'(s_i^2)
    \end{split}
\end{equation}

\paragraph{Second term in \eqref{eq:beta}.}
By Lemma \ref{lemma:2-derivative}, 
$$
\begin{aligned}
    \nabla^2(F_\alpha\circ\sigma)&(W_l^\top W_l+\gamma\eta\Delta)[\eta\Delta,\eta\Delta]\\
    =&\eta^2\left[\nabla^2 F_\alpha(\sigma(W_l^\top W_l+\gamma\eta\Delta))[\mathrm{diag}\Tilde{\Delta},\mathrm{diag}\Tilde{\Delta}]+\langle \mathcal{A}(\sigma(W_l^\top W_l+\gamma\eta\Delta)),\Tilde{\Delta}\circ\Tilde{\Delta}\rangle\right]
\end{aligned}
$$
where $\Tilde{\Delta}=\Tilde{V}_l \Delta \Tilde{V}_l^\top$ and 
\begin{equation*}
\mathcal{A}_{ij}(\sigma(W_l^\top W_l+\gamma\eta\Delta))= \begin{cases}-\frac{2}{\alpha^2} & \text { if } i \neq j \text { but } \Tilde{s}_i^2, \Tilde{s}_j^2\le \alpha, \\ \frac{\frac{2}{\alpha}-\frac{2}{\alpha^2}\Tilde{s}_i^2}{\Tilde{s}_i^2-\Tilde{s}_j^2} & \text { if }\Tilde{s}_i^2\le \alpha,\ \Tilde{s}_j^2>\alpha,\\ \frac{\frac{2}{\alpha}-\frac{2}{\alpha^2}\Tilde{s}_j^2}{\Tilde{s}_j^2-\Tilde{s}_i^2} & \text { if }\Tilde{s}_i^2> \alpha,\ \Tilde{s}_j^2\le\alpha,\\ 0 & \text { otherwise, } \\ \end{cases}
\end{equation*}
where $\Tilde{s}_1,\dots \Tilde{s}_{r_l}$ are the eigenvalues of $W_l^\top W_l+\gamma\eta\Delta$. 
Since $\nabla^2 F_\alpha(\sigma(W_l^\top W_l+\theta\eta\Delta))=\mathrm{diag}\{f_\alpha''(\Tilde{s}_1^2),\dots,f_\alpha''(\Tilde{s}_{r_l}^2)\}$ with all entries non-positive, we have
$$\nabla^2 F_\alpha(\sigma(W_l^\top W_l+\gamma\eta\Delta))[\mathrm{diag}\Tilde{\Delta},\mathrm{diag}\Tilde{\Delta}]\le0.$$
Moreover, all entries of $\mathcal{A}(\sigma(W_l^\top W_l))$ are non-positive. Thus, 
$$\langle \mathcal{A}(\sigma(W_l^\top W_l+\gamma\eta\Delta)),\Tilde{\Delta}\circ\Tilde{\Delta}\rangle\le 0.$$

\noindent Overall, $\beta\le 2\eta^2(2(C_1+C^L)C^{L-1}+\lambda^2C)\sum_{i=1}^{r_l} f_\alpha'(s_i^2)$\qed

\begin{lemma}
    \label{lemma:g epsilon}
    With same conditions and notations in Theorem \ref{thm:rank k}, when $k+\varepsilon_2/2<\sum_{i=1}^{r_l} f_\alpha(s_i^2)\le r+\varepsilon_2$, we have $-2\eta\delta+2\eta\mathcal{E}+\beta\le 0.$
\end{lemma}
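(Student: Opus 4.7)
The inequality $-2\eta\delta+2\eta\mathcal{E}+\beta\le 0$ is equivalent to $\delta\ge \mathcal{E}+\beta/(2\eta)$. Substituting the bound $\beta\le 2\eta^{2} M\sum_{i}f_\alpha'(s_i^{2})$ from the preceding $O(\eta^{2})$ lemma, with $M:=2(C_1+C^{L})C^{L-1}+\lambda^{2}C$, the goal becomes
\[
\sum_{i}f_\alpha'(s_i^{2})\bigl[\lambda s_i^{2}-\sqrt{2(C_1+C^{L})}\,s_i^{L}\bigr]\;\ge\;\bigl(\mathcal{E}_0+\eta M\bigr)\sum_{i}f_\alpha'(s_i^{2}),
\]
where $\mathcal{E}_0:=\sqrt{2(C_1+C^{L})}\,C^{(L-1)/2}L\sqrt{\varepsilon_1}$. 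Since $f_\alpha'(x)=0$ for $x>\alpha$, both sides are supported on $I_{s}:=\{i:s_i^{2}\le\alpha\}$. Using the theorem's bound on $\alpha$ (tightening the constant by a factor of two if necessary to absorb the degree-$L$ term), one gets $\lambda s_i^{2}-\sqrt{2(C_1+C^{L})}\,s_i^{L}\ge(\lambda/2)s_i^{2}$ on $I_{s}$, reducing the problem to
\[
(\lambda/2)\sum_{i\in I_{s}}f_\alpha'(s_i^{2})\,s_i^{2}\;\ge\;(\mathcal{E}_0+\eta M)\sum_{i\in I_{s}}f_\alpha'(s_i^{2}).
\]

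\textbf{Counting lemma.} I partition $I_{s}=I_{A}\sqcup I_{B}$ with $I_{A}:=\{s_i^{2}\le\alpha/2\}$, $I_{B}:=\{\alpha/2<s_i^{2}\le\alpha\}$, and let $k:=|\{i:s_i^{2}>\alpha\}|$. On $I_{B}$ one has $f_\alpha(s_i^{2})>3/4$ strictly, and indices outside $I_{s}$ contribute exactly $1$ to $\sum f_\alpha$. Together with the upper bound $\sum_{i}f_\alpha(s_i^{2})\le r+\varepsilon_{2}$ and the strict inequalities from $I_{B}$, a direct arithmetic argument (using $\varepsilon_{2}<1/2$ and the integrality of $k,|I_{B}|$) shows that $N:=k+|I_{B}|\le r$. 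The hypothesis $\sum_{i}f_\alpha(s_i^{2})>r+\varepsilon_{2}/2$ together with $\sum_{I_{B}}f_\alpha\le|I_{B}|$ then yields $\sum_{i\in I_{A}}f_\alpha(s_i^{2})>r+\varepsilon_{2}/2-N\ge\varepsilon_{2}/2$.

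\textbf{Closing the estimate.} On $I_{A}$ I apply the pointwise identity $f_\alpha'(x)x=f_\alpha(x)-(x/\alpha)^{2}$, combined with $(x/\alpha)^{2}\le(x/\alpha)/2\le f_\alpha(x)/2$ for $x\le\alpha/2$, to conclude $f_\alpha'(s_i^{2})\,s_i^{2}\ge f_\alpha(s_i^{2})/2$. Hence
\[
\mathrm{LHS}\;\ge\;(\lambda/2)\sum_{i\in I_{A}}f_\alpha'(s_i^{2})\,s_i^{2}\;\ge\;(\lambda/4)\sum_{i\in I_{A}}f_\alpha(s_i^{2})\;>\;\lambda\varepsilon_{2}/8.
\]
For the right-hand side, the uniform bound $f_\alpha'\le 2/\alpha$ together with $|I_{s}|\le r_{l}\le n$ gives $\sum_{i\in I_{s}}f_\alpha'(s_i^{2})\le 2n/\alpha$; the theorem's conditions $\sqrt{\varepsilon_{1}}\le\lambda\alpha\varepsilon_{2}/(32nL(r+1)C^{(L-1)/2}\sqrt{2(C_1+C^{L})})$ and $\eta\le\lambda\alpha\varepsilon_{2}/(32n(r+1)M)$ force $\mathcal{E}_0+\eta M\le\lambda\alpha\varepsilon_{2}/(16n(r+1))$, so $\mathrm{RHS}\le\lambda\varepsilon_{2}/(8(r+1))\le\lambda\varepsilon_{2}/8\le\mathrm{LHS}$, proving the inequality.

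\textbf{Main obstacle.} The delicate step is the counting claim $N\le r$. When $\varepsilon_{2}$ approaches $1/2$ and $r$ is large, configurations with several $I_{B}$ indices with $s_i^{2}$ slightly above $\alpha/2$ can in principle push the arithmetic close to the boundary $N=r+1$. Handling this rigorously will require exploiting the strict inequality $f_\alpha>3/4$ on $I_{B}$ and the integrality of $k,|I_{B}|$, possibly by refining the split threshold between $I_{A}$ and $I_{B}$ from $\alpha/2$ to a value depending on $\varepsilon_{2}$. If the naive threshold turns out to be insufficient in an edge regime, the residual deficit on the $I_{A}$ side can be compensated by noting that the $I_{B}$ contribution $f_\alpha'(s_i^{2})s_i^{2}\ge(\alpha/2)f_\alpha'(s_i^{2})$ provides excess LHS whenever $\lambda\alpha/4\ge\mathcal{E}_0+\eta M$, which the theorem's hypotheses readily guarantee.
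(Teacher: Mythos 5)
Your overall plan matches the paper's (reduce to $\delta\ge\mathcal{E}+\beta/(2\eta)$, bound the right side via $\sum f_\alpha'\le 2n/\alpha$, and get a quantitative lower bound on $\delta$ from a counting argument), but the key ``counting lemma'' you rely on is \emph{false}. You claim that $N=k+|I_B|\le r$ with the $r$-independent threshold $s_i^2=\alpha/2$ (i.e.\ $f_\alpha=3/4$). Counterexample: take $r=3$, $\varepsilon_2=0.49$, one index with $f_\alpha(s_i^2)=1$, three indices with $f_\alpha(s_i^2)=0.76$ (so $s_i^2>\alpha/2$), and small $I_A$-indices summing to $\sum_{I_A}f_\alpha=0.1$. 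Then $\sum f_\alpha=3.38\in(r+\varepsilon_2/2,\,r+\varepsilon_2]$, yet $N=4>r$; your downstream conclusion $\sum_{I_A}f_\alpha>\varepsilon_2/2=0.245$ also fails. The arithmetic ``$k+\tfrac34 m<r+\tfrac12$'' only forces $N\le r$ when $m\le 2$; for $m\ge 3$ it does not.

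The paper sidesteps this by choosing the $r$-dependent threshold $M_r=1-\tfrac{1}{2(r+1)}$: at most $r$ indices can have $f_\alpha\ge M_r$ (else $\sum\ge (r+1)M_r=r+\tfrac12>r+\varepsilon_2$), so $\sum_{f_\alpha<M_r}f_\alpha\ge\varepsilon_2/2$ always. Because the region $\{f_\alpha<M_r\}$ reaches past $s_i^2=\alpha/2$ (indeed up to $s_i^2=\alpha(1-1/\sqrt{2(r+1)})$), the clean per-term estimate $f_\alpha'(x)x\ge f_\alpha(x)/2$ no longer applies; the paper instead proves $g(x)=f_\alpha'(x)\bigl(\lambda x-\sqrt{2(C_1+C^L)}x^{L/2}\bigr)\ge\tfrac{\lambda}{2}\,t(1-t)$ with $t=f_\alpha(x)$, and uses $(1-t)>\tfrac{1}{2(r+1)}$ on this region to obtain $g\ge\tfrac{\lambda}{4(r+1)}f_\alpha$. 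This is the step your write-up is missing. You flag the gap honestly (``main obstacle'') and suggest compensating via $I_B$ excess under $\lambda\alpha/4\ge\mathcal{E}_0+\eta M$; this may well be fixable, but as stated it is not a proof --- one has to show that whenever your counting fails and $\sum_{I_A}f_\alpha<\varepsilon_2/2$, the $I_B$ indices produce enough excess $g$ to cover the $O\bigl(\tfrac{\lambda\varepsilon_2}{8(r+1)}\bigr)$ right-hand side, and this is nontrivial because $f_\alpha'$ can be arbitrarily small on $I_B$ as $s_i^2\to\alpha$. Minor point: your step ``$\lambda x-\sqrt{2(C_1+C^L)}x^{L/2}\ge(\lambda/2)x$ on $I_s$'' needs $\alpha$ a factor $2^{-2/(L-2)}$ smaller than the theorem's bound (you acknowledge the need to ``tighten''), whereas the paper's argument uses $\ge 0$ and handles the rest through the $t(1-t)$ estimate, so no such tightening of the stated $\alpha$ is required.
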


\noindent\textit{Proof.} Define $g(x)=f_\alpha'(x)\left(\lambda x-\sqrt{2(C_1+C^L)}x^{L/2}\right)$ on $x\ge 0$. Then since $L\ge 3$, for $\alpha\le\left(\frac{\lambda^2}{2(C_1+C^L)}\right)^\frac{1}{L-2}$, $f_\alpha'(x)=0$ when $x>\alpha$ and $\lambda x-\sqrt{2(C_1+C^L)}x^{L/2}\ge 0$ when $x\le \alpha$. Thus, $g(x)\ge 0$ for any $x\ge 0$. Since $\delta=\sum_{i=1}^{r_l} g(s_i^2)$, we have $\delta\ge 0.$

\noindent Note that there are at most $r$ $s_i$'s such that $f_\alpha(s_i^2)\ge \frac{2r+1}{2(r+1)}=:M_r$. Otherwise, $$\sum_{i=1}^{r_l} f_\alpha(s_i^2)\ge (r+1)\frac{2r+1}{2(r+1)}=r+1/2>r+\varepsilon_2.$$ Specifically, when $\sum_{i=1}^{r_l} f_\alpha(s_i^2)\ge r+\varepsilon_2/2$, we have 
\begin{equation}
\label{eq:small}
    \sum_{i:f_\alpha(s_i^2)<M_r}f_\alpha(s_i^2)\ge \sum_{i=1}^{r_l} f_\alpha(s_i^2)-r\ge \frac{\varepsilon_2}{2}.
\end{equation}

\noindent For $x$ such that $f_\alpha(x)=t<M_r$, we have $\frac{1}{\alpha^2} x^2 -\frac{2}{\alpha}x+t=0$ indicating $x=\alpha(1-\sqrt{1-t})\ge \frac{t}{2}\alpha.$ Then we have
\begin{equation}
    \begin{split}
        g(x)&=\frac{2}{\alpha}\left(\lambda x-\sqrt{2(C_1+C^L)}x^{L/2}\right)\left(1-\frac{1}{\alpha}x\right)\\
        &=\frac{2}{\alpha}x\lambda\left(1-\alpha^{-\frac{L-2}{2}}x^{\frac{L-2}{2}}\right)\left(1-\frac{1}{\alpha}x\right)\\
        &\ge \lambda t\sqrt{1-t}\left(1-(1-\sqrt{1-t})^{\frac{L-2}{2}}\right).
    \end{split}
\end{equation}
Since $L\ge 3$,
$$1-(1-\sqrt{1-t})^{\frac{L-2}{2}}\ge 1-\sqrt{1-\sqrt{1-t}}=\frac{\sqrt{1-t}}{1+\sqrt{1-\sqrt{1-t}}}\ge\frac{\sqrt{1-t}}{2}.$$
Then $g(x)\ge \frac{1}{2}\lambda t(1-t)>\frac{\lambda t}{4(r+1)}$. Thus, we have
\begin{equation}
    \delta=\sum_{i=1}^{r_l} g(s_i^2)\ge \sum_{i:f_\alpha(s_i^2)<M_r} g(sx-i^2)>\sum_{i:f(s_i^2)<M_r} f_\alpha(s_i^2) \frac{\lambda}{4(r+1)}\ge \frac{\lambda\varepsilon_2}{8(r+1)},
\end{equation}
where the last inequality is by \eqref{eq:small}. Note that $\sum_{i=1}^{r_l} f_\alpha'(s_i)\le \frac{2r_l}{\alpha}\le \frac{2n}{\alpha}$. Then since $\sqrt{\varepsilon_1}\le \frac{\lambda\alpha\varepsilon_2}{32nL(r+1)C^{\frac{L-1}{2}}\sqrt{2(C_1+C^L)}}$, we have $\mathcal{E}\le \frac{\lambda\varepsilon_2}{16(r+1)}$ and since $\eta\le \frac{\lambda\alpha\varepsilon_2}{32n(r+1)(2(C_1+C^L)C^{L-1}+\lambda^2 C)}$, we have $\beta\le \eta \frac{\lambda\varepsilon_2}{8(r+1)}$. Thus,
\begin{equation}
    -2\eta\delta+2\eta\mathcal{E}+\beta\le \eta\left(-\frac{\lambda\varepsilon_2}{4(r+1)}+\frac{\lambda\varepsilon_2}{8(r+1)}+\frac{\lambda\varepsilon_2}{8(r+1)}\right)=0.
\end{equation}\qed

\begin{lemma}
    \label{lemma:l epsilon}
    With same conditions and notations in Theorem \ref{thm:rank k}, when $\sum_{i=1}^{r_l} f_\alpha(s_i^2)\le r+\varepsilon_2/2$, we have $-2\eta\delta+2\eta\mathcal{E}+\beta\le \varepsilon_2/2.$
\end{lemma}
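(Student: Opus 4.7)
The plan is to bound each of the three terms $-2\eta\delta$, $2\eta\mathcal{E}$, $\beta$ appearing in the inequality separately, using only hypotheses already assumed in Theorem \ref{thm:rank k}. Unlike Lemma \ref{lemma:l epsilon}'s sibling Lemma \ref{lemma:g epsilon}, we do \emph{not} try to extract a strictly positive lower bound on $\delta$; instead, we simply observe $\delta \geq 0$ and then absorb the other two terms into the allowed budget $\varepsilon_2/2$.

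First I would argue $-2\eta\delta \leq 0$. Recall $\delta = \sum_{i=1}^{r_l} g(s_i^2)$ with $g(x) = f_\alpha'(x)\bigl(\lambda x - \sqrt{2(C_1+C^L)}\, x^{L/2}\bigr)$. Under the hypothesis $\alpha \leq \bigl(\lambda^2/(2(C_1+C^L))\bigr)^{1/(L-2)}$ of Theorem \ref{thm:main}, for $x \leq \alpha$ we have $\lambda x \geq \sqrt{2(C_1+C^L)}\, x^{L/2}$ while for $x > \alpha$ we have $f_\alpha'(x)=0$; hence $g(x) \geq 0$ pointwise (this is exactly the opening observation of Lemma \ref{lemma:g epsilon}'s proof and does not use the size of $\sum_i f_\alpha(s_i^2)$). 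Therefore $-2\eta\delta \leq 0$ regardless of the present hypothesis.

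Next I would bound $2\eta\mathcal{E}$ and $\beta$ using the uniform estimate $\sum_{i=1}^{r_l} f_\alpha'(s_i^2) \leq 2 r_l / \alpha \leq 2n/\alpha$, which follows from $0 \leq f_\alpha'(\cdot) \leq 2/\alpha$. Substituting into the definition of $\mathcal{E}$ and plugging in the standing hypothesis
\[
\sqrt{\varepsilon_1} \;\leq\; \frac{\lambda \alpha \varepsilon_2}{32 n L (r+1) C^{(L-1)/2} \sqrt{2(C_1+C^L)}}
\]
gives $\mathcal{E} \leq \frac{\lambda \varepsilon_2}{16(r+1)}$, whence $2\eta\mathcal{E} \leq \frac{\eta \lambda \varepsilon_2}{8(r+1)}$. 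Similarly, Lemma \ref{lemma:eta2} yields $\beta \leq 2\eta^2(2(C_1+C^L)C^{L-1} + \lambda^2 C)\cdot 2n/\alpha$, and the hypothesis $\eta \leq \frac{\lambda \alpha \varepsilon_2}{32 n (r+1)(2(C_1+C^L)C^{L-1}+\lambda^2 C)}$ converts this to $\beta \leq \frac{\eta \lambda \varepsilon_2}{8(r+1)}$.

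Summing the three contributions gives
\[
-2\eta\delta + 2\eta\mathcal{E} + \beta \;\leq\; 0 + \frac{\eta \lambda \varepsilon_2}{8(r+1)} + \frac{\eta \lambda \varepsilon_2}{8(r+1)} \;=\; \frac{\eta \lambda \varepsilon_2}{4(r+1)},
\]
and the final learning-rate condition $\eta \leq 2(r+1)/\lambda$ of Theorem \ref{thm:rank k} turns this into $\leq \varepsilon_2/2$, as desired. The only potential obstacle is purely bookkeeping: making sure each of the four hyper-parameter bounds in the $\min$ defining the admissible $\eta$, together with the $\varepsilon_1$ and $\alpha$ bounds, gets used in precisely the right place; there is no conceptual difficulty, since in this regime we are explicitly \emph{permitted} to let $F_\alpha \circ \sigma$ increase by up to $\varepsilon_2/2$ in a single step.
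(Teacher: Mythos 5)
Your proposal is correct and follows essentially the same route as the paper: observe $\delta\ge 0$, recycle the uniform bounds $\mathcal{E}\le \lambda\varepsilon_2/(16(r+1))$ and $\beta\le \eta\lambda\varepsilon_2/(8(r+1))$ already used in Lemma~\ref{lemma:g epsilon}, and close with $\eta\le 2(r+1)/\lambda$. The paper's version simply states these facts without re-deriving them; you spell out the same chain of inequalities in more detail.
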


\noindent\textit{Proof.} Note that $\delta\ge 0$, $\mathcal{E}\le\frac{\lambda\varepsilon_2}{16(r+1)}$ and $\beta\le \eta\frac{\lambda\varepsilon_2}{8(r+1)}$. Since $\eta\le \frac{2(r+1)}{\lambda}$, we have
\begin{equation}
    -2\eta\delta+2\eta\mathcal{E}+\beta\le \eta\left(\frac{\lambda\varepsilon_2}{8(r+1)}+\frac{\lambda\varepsilon_2}{8(r+1)}\right)\le \frac{\varepsilon_2}{2}.
\end{equation}\qed
\section{Proof of Theorem \ref{thm:travel}} \label{sec:Proof_of_thm_5.2}
In Theorem \ref{thm:travel}, $T=T_0+T_1$. The following statements explain the change of $\theta_t$ during first $T_0$ iterations and last $T_1$ iterations respectively.
\begin{theorem}
    For any initialization $\theta_0$, denote $C_0:=\max_{1\le l\le L} \|W_l\|_F^2$, if $\eta\le\min\left\{\frac{C_1}{4(2(C_1+C_0^L)C_0^{L-1}+\lambda^2 C_0)},\frac{\lambda\varepsilon_1}{4(C_1+C^L)C^{L-1}+2\lambda^2 C}\right\}$ and $C\ge\frac{C_1}{\lambda},$ then for any time $T\ge \frac{\log(2C_0/\varepsilon_1)}{\eta\lambda}$ we have
    $$\theta_T\in B_{C,\varepsilon_1}$$
\end{theorem}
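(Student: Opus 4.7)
The conclusion $\theta_T \in B_{C,\varepsilon_1}$ means that at time $T$, both $\|W_l(T)\|_F^2 \le C$ and $\|W_l(T)W_l(T)^\top - W_{l+1}(T)^\top W_{l+1}(T)\|_2 \le \varepsilon_1$ hold for every layer $l$. The plan is to establish each by iterating the one-step estimates from the proofs of Propositions \ref{prop:bound} and \ref{prop:balance}, this time extracting the per-step \emph{contraction rate} rather than mere invariance, and then to verify that both the geometric initial term and the steady-state noise tail fall below the target thresholds within the allotted time $T \ge T_0$.

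For the norm, I would revisit the estimate inside the proof of Proposition \ref{prop:bound} and keep its sharper scalar form
$$\|W_l(t+1)\|_F^2 \le (1-2\eta\lambda)\|W_l(t)\|_F^2 + \eta C_1,$$
valid under the first $\eta$-bound as long as $\|W_l(t)\|_F^2 \le \max(C_0, C_1/(2\lambda))$, so that the second-order corrections remain absorbed in the $\eta C_1$ term. Unfolding yields $\|W_l(t)\|_F^2 \le (1-2\eta\lambda)^t C_0 + C_1/(2\lambda)$, which preserves the invariant inductively, and since $C \ge C_1/\lambda$ implies $C_1/(2\lambda) \le C/2$, the norm drops below $C$ once $(1-2\eta\lambda)^t C_0 \le C/2$; this is implied by $t \ge T_0 = \log(2C_0/\varepsilon_1)/(\eta\lambda)$ in the regime $\varepsilon_1 \le C$. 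For the balancedness error $\Delta_l(t) := W_l(t)W_l(t)^\top - W_{l+1}(t)^\top W_{l+1}(t)$, the algebraic identity from the proof of Proposition \ref{prop:balance} gives
$$\|\Delta_l(t+1)\|_2 \le (1-\eta\lambda)^2 \|\Delta_l(t)\|_2 + 4\eta^2(C_1+C^L)C^{L-1},$$
which unfolds to
$$\|\Delta_l(t)\|_2 \le (1-\eta\lambda)^{2t}\|\Delta_l(0)\|_2 + \frac{4\eta^2(C_1+C^L)C^{L-1}}{1-(1-\eta\lambda)^2}.$$
The second $\eta$-bound is calibrated exactly so the additive tail is below $\varepsilon_1/2$, while $\|\Delta_l(0)\|_2 \le 2C_0$ together with $T_0 \ge \log(2C_0/\varepsilon_1)/(\eta\lambda)$ forces the geometric piece below $\varepsilon_1/2$ at time $T \ge T_0$.

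The main obstacle is the transient phase: the clean balancedness recursion above implicitly uses $\|W_l(t)\|_F^2 \le C$, but during the initial window $t < \log(2C_0/C)/(2\eta\lambda)$ the norm is only controlled by $\max(C_0, C_1/(2\lambda))$, which may exceed $C$ when $C_0 > C$. I would handle this by splitting the time axis at the first instant $t^\star$ when the norm falls below $C$: on $[0, t^\star]$ use the analogous balancedness recursion with the looser additive constant obtained by substituting $\max(C_0, C)$ for $C$, and on $[t^\star, T]$ use the clean form. The contraction factor $(1-\eta\lambda)^2$ is norm-independent, so exponential decay compounds cleanly across the split, and the stricter first $\eta$-bound ensures that the larger transient additive term contributes only to a geometrically decaying tail well before time $T$. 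Combining both parts, $\|W_l(T)\|_F^2 \le C$ and $\|\Delta_l(T)\|_2 \le \varepsilon_1$ hold simultaneously, placing $\theta_T$ in $B_{C,\varepsilon_1}$.
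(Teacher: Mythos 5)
Your proposal is correct and follows essentially the same two-phase structure as the paper's proof: first contract the layer norms to the target level $C$ using the one-step inequality $\|W_l(t+1)\|_F^2\le(1-2\eta\lambda)\|W_l(t)\|_F^2+\eta C_1$, and then contract the balancedness residual via $\|\Delta_l(t+1)\|_2\le(1-\eta\lambda)^2\|\Delta_l(t)\|_2+4\eta^2(C_1+C^L)C^{L-1}$, with the second $\eta$-bound calibrated so the steady-state tail sits at $\varepsilon_1/2$. The one genuine difference is presentational: you unfold both recursions from $t=0$ and then have to patch the transient window where $\|W_l\|_F^2$ may exceed $C$ (so the balancedness additive term is governed by $C_0$, not $C$), whereas the paper simply runs the norm argument for $T_0\approx\log(C_0/C)/(\eta\lambda)$ steps, then \emph{resets} the balancedness estimate at $t^\star$ using the clean bound $\|\Delta_l(t^\star)\|_2\le\|W_l(t^\star)\|_F^2+\|W_{l+1}(t^\star)\|_F^2\le 2C$, and runs the second recursion for $T_1\approx\log(2C/\varepsilon_1)/(\eta\lambda)$ steps; the two time budgets add to the claimed $\log(2C_0/\varepsilon_1)/(\eta\lambda)$. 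The paper's reset is worth adopting — it removes the need to track the larger transient additive term at all and gives the stated constant directly, whereas your split-at-$t^\star$ argument, while salvageable, accumulates an extra $O(C_1/\lambda)$ term from the transient that must then itself be contracted away, costing a constant in the logarithm. One small arithmetic point: the $\eta$-bound gives the clean tail $\varepsilon_1/2$ only after absorbing the $\eta^2\lambda^2\|\Delta_l\|$ piece to rewrite the recursion as $\|\Delta_l(t+1)\|_2\le(1-2\eta\lambda)\|\Delta_l(t)\|_2+\eta\lambda\varepsilon_1$; from the raw $(1-\eta\lambda)^2$ form as you wrote it the tail is $<\varepsilon_1$ but not quite $\le\varepsilon_1/2$ without that step.
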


\textit{Proof.} Similar to the proof of \ref{prop:bound}, if $\eta\le \frac{C_1}{4(2(C_1+C_0^L)C_0^{L-1}+\lambda^2 C_0)}$ and $C\ge\frac{C_1}{\lambda}$, we have
$$\|W_l(t+1)\|_F^2\le (1-2\eta\lambda)\|W_l(t)\|_F^2+\eta C_1.$$
If $\|W_l(t)\|_F^2\ge C$, then $\|W_l(t+1)\|_F^2\le (1-\eta\lambda) \|W_L(t)\|_F^2$. Otherwise, $\|W_l(t+1)\|_F^2\le C$.
Thus, there exists $t\le T_0$ such that $\|W_l\|_F^2\le C$ for any $l$ when $T_0\ge\frac{\log(C_0/C)}{\eta\lambda}\ge \log\left(\frac{C}{C_0}\right)/\log\left(1-\eta\lambda\right)$.

After all weights satisfy $\|W_l\|_F^2\le C$, $\|W_lW_l^\top- W_{l+1}^\top W_{l+1}\|_2\le 2C$. Similar to the proof of \ref{prop:balance}, we have
\begin{equation}
    \begin{split}
        \|W_l(t+1)&W_l(t+1)^\top-W_{l+1}(t+1)^\top W_{l+1}(t+1)\|_2\\
        &\le (1-\eta\lambda)^2\|W_lW_l^\top- W_{l+1}^\top W_{l+1}\|_2+4\eta^2(C_1+C^L)C^{L-1}\\
        &\le (1-2\eta\lambda)\|W_lW_l^\top- W_{l+1}^\top W_{l+1}\|_2+2\eta^2\lambda^2 C+4\eta^2(C_1+C^L)C^{L-1}
    \end{split}
\end{equation}
When $\eta\le \frac{\lambda\varepsilon_1}{4(C_1+C^L)C^{L-1}+2\lambda^2 C}$, we have $\|W_l(t+1)W_l(t+1)^\top-W_{l+1}(t+1)^\top W_{l+1}(t+1)\|_2\le (1-\eta\lambda)\max\{\|W_lW_l^\top- W_{l+1}^\top W_{l+1}\|_2,\varepsilon_1\}$ for any $l$. Then for $T_1\ge \frac{\log(2C/\varepsilon_1)}{\eta\lambda}\ge\log\left(\frac{\varepsilon_1}{2C}\right)/\log\left(1-\eta\lambda\right)$, $\theta_{T_0+T_1}\in B_{C,\varepsilon_1}.$

\begin{theorem}
    For any parameter $\theta_t\in B_{\varepsilon_1,C}$ satisfying $\varepsilon_1\le \frac{\alpha\varepsilon_2}{4(n-r)(L-1)}$, then for any $T\ge \frac{\log\left((4(n-r)C)/(\alpha\varepsilon_2)\right)}{2\eta\lambda}$ we have
    $$\mathbb{P}(\theta_{t+T}\in B_{r,\varepsilon_1,\varepsilon_2,C}|\theta_t\in B_{\varepsilon_1,C})\ge \left(\frac{r}{\min\{d_{in},d_{out}\}}\right)^T.$$
\end{theorem}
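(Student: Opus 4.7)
The plan is to execute the sketch that accompanies Theorem~5, which splits into a probabilistic reduction and a deterministic decay-plus-propagation argument. The probabilistic part restricts the SGD gradient to a rank-$r$ column direction; the deterministic part shows that the remaining $n-r$ directions of $W_1$ decay exponentially, and that approximate balancedness propagates the resulting rank bound across all layers.

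Assume without loss of generality that $d_{in}\le d_{out}$. Fix a subset $J\subset\{1,\dots,d_{in}\}$ of size $r$ and let $A_T$ be the event that every sampled column index $j_s$, for $s\in\{t,\dots,t+T-1\}$, lies in $J$. The column index is drawn uniformly from at most $d_{in}$ options at each step, independently of the past, so $\mathbb{P}(A_T)\ge (r/d_{in})^T=(r/\min\{d_{in},d_{out}\})^T$, matching the advertised lower bound. Conditioning on $A_T$, the gradient of the fit term at step $s$,
\[
\nabla_{W_1}(A^*_{i_sj_s}-A_{\theta_s,i_sj_s})^2 = -2(A^*_{i_sj_s}-A_{\theta_s,i_sj_s})\,W_2^\top\cdots W_L^\top e_{i_s}e_{j_s}^\top,
\]
is supported in the columns of $W_1$ indexed by $J$. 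Hence each column $W_1[:,k]$ with $k\notin J$ evolves by pure multiplicative weight decay, and the per-column bound $\|W_1[:,k]\|^2\le C$ inherited from $B_{\varepsilon_1,C}$ yields
\[
\sum_{k\notin J}\|W_1(t+T)[:,k]\|^2 \le (n-r)C(1-\eta\lambda)^{2T} \le \tfrac{\alpha\varepsilon_2}{4},
\]
by the hypothesis $T\ge\log(4(n-r)C/(\alpha\varepsilon_2))/(2\eta\lambda)$.

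Next I would convert this column-smallness into an approximate-rank statement. Let $P_J$ be the coordinate projection onto columns in $J$; then $W_1P_J$ has rank at most $r$ and, using $\sigma_i(W_1P_J)\le\sigma_i(W_1)$ together with $\|W_1\|_F^2=\|W_1P_J\|_F^2+\|W_1P_{J^c}\|_F^2$, one obtains $\sum_{i>r}\sigma_i(W_1(t+T))^2\le\alpha\varepsilon_2/4$. Since $f_\alpha(x)\le 1$ and $f_\alpha(x)\le(2/\alpha)x$ on $[0,\alpha]$,
\[
F_\alpha\circ\sigma\bigl(W_1(t+T)^\top W_1(t+T)\bigr) \le r + \tfrac{2}{\alpha}\cdot\tfrac{\alpha\varepsilon_2}{4} = r + \tfrac{\varepsilon_2}{2}.
\]
Theorem~\ref{thm:balance}, applied at each of the $T$ steps, preserves $\theta\in B_{\varepsilon_1,C}$ throughout, so Weyl's inequality applied $L-1$ times along the balancedness chain gives $|\sigma_i(W_\ell)^2-\sigma_i(W_1)^2|\le(L-1)\varepsilon_1$ for every $i$ and $\ell$. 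The top $r$ indices contribute at most $r$ to $F_\alpha\circ\sigma(W_\ell^\top W_\ell)$ regardless; for the remaining $n-r$ indices, $\sigma_i(W_1)^2\le\alpha\varepsilon_2/4$, and the $(2/\alpha)$-Lipschitz property of $f_\alpha$ inflates their total contribution by at most $(2/\alpha)(n-r)(L-1)\varepsilon_1$. Bounding this by $\varepsilon_2/2$ is exactly the content of the hypothesis $\varepsilon_1\le\alpha\varepsilon_2/(4(n-r)(L-1))$ and yields $F_\alpha\circ\sigma(W_\ell^\top W_\ell)\le r+\varepsilon_2$ for every $\ell$, so $\theta_{t+T}\in B_{r,\varepsilon_1,\varepsilon_2,C}$.

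The main obstacle is the three-way balance between $\alpha$, $\varepsilon_1$, and $\varepsilon_2$: the decay time $T$ must drive the orthogonal columns of $W_1$ below $\sqrt{\alpha\varepsilon_2/4}$, the balancedness error amplified by $2/\alpha$ across $L-1$ layer transitions must still fit inside $\varepsilon_2/2$, and all of this has to happen while the trajectory stays in $B_{\varepsilon_1,C}$. The theorem's quantitative hypotheses on $T$ and $\varepsilon_1$ are precisely the ones that close this loop. A side remark is that the argument is entirely path-wise on the event $A_T$ and does not invoke any continuous or SDE approximation, in line with the paper's thesis that the low-rank bias of SGD is not captured by SDE limits.
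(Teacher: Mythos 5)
Your proof follows the same two-stage strategy as the paper's — condition on a favorable sampling event, then argue deterministically that off-$J$ columns of $W_1$ decay and that approximate balancedness transfers the rank bound to all layers — and the deterministic half is correct, with only minor bookkeeping differences from the paper (you bound $\sum_{i>r}\sigma_i(W_1)^2$ in aggregate and use the $2/\alpha$-Lipschitz property of $f_\alpha$, whereas the paper bounds each $\sigma_i(W_1)$ individually and then sums; both close the loop with the same hypotheses on $T$ and $\varepsilon_1$).

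However, the probabilistic reduction has a genuine gap. You write ``Fix a subset $J\subset\{1,\dots,d_{in}\}$ of size $r$'' and then assert that ``the column index is drawn uniformly from at most $d_{in}$ options at each step,'' concluding $\mathbb{P}(A_T)\ge(r/d_{in})^T$. But the SGD scheme in the paper samples $(i_t,j_t)$ uniformly from the observed-entry set $I$, not uniformly over columns: the marginal probability of hitting column $j$ is $|\{i:(i,j)\in I\}|/|I|$, which depends on how many entries are observed in column $j$. For an arbitrary $J$ the per-step probability of landing in $J$ can be anything between $0$ and $1$ — in particular, if some column of $J$ has no observed entries the bound $(r/d_{in})^T$ simply fails. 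The paper's proof avoids this by \emph{choosing} $J$ to be the $r$ columns with the \emph{most} observed entries, so that the set of observed entries supported on $J$ has cardinality at least $\tfrac{r}{n}|I|$ by an averaging argument, giving a per-step probability $\geq r/n$. You need to make that same choice of $J$; with it, the rest of your argument (pure multiplicative decay of off-$J$ columns, Eckart--Young for the tail singular values, Weyl along the balancedness chain, $f_\alpha$ Lipschitz accounting) goes through exactly as written.
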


\noindent\textit{Proof.} For the true matrix $A^*$, the number of columns is $d_{in}$ and the number of rows is $d_{out}$. Let $n=\min\{d_{in},d_{out}\}$. Without loss of generality we can assume that $n=d_{in}$, i.e. there are $n$ columns. We consider the $r$ columns with most observed entries and denote the the set of these entries by $J$. Then $|J|\ge \frac{r}{n}|I|$ and for each step $s$, the probability of sampling from $J$ is $\mathbb{P}((i_s,j_s)\in J)\ge \frac{r}{n}$. Then the event that all steps $s$ from $t$ to $t+T-1$, random entries $(i_s,j_s)$ are all sampled from $J$ has probability at least $\left(\frac{r}{n}\right)^T$. Under this event, we consider the weight of first layer $W_1$. We have
$$W_1(s+1)=(1-\eta\lambda)W_1(s)-\eta W_2(s)^\top \cdots W_L(s)^\top G_{\theta_s,i_sj_s}.$$
Then 
$$W_1(T)=(1-\eta\lambda)^T W_1(t)+\sum_{s=1}^T(1-\eta\lambda)^{T-s}W_2(t+s)^\top \cdots W_L(t+s)^\top G_{\theta_{t+s},i_{t+s}j_{t+s}}.$$
Since $(i_{t+s},j_{t+s})\in J$, the non-zero entry of $G_{\theta_{t+s},i_{t+s}j_{t+s}}$ is located on the $r$ columns supporting $J$, for any $s=1,\dots, T$. Thus, $W_2(t+s)^\top \cdots W_L(t+s)^\top G_{\theta_{t+s},i_{t+s}j_{t+s}}$ only has non-zero entries on those $r$ columns. Then the $r+1$'s singular value of $W_1(t+T)$ satisfies $\sigma_i(W_1(t+T))\le (1-\eta\lambda)^T \sqrt{C}$ for any $i>r$.

For $l>1$, we have $\|W_{l-1}W_{l-1}^\top-W_l^\top W_l\|_2\le \varepsilon_1$. Then $|\sigma_i(W_l^\top W_l)- \sigma_i(W_{l-1}W_{l-1}^\top)|\le\varepsilon_1$ for any $i$, i.e. $|\sigma_i(W_l)^2-\sigma_i(W_{l-1})^2|$ Then for any $l$, we have $\sigma_i(W_l(T))^2\le (1-\eta\lambda)^{2T} C+(l-1)\varepsilon_1.$ 
When $\varepsilon_1\le \frac{\alpha\varepsilon_2}{4(n-r)(L-1)}$ and $T\ge \frac{\log\left((4(n-r)C)/(\alpha\varepsilon_2)\right)}{2\eta\lambda}\ge \log\left(\frac{\alpha\varepsilon_2}{4(n-r)C}\right)/2\log(1-\eta\lambda)$, we have $\sigma_i(W_l(T))^2\le \frac{\alpha\varepsilon_2}{2(n-r)}$ for any $i>r$. Then for any $i>r$, $$f_\alpha(\sigma_i(W_l(T)^2))\le \frac{1}{\alpha^2}\frac{\alpha\varepsilon_2}{2(n-r)}\left(2\alpha-\frac{\alpha\varepsilon_2}{2(n-r)}\right)\le \frac{2}{\alpha}\frac{\alpha\varepsilon_2}{2(n-r)}\le \frac{\varepsilon_2}{n-r}$$ Thus, 
$$F_\alpha\circ \sigma(W_l(T)^\top W_l(T))\le r+\sum_{i=r+1}^n f_\alpha(\sigma_i(W_l(T)^2))\le r+\varepsilon_2.$$

If $n=d_{out}$, the proof is the same by selecting $r$ rows with most observed entries.
\section{Low Rank Property of $A_\theta$} \label{sec:low_rank_outputs}
In Proposition 4, we show that for any minimizer $\hat{\theta}$ in $B_{r,\varepsilon_1,\varepsilon_2,C}$, it is approximate rank-$r$. In fact, any general parameter $\theta\in B_{r,\varepsilon_1,\varepsilon_2,C}$ is approximate rank-$r$ or less:
\begin{proposition}
    \label{prop:theta}
    For any parameter $\theta\in B_{r,\varepsilon_1,\varepsilon_2,C}$, we have
    $$\sum_{i=1}^{\mathrm{Rank}\ A_\theta}f_\alpha(s_i(A_\theta^\top A_\theta))\le r+\varepsilon_2+\frac{L^2}{\alpha}C^{L-1}\varepsilon_1.$$
    Moreover, if $\varepsilon_1\le \frac{\alpha\varepsilon_2}{nL^2C^{L-1}}$, we have
    $$\sum_{i=1}^{\mathrm{Rank}\ A_\theta}f_\alpha(s_i(A_\theta^\top A_\theta))\le r+2\varepsilon_2.$$
\end{proposition}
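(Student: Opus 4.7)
The plan is to reduce the approximate-rank bound on $A_\theta^\top A_\theta$ to the hypothesis $\theta \in B_{r,\varepsilon_2}$ via a perturbation argument. Under exact balance we would have $A_\theta^\top A_\theta = (W_1^\top W_1)^L$ identically; under $\varepsilon_1$-approximate balance I would show that the two sides are close in operator norm, and then transfer that proximity to the $f_\alpha$-sum.

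The main step is establishing the operator-norm bound $\|A_\theta^\top A_\theta - (W_1^\top W_1)^L\|_2 \leq \tfrac{L^2}{2} C^{L-1} \varepsilon_1$. To do this I would introduce the interpolating family
\[
T_\ell := W_1^\top \cdots W_{\ell-1}^\top (W_\ell^\top W_\ell)^{L-\ell+1} W_{\ell-1} \cdots W_1,
\]
so that $T_L = A_\theta^\top A_\theta$ and $T_1 = (W_1^\top W_1)^L$. Passing from $T_\ell$ to $T_{\ell-1}$ amounts to substituting $W_\ell^\top W_\ell = W_{\ell-1} W_{\ell-1}^\top + \Delta_{\ell-1}$ with $\|\Delta_{\ell-1}\|_2 \leq \varepsilon_1$ and then expanding the power; this is exactly the telescoping expansion $(W_\ell^\top W_\ell)^{L-\ell+1} = (W_{\ell-1} W_{\ell-1}^\top)^{L-\ell+1} + E_\ell$ already used in the proof of Theorem \ref{thm:rank k}, with $\|E_\ell\|_2 \leq (L-\ell+1) C^{L-\ell} \varepsilon_1$. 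Sandwiching by $W_1^\top \cdots W_{\ell-1}^\top$ and $W_{\ell-1}\cdots W_1$ contributes an additional factor of at most $C^{\ell-1}$, giving $\|T_\ell - T_{\ell-1}\|_2 \leq (L-\ell+1) C^{L-1} \varepsilon_1$; summing over $\ell = 2,\dots,L$ yields the claim.

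The remaining steps are short. Weyl's inequality converts the operator-norm bound into a uniform singular-value bound $|s_i(A_\theta^\top A_\theta) - s_i((W_1^\top W_1)^L)| \leq \tfrac{L^2}{2} C^{L-1} \varepsilon_1$, and the $(2/\alpha)$-Lipschitz property of $f_\alpha$ (since $f_\alpha'(x) \in [0, 2/\alpha]$) then gives $|f_\alpha(s_i(A_\theta^\top A_\theta)) - f_\alpha(s_i((W_1^\top W_1)^L))| \leq \tfrac{L^2 C^{L-1} \varepsilon_1}{\alpha}$ for each $i$, producing the stated error term after summing over the nonzero singular values (up to a factor of $n$ which is absorbed into the choice of $\varepsilon_1$ later).

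Finally, I would use the elementary inequality $f_\alpha(x^L) \leq f_\alpha(x)$ valid for all $x \geq 0$: on $[0,1]$ we have $x^L \leq x$ and $f_\alpha$ is monotone increasing, while on $[1,\infty)$ both sides equal $1$ (since $\alpha \leq 1$). Applied to each $s_i(W_1^\top W_1)\geq 0$ and noting $s_i((W_1^\top W_1)^L) = s_i(W_1^\top W_1)^L$, this gives $\sum_i f_\alpha(s_i((W_1^\top W_1)^L)) \leq \sum_i f_\alpha(s_i(W_1^\top W_1)) \leq r + \varepsilon_2$ by the hypothesis $\theta \in B_{r,\varepsilon_2}$ taken at layer $\ell=1$. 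Combining yields the first bound; the second then follows immediately by plugging in the assumed upper bound $\varepsilon_1 \leq \alpha\varepsilon_2/(nL^2 C^{L-1})$, which makes the error term at most $\varepsilon_2$. The main obstacle is getting the telescoping bookkeeping in the first step right, but this closely mirrors the argument already carried out in the proof of Theorem \ref{thm:rank k}.
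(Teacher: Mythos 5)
Your proof is correct, and it is a genuinely different argument from the one in the paper. The paper works with $A_\theta A_\theta^\top$, telescopes down to $(W_L W_L^\top)^L$, and then performs a second-order Taylor expansion of the spectral function $F_\alpha\circ\sigma$ around $(W_L W_L^\top)^L$, invoking the Lewis--Sendov first- and second-derivative formulas (Lemmas~\ref{lemma:1-derivative}, \ref{lemma:2-derivative}) and the sign structure of $f_\alpha''$ and of $\mathcal{A}$ to discard the second-order remainder. You instead telescope to $(W_1^\top W_1)^L$, convert the operator-norm perturbation into an eigenvalue perturbation via Weyl's inequality, and then apply the $(2/\alpha)$-Lipschitz property of $f_\alpha$ entrywise. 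This sidesteps the differential machinery for spectral functions entirely and only uses monotonicity plus Lipschitz continuity of $f_\alpha$ (it would in fact also work for a non-differentiable $f_\alpha$), making the argument more elementary. The two routes produce the same constant: you both arrive at an error term $\tfrac{nL^2}{\alpha}C^{L-1}\varepsilon_1$, which is what the paper's own proof also yields; the missing factor of $n$ in the first displayed inequality of the proposition statement appears to be a typo, and the second inequality under $\varepsilon_1\le \alpha\varepsilon_2/(nL^2C^{L-1})$ is the bound that actually matters.

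One small point worth flagging: your inequality $f_\alpha(x^L)\le f_\alpha(x)$ (and the paper's parallel $f_\alpha(s^{2L})\le f_\alpha(s^2)$) implicitly requires $\alpha\le 1$, so that $f_\alpha(x)=1$ whenever $x>1$; if $\alpha>1$ one could have $1<x<\alpha$ with $x^L>x$ and hence $f_\alpha(x^L)>f_\alpha(x)$. This is consistent with the regime in which the theorem is used (the hypothesis on $\alpha$ forces it small), but it is an unstated assumption shared by both your proof and the paper's.
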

\textit{Proof.} Since $(W_kW_k^\top)^k=(W_{k+1}^\top W_{k+1})^k+E_k$ and $E_k=\sum_{i=1}^k (W_{k+1}^\top W_{k+1})^{i-1} (W_kW_k^\top-W_{k+1}^\top W_{k+1})(W_kW_k^\top)^{k-i}$, we have
\begin{equation*}
    \begin{split}
        A_\theta A_\theta^\top&=  W_L \cdots W_1W_1^\top\cdots W_L^\top=(W_L W_L^\top)^L +\sum_{k=1}^{L-1} \mathcal{E}_k,
    \end{split}
\end{equation*}
where $\mathcal{E}_k=W_L\cdots W_{k+1} E_k W_{k+1}^\top \cdots W_L^\top$.
Then by Taylor's expansion, we have
\begin{equation}
    \begin{split}
    \label{eq:1}
        \sum_{i=1}^{r_L} &f_\alpha(s_i(A_\theta^\top A_\theta))=F_\alpha\circ\sigma(A_\theta A_\theta^\top)\\
        &=F_\alpha\circ\sigma(((W_LW_L^\top)^L+\mathcal{E}))\\
        &=F_\alpha\circ \sigma((W_LW_L^\top)^L)+\left\langle\nabla (F_\alpha\circ\sigma)((W_LW_L^\top)^L), \mathcal{E}\right\rangle\\
        &\quad\quad+\nabla^2 (F_\alpha\circ\sigma)((W_LW_L^\top)^L+\gamma\mathcal{E})[\mathcal{E},\mathcal{E}],
    \end{split}
\end{equation}
where $\gamma\in(0,1)$ and $\mathcal{E}=\sum_{k=1}^{L-1}\mathcal{E}_k$. Note that the $\mathrm{Rank}\ A_\theta\le r_L$, so we can let $s_i(A_\theta^\top A_\theta)=0$ for $i>\mathrm{Rank}\ A_\theta$.

\paragraph{First term in \eqref{eq:1}.} $F_\alpha\circ \sigma((W_LW_L^\top)^L)=\sum_{i=1}^{r_L} f_\alpha(s_i^{2L})$, where $\{s_1,\dots,s_{r_L}\}$ are the singular values of $W_l$. Then since $f_\alpha$ is non-decreasing, $f_\alpha(s_i^{2L})\le f_\alpha(s_i^2)$ for $s_i\le 1$ and $f_\alpha(s_i^{2L})=f_\alpha(s_i^2)=1$  for $s_i>1>\alpha$. Thus, $F_\alpha\circ\sigma((W_LW_L^\top)^L)\le F_\alpha\circ \sigma(W_L^\top W_L)\le r+\varepsilon_2.$

\paragraph{Second term in \eqref{eq:1}.} By Lemma \ref{lemma:1-derivative}, $\nabla (F_\alpha\circ\sigma)((W_LW_L^\top)^L)=\Tilde{U}_L \mathrm{diag}\{f_\alpha'(s_i^{2L}),\dots ,f_\alpha'(s_{r_l}^{2L})\}.$ Then by Fact \ref{fact:prod} and \ref{fact:eigen}, we have
\begin{equation*}
    \begin{split}
        \left\langle\nabla (F_\alpha\circ\sigma)((W_LW_L^\top)^L), \mathcal{E}\right\rangle&\le \mathcal{S}\left(\nabla (F_\alpha\circ\sigma)((W_LW_L^\top)^L)\right) \left\|\mathcal{E}\right\|_2\\
        &\le \|\mathcal{E}\|_2 \sum_{i=1}^{r_L} f_\alpha'(s_i^{2L})\\
        &\le \frac{2r_L}{\alpha}\|\mathcal{E}\|_2 \le \frac{2n}{\alpha}\|\mathcal{E}\|_2
    \end{split}
\end{equation*}
Since $\|E_k\|_F\le kC^{k-1} \varepsilon_1$, we have $\|\mathcal{E}_k\|_F\le kC^{L-1}\varepsilon_1$. Then $$\|\mathcal{E}\|_2\le \|\mathcal{E}\|_F\le \sum_{k=1}^{L-1}\|\mathcal{E}_k\|_F\le C^{L-1}\varepsilon_1\sum_{k=1}^{L-1}k\le \frac{L^2}{2} C^{L-1}\varepsilon_1.$$
Thus, $\left\langle\nabla (F_\alpha\circ\sigma)((W_LW_L^\top)^L), \mathcal{E}\right\rangle\le \frac{nL^2}{\alpha}C^{L-1}\varepsilon_1.$

\paragraph{Third term in \eqref{eq:1}.} By Lemma \ref{lemma:2-derivative}, 
\begin{equation*}
    \begin{split}
        \nabla^2 (F_\alpha\circ&\sigma)((W_LW_L^\top)^L+\gamma\mathcal{E})[\mathcal{E},\mathcal{E}]\\  &=\nabla^2F_\alpha\left(\sigma((W_LW_L^\top)^L+\gamma\mathcal{E})\right)[\mathrm{diag}\Tilde{\mathcal{E}},\mathrm{diag}\Tilde{\mathcal{E}}]+\langle\mathcal{A}(\sigma((W_LW_L^\top)^L+\gamma\mathcal{E})),\Tilde{\mathcal{E}}\circ\Tilde{\mathcal{E}}\rangle,
    \end{split}
\end{equation*}
where $\Tilde{\mathcal{E}}=\Tilde{U}_L \mathcal{E} \Tilde{U}_L^\top.$ Since the entries of $\nabla^2F_\alpha\left(\sigma((W_LW_L^\top)^L+\gamma\mathcal{E})\right)$ and $\mathcal{A}(\sigma((W_LW_L^\top)^L+\gamma\mathcal{E}))$ are all non-positive (follows the proof in \ref{lemma:eta2}), we have 
$$\nabla^2 (F_\alpha\circ\sigma)((W_LW_L^\top)^L+\gamma\mathcal{E})[\mathcal{E},\mathcal{E}]\le 0.$$

Therefore, we can add the three terms up and have $$\sum_{i=1}^{\mathrm{Rank}\ A_\theta}f_\alpha(s_i(A_\theta^\top A_\theta))\le r+\varepsilon_2+\frac{nL^2}{\alpha}C^{L-1}\varepsilon_1.$$



\end{document}